\newcommand{\LP}{\hyperlink{eq:opt}{\textnormal{LP}}}
\DeclareMathOperator{\err}{err}
\newcommand{\DPGap}{\Delta_\mathrm{DP}}
\newcommand{\Id}{\mathrm{Id}}
\newcommand{\dN}{d_\mathrm{N}}
\newcommand{\dP}{d_\mathrm{P}}
\newcommand{\dVC}{d_\mathrm{VC}}
\newcommand{\Nrv}{\mathrm{Nrv}}
\newcommand\indep{\protect\mathpalette{\protect\independenT}{\perp}}
\def\independenT#1#2{\mathrel{\rlap{$#1#2$}\mkern2mu{#1#2}}}
\newcommand*\centermathcell[1]{\omit\hfil$\displaystyle#1$\hfil\ignorespaces}
\newcolumntype{C}[1]{>{\centering\arraybackslash}p{#1}}
 \newcommand{\phantomfill}[2]{{\mathpalette\mask@{{#1}{#2}}}}
 \newcommand{\mask@}[2]{\mask@@{#1}#2}
 \newcommand{\mask@@}[3]{%
   \settowidth{\dimen@}{$\m@th#1#2$}%
   \makebox[\dimen@]{$\m@th#1#3$}%
 }
\title{Fair and Optimal Classification via Post-Processing}
\author{%
Ruicheng Xian\thanksAAffil{University of Illinois Urbana-Champaign. \texttt{\{\href{mailto:rxian2@illinois.edu}{rxian2},\href{mailto:langyin2@illinois.edu}{langyin2},\href{mailto:hanzhao@illinois.edu}{hanzhao}\}@illinois.edu}.}\and%
Lang Yin\footnotemarkAAffil[1]\and%
Han Zhao\footnotemarkAAffil[1]%
}
\date{}
\begin{document}
\maketitle

\begin{abstract}
To mitigate the bias exhibited by machine learning models, fairness criteria can be integrated into the training process to ensure fair treatment across all demographics, but it often comes at the expense of model performance.  Understanding such tradeoffs, therefore, underlies the design of fair algorithms.  To this end, this paper provides a complete characterization of the inherent tradeoff of \textit{demographic parity} on classification problems, under the most general multi-group, multi-class, and noisy setting.  Specifically, we show that the minimum error rate achievable by randomized and attribute-aware fair classifiers is given by the optimal value of a Wasserstein-barycenter problem.  On the practical side, our findings lead to a simple post-processing algorithm that derives fair classifiers from score functions, which yields the optimal fair classifier when the score is Bayes optimal.  We provide suboptimality analysis and sample complexity for our algorithm, and demonstrate its effectiveness on benchmark datasets.
\end{abstract}

\section{Introduction}
\setcounter{footnote}{1}

Machine learning models trained on biased data have been found to perpetuate and even amplify the bias against historically underrepresented and disadvantaged demographic groups at inference time~\citep{barocas2016BigDataDisparate,bolukbasi2016ManComputerProgrammer}. As a result, concerns of fairness have gained significant attention, especially as applications of these models expand to high-stakes domains such as criminal justice, healthcare, and finance~\citep{berk2021FairnessCriminalJustice}.  To mitigate the bias, a variety of fairness criteria and algorithms have been proposed~\citep{barocas2019FairnessMachineLearning,caton2020FairnessMachineLearning}, which impose mathematical or statistical constraints on the model to ensure equitable treatment under the respective fairness notions.  But these algorithms typically incur a cost to model performance as they improve model fairness~\citep{calders2009BuildingClassifiersIndependency,corbett-davies2017AlgorithmicDecisionMaking}.

It is not immediately clear whether the degradation in performance is attributed to artifacts of the algorithm, or possibly to the \textit{inherent tradeoff}---predictive power that must be given up for satisfying the criteria~\citep{hardt2016EqualityOpportunitySupervised,zhao2022InherentTradeoffsLearning}.  Hence the design of fair algorithms necessitates the understanding of this tradeoff, which would also provide insight to the implications of fairness in machine learning; yet, it remains an open problem for most fairness criteria and learning settings.

For the group fairness criterion of \textit{demographic parity}~(DP; \cref{def:dp}), a.k.a.~statistical parity, which requires statistical independence between model output and demographic group membership~\citep{calders2009BuildingClassifiersIndependency}, \citet{legouic2020ProjectionFairnessStatistical} and \citet{chzhen2020FairRegressionWasserstein} concurrently characterized the tradeoff between mean squared error~(MSE) and fairness on regression problems.  On classification problems, the inherent tradeoff in terms of error rate has only been studied under special cases: \citet{denis2022FairnessGuaranteeMulticlass} assumed binary groups, \citet{zeng2022BayesOptimalClassifiersGroup} and \citet{gaucher2022FairLearningWasserstein} assumed binary class labels, and \citet{zhao2022InherentTradeoffsLearning} assumed that the data distribution is \textit{noiseless}, i.e., the Bayes error rate is zero. We will close this gap and complete the characterization of the tradeoff of DP fairness in the most general classification setting.

\paragraph{Contributions.}  

This paper considers learning \textit{randomized} and \textit{attribute-aware} classifiers under (approximate) DP fairness in the general setting of multi-group, multi-class, and potentially \textit{noisy} data distributions.  We show that:

\begin{enumerate}
  \item The minimum classification error rate under DP is given by the optimal value of a (relaxed) Wasserstein-barycenter problem (\cref{sec:tradeoff}).

  \item This characterization reveals that the {optimal fair classifier}---one that satisfies DP while achieving the minimum error---is given by the composition of the Bayes optimal score function (minimum MSE regressor of the one-hot labels) and the optimal transports from the Wasserstein-barycenter problem (\cref{sec:post.proc.opt}).

  \item Based on the findings, we propose a post-processing method that derives fair classifiers from (pre-trained) score functions (\cref{sec:post.proc}).
Our method is instantiated for finite sample estimation in \cref{sec:algs} with sample complexity analysis.\footnote{\label{fn:code}Our code is available at \url{https://github.com/rxian/fair-classification}.}
%To our knowledge, this is the first DP post-processing algorithm for the general classification setting.

\item Experiments on benchmark datasets demonstrate the effectiveness of our algorithm (\cref{sec:exp}), which achieves precise control of the tradeoff provided sufficient training data.
\end{enumerate}

\subsection{Related Work}\label{sec:related}

\begin{table}[t]
\caption{Characterizations of the inherent tradeoff of (strict) DP fairness.}
\label{tab:prior}
\centering
    \scalebox{0.9}{%
        \begin{tabular}{p{5.5cm}rlr}
        \toprule
        Problem Setting & \multicolumn{3}{c}{{Minimum Risk Under DP}}\\
        \midrule
        \multirowcell{4}[0ex][l]{Regression} & 
        \multirowcell{4}[0ex][r]{$\text{excess MSE}=$} & 
        \multirowcell{4}[0ex][l]{\hspace{-0.6em}$\displaystyle\smash{\min_{\phantomfill{q:\supp(q)\subseteq\{e_1,\cdots,e_k\}}{q:\supp(q)\subseteq\RR}}\sum_{a\in\calA}   w_a\,W_2^2(r^*_a,q)}$} & 
        \multirowcell{4}[0ex][l]{\refstepcounter{equation}(\theequation)\label{eq:rel.regression}} \\
        \\
        \\
        \\ \midrule
        \multirowcell{4}[0ex][l]{Classification (Noiseless Setting)} & 
        \multirowcell{4}[0ex][r]{$\text{excess}=\text{min.~error}=$} & 
        \multirowcell{4}[0ex][l]{\hspace{-0.6em}$\displaystyle\smash{\min_{q:\supp(q)\subseteq\{e_1,\cdots,e_k\}}\sum_{a\in\calA} \frac{w_a}2\enVert{p_a-q}_1}$} & 
        \multirowcell{4}[0ex][l]{\refstepcounter{equation}(\theequation)\label{eq:rel.cls.rel}} \\
        \\
        \\
        \\ \midrule
        \multirowcell{4}[0ex][l]{Classification (General Setting)} & 
        \multirowcell{4}[0ex][r]{$\text{minimum error}=$} & 
        \multirowcell{4}[0ex][l]{\hspace{-0.6em}$\displaystyle\smash{\min_{q:\supp(q)\subseteq\{e_1,\cdots,e_k\}}\sum_{a\in\calA}    \frac{w_a}2\,W_1(r_a^*,q)}$} & 
        \multirowcell{4}[0ex][l]{\refstepcounter{equation}(\theequation)\label{eq:rel.cls}}  \\
        \\
        \\
        \\
        \bottomrule
        \end{tabular}
    }
\end{table}

\paragraph{Inherent Tradeoff.}  The concept of barycenter appears in many analyses of the tradeoff of DP fairness.  Intuitively, by treating the barycenter---computed over the distributions of optimal model outputs (without constraints) on each group---as the output distribution that is required to be identical across groups under DP, the sum of distances to the barycenter is naturally related to the minimum fair error.

We review existing characterizations of the tradeoff of DP fairness below and draw connections to our result.  Denote the input by $X$, group membership by $A$, and target variable by $Y$ (for classification, the \textit{one-hot} label).  Let $r^*_a$ be the \textit{distribution} of the conditional mean on group $a$, $\E[Y\mid X, A=a]$, i.e., the minimum MSE estimates of $Y$ given $(X,A=a)$ (for classification, these are \textit{distributions} of class probabilities). Lastly, let $w_a\coloneqq\Pr(A=a)$ denote the proportion of each group. Then, under DP, on
\begin{itemize}
  \item regression problems~\citep{legouic2020ProjectionFairnessStatistical,chzhen2020FairRegressionWasserstein,chzhen2022MinimaxFrameworkQuantifying}, the minimum \textit{excess} risk in terms of MSE is given by the Wasserstein-$2$-barycenter (under the $\ell_2$ metric) over the $r^*_a$'s: \cref{eq:rel.regression};

\item
\textit{noiseless} classification problems~\citep{zhao2022InherentTradeoffsLearning}, the minimum/excess error rate is given by the TV-barycenter over the class priors, $p_a(e_i)\coloneqq\Pr(Y=e_i\mid A=a)$: \cref{eq:rel.cls.rel}, where $\frac12\|\cdot\|_1$ computes the total variation (TV);

\item classification problems in the general setting~(\cref{thm:tradeoffs}), the minimum error rate is given by the Wasserstein-$1$-barycenter (under the $\ell_1$ metric): \cref{eq:rel.cls}.
\end{itemize}
First, unlike regression, the support of the barycenters in \cref{eq:rel.cls.rel,eq:rel.cls} is restricted to $\{e_1,\cdots,e_k\}$, which represents the one-hot labels.  Combined with the fact that the error rate is the expected $\frac12\ell_1$ distance between the true class \textit{probabilities} and the output class \textit{assignments}, the minimum error rate equals the sum of $\frac12W_1$ distances to the barycenter under the $\ell_1$ metric.  Similarly, the use of the $W_2^2$ distance under $\ell_2$ in \cref{eq:rel.regression} reflects the MSE loss.  Second, our \cref{eq:rel.cls} recovers \cref{eq:rel.cls.rel} in the noiseless setting, because, under which, $r^*_a=p_a$ and $\frac12W_1=\frac12\|\cdot\|_1$.  \citet{denis2022FairnessGuaranteeMulticlass} and \citet{gaucher2022FairLearningWasserstein} also derived similar expressions for the tradeoff to ours, but only under binary group or class labels.

\paragraph{Post-Processing.}

Given a (biased) model, this family of mitigation algorithms post-process the model to satisfy fairness, e.g., via remapping the outputs~\citep{hardt2016EqualityOpportunitySupervised,pleiss2017FairnessCalibration}.  Existing algorithms for DP fairness include~\citep{fish2016ConfidenceBasedApproachBalancing,menon2018CostFairnessBinary,chzhen2019LeveragingLabeledUnlabeled,jiang2020WassersteinFairClassification,zeng2022BayesOptimalClassifiersGroup,denis2022FairnessGuaranteeMulticlass}, but they are limited to binary group and/or binary classification.

For multi-group and multi-class DP, the only applicable post-processing algorithm, to our knowledge, is due to~\citet{alghamdi2022AdultCOMPASFair}, which is based on model projection.  But the tradeoff of their algorithm is unclear as they did not directly relate error rate to the difference between the projected model and the original, and experiments show that their algorithm underperforms compared to ours, especially on tasks involving a large number of groups and classes.

\section{Preliminaries}

\paragraph{Notation.}
Denote the $(k-1)$-dimensional probability simplex by $\Delta_k\coloneqq \{z\in\RR^k_{\geq0} : \|z\|_1=1\}$, whose $k$ vertices are $\{e_1,\cdots,e_k\}$, where $e_i\in\RR^k$ is the vector of all zeros except for a single $1$ on the $i$-th coordinate.  Let $\calQ_k$ denote the collection of distributions supported on the vertices of $\Delta_k$. 
We will work with \textit{randomized functions} (\cref{def:rand.fn}), which have probabilistic outputs according to some distributions conditioned on the input.  Given a (randomized) function $f:\calX\rightarrow\calY$ and a distribution $p$ over $\calX$, we denote the \textit{push-forward} of $p$ by $f\sharp p$~(\cref{def:push.forward}).

\paragraph{Problem Setup.}

A $k$-class classification problem is defined by a joint distribution $\mu$ of input $X\in\calX$, demographic group membership (a.k.a.\ the sensitive attribute) $A\in\calA=[m]\coloneqq\{1,\cdots,m\}$, and class label in one-hot representation, $Y\in\calY=\{e_1,\cdots,e_k\}$; the class labels may be subject to noise originating from, e.g., the data collection process. Denote the marginal distribution of input $X$ by $\mu^X$, the conditional distribution of $\mu$ on group $A=a$ by $\mu_a$, and the group weight by $w_a\coloneqq\Pr_\mu(A=a)$.

The goal of fair classification is to find a \textit{randomized} and \textit{attribute-aware} classifier, $h:\calX\times \calA\rightarrow\calY$, that achieves the minimum classification error rate on $\mu$ subject to the constraints set by the designated fairness criteria.  Denote the component of $h$ associated with group $a$ by $h_a:\calX\rightarrow\calY$, i.e., $h_a(x)\equiv h(x,a)$. The error rate is defined as
\begin{align}
  \err(h)
  \coloneqq{}& \Pr(h_A(X)\neq Y) \\
  ={}&\sum_{a\in[m]}w_a\Pr(h_A(X)\neq Y\mid A=a)\\
  ={}&\sum_{a\in[m]}w_a\int_{\calX\times\calY}\Pr(h_a(x) \neq y)\dif \mu_a(x,y),\label{eq:err}
\end{align}
where the decomposition on the last line highlights the randomness of $h$.  For fairness, we consider the group criterion of demographic parity:

\begin{definition}[Approximate Demographic Parity]\label{def:dp}
For $\alpha\in[0,1]$, a classifier $h:\calX\times\calA\rightarrow\calY$ is said to satisfy $\alpha$-DP if $\DPGap(h) \leq \alpha$, which is defined as
\begin{align}
\DPGap(h)
&\coloneqq \max_{\substack{a,a'\in[m]\\y\in\calY}}  \envert*{ \Pr(h_A(X)=y \mid A=a) - \Pr(h_A(X)=y \mid A=a') } \\
&\phantom{\vcentcolon\mathrel{\mkern-1.2mu}}=
   \max_{a,a'\in[m]} \enVert*{h_a\sharp \mu^X_a- h_{a'}\sharp \mu^X_{a'} }_\infty,
\end{align}
where
\begin{align}
  \Pr(h_A(X)=y \mid A=a) =  \int_{\calX}\Pr(h_a(x) = y)\dif \mu_a^X(x)
\end{align}
is the proportion of outputs with class assignment $y$ on group $a$, and 
$\enVert*{p-q}_\infty\coloneqq\max_{z\in\calZ}|p(z)-q(z)|$ between two distributions $p,q$.
\end{definition}

%\rcx{mention additive vs multiplicative, and 80\% rule}

We call a classifier \textit{$\alpha$-fair} if it satisfies $\alpha$-DP. The parameter $\alpha$ controls the tradeoff between fairness and (the maximum attainable) accuracy (due to the inherent tradeoff); setting $\alpha=0$ recovers the standard strict definition of DP.

Lastly, a (attribute-aware) score function $f:\calX\times\calA\rightarrow\Delta_k$ is a model that outputs probability vectors as estimates of the class probabilities, as in $f(x,a)_y\approx \Pr_{\mu_a}(Y=y\mid X=x)$.  A score function is said to be \textit{Bayes optimal}, denoted by $f^*$, if it computes the true class probabilities exactly,
\begin{align}
f^*_{a}(x)_i \coloneqq\Pr_{\mu_a}(Y=e_i\mid X=x)=\E_{\mu_a}[Y\mid X=x]_i;
\end{align}
it coincides with the minimum MSE estimator of the one-hot labels $Y$ given $(X,A=a)$.  We will often work with the quantity $r^*_{a}\coloneqq f^*_{a}\sharp \mu^X_a$, the distribution of true class probabilities conditioned on group $a$.

Given a (pre-trained) score function $f$, our post-processing method finds a (probabilistic) fair classifier by deriving from $f$. I.e., it returns classifiers of the form $(x,a)\mapsto g_a\circ f_a(x)$ for some post-processing maps $g_1,\cdots,g_m:\Delta_k\to\calY$.

\paragraph{Optimal Transport and Wasserstein Distance.}
Our analysis involves the concept of optimal transports and Wasserstein distance~\citep{villani2003TopicsOptimalTransportation}; the latter is a metric on the space of probability distributions.

\begin{definition}[Coupling] Let $p,q$ be probability distributions over $\calX$ and $\calY$, respectively.  A coupling $\gamma$ of $p,q$ is a joint distribution over $\calX\times\calY$ satisfying $p(x)=\int_{y\in\calY}\dif\gamma(x,y)$, $\forall x\in\calX$, and $q(y)=\int_{x\in\calX}\dif\gamma(x,y)$, $\forall y\in\calY$.  We denote the collection of couplings of $p,q$ by $\Gamma(p,q)$.
\end{definition}

\begin{definition}[Optimal Transport]\label{def:ot}
  Let $p,q$ be probability distributions over $\calX$ and $\calY$, respectively, and $c:\calX\times\calY\rightarrow[0,\infty)$ a cost function.
  The optimal transportation cost between $p$ and $q$ is given by 
  \begin{equation}
\inf_{\gamma\in\Gamma(p,q)}\int_{\calX\times\calY} c(x,y) \dif\gamma(x,y).    
  \end{equation}

  Let $\gamma^*$ be a minimizer, then the optimal transport from $p$ to $q$, denoted by $\calT^*_{p\rightarrow q,c}:\calX\rightarrow\calY$, is a (randomized) function satisfying $\gamma^*=(\Id\times \calT^*_{p\rightarrow q,c})\sharp p$, where $\Id$ is the identity map (that in the other direction is defined symmetrically).
\end{definition}

Intuitively, $\calT^*_{p\rightarrow q,c}$ specifies a plan for moving masses distributed according to $p$ to $q$ with the minimum total cost. In this plan, the mass located at each $x\in\calX$ is moved (probabilistically) to $\calT^*_{p\rightarrow q,c}(x)\in\calY$. 
The optimal transport can also be represented by the optimal coupling $\gamma^*\in\Gamma(p,q)$, as we can derive an optimal transport $\calT$ from $\gamma^*$ by setting $\Pr(\calT(X) = y \mid X=x) = \gamma^*(x,y)/\gamma^*(x, \calY)$, $\forall x,y$.\footnote{We will only consider transportation under the of $\ell_1$ cost of $(x,y)\mapsto \|x-y\|_1$, hence omit the dependency of $\calT^*_{p\rightarrow q}$ on $c$.}

Lastly, when $\calX=\calY$ is a metric space equipped with distance $d$, the optimal transportation cost between $p$ and $q$ under $c=d$ is equivalent to their Wasserstein-$1$ distance:

\begin{definition}[Wasserstein Distance]\label{def:wass}
  Let $p,q$ be probability distributions over a metric space $(\calX,d)$, and $r\in [1,\infty]$.  The Wasserstein-$r$ distance between $p$ and $q$ is 
\begin{equation}
W_r(p,q)=\rbr*{\inf_{\gamma\in\Gamma(p,q)}\int_{\calX\times\calX} d(x,x')^r \dif\gamma(x,x')}^{1/r}.
\end{equation}
\end{definition}

\section{Fair and Optimal Classification}\label{sec:wb}

In this section, we provide a characterization of the inherent tradeoff of DP fairness, then, based on the findings, propose and analyze a post-processing method for DP.

\subsection{Characterization of the Inherent Tradeoff}
\label{sec:tradeoff}

Our characterization comes from a reformulation of the classification problem assuming access to the Bayes optimal score.  On any generic (group-less) classification problem,

\begin{lemma}\label{lem:equiv} 
  Let $f^*:\calX\rightarrow\Delta_k$ be the Bayes optimal score function, define $r^*\coloneqq f^*\sharp\mu^X$, and fix $q\in\calQ_k$.  For any (randomized) classifier $h:\calX\rightarrow\calY$ satisfying $h\sharp\mu^X= q$, there exists a coupling $\gamma\in\Gamma(r^*,q)$ s.t. $\err(h) = \frac12\int_{\Delta_k\times\calY} \|s - y\|_1 \dif\gamma(s,y)$.  
Conversely, for any $\gamma\in\Gamma(r^*,q)$, there exists a randomized classifier $h$ satisfying $h\sharp\mu^X= q$ s.t.~the above equality holds.
\end{lemma}

\begin{figure}[t]
    \centering
    \includegraphics[width=0.25\linewidth]{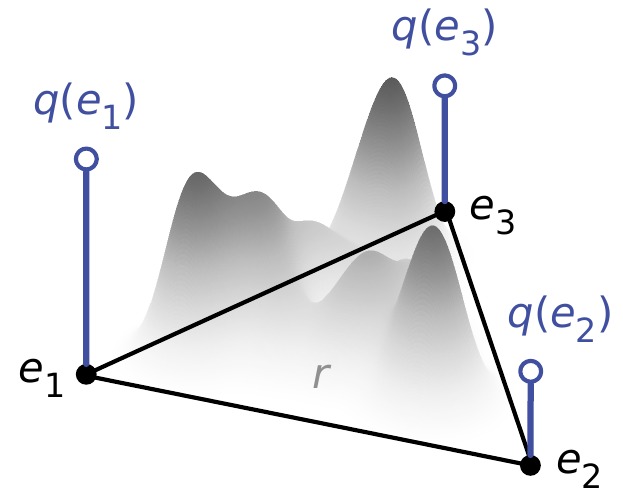}
    \caption{A distribution $r$ over the 2d simplex (density; grey surface), and a finite distribution $q\in\calQ_3$ over its vertices $\{e_1,e_2,e_3\}$ (blue spikes).}
    \label{fig:dist}
\end{figure}

It follows that minimizing classification error subject to having an output distribution of $q$ is equivalent to solving an optimal transport problem from $r^*$ to $q$ under the $\ell_1$ cost, because by \cref{def:wass},
$\min_{h:h\sharp \mu^X=q} \err(h) =\frac12 \min_{\gamma\in\Gamma(r^*,q)}\int \|s - y\|_1 \dif\gamma=\frac12 \,W_1(r^*,q)$.

Note that this reformulation allows for explicit control of the output distribution, which is well-suited for analyzing DP fairness since it only constrains the output distributions $q_a\coloneqq h_a\sharp \mu^X_a$; namely, that they need to be equal ($\alpha=0$) or close: $\DPGap(h)\leq \alpha\iff \max_{a,a'}\|q_a-q_{a'}\|_\infty\leq \alpha$ by \cref{def:dp}.
Therefore, for attribute-aware classifiers, whose component $h_a$'s can be optimized independently, the discussions above immediately give the following characterization of the minimum error rate under DP:

\begin{theorem}[Minimum Fair Error Rate]\label{thm:tradeoffs}  
  Let $\alpha\in[0,1]$, $f^*:\calX\times\calA\rightarrow\Delta_k$ be the Bayes optimal score function, and define $r^*_a\coloneqq f^*_a\sharp\mu^X_a$, $\forall a\in[m]$. With $W_1$ under the $\ell_1$ metric,
\begin{align}
\err^*_\alpha \coloneqq{}& \min_{h: \DPGap(h)\leq\alpha} \err( h) =\min_{\substack{q_1,\cdots,q_m\in\calQ_k\\\max_{a,a'}\|q_a-q_{a'}\|_\infty\leq\alpha}} \sum_{a\in[m]} \frac{w_a}2 \, W_1(r^*_a,q_a). \label{eq:barycenter} 
\end{align}
\end{theorem}

It could be viewed as a (relaxed) \textit{Wasserstein-barycenter} problem on the $r^*_a$'s under the special case where the support of the barycenter(s) $q_a$ is restricted to the vertices $\{e_1,\cdots,e_k\}$.  It is a convex problem (in the primal form presented above), and can be simplified under certain assumptions: if the problem is noiseless (and $\alpha=0$), it reduces to the TV-barycenter problem in \cref{eq:rel.cls.rel}~(\cref{prop:tv}); this result is first established in~\citep{zhao2022InherentTradeoffsLearning}, but only under $m=k=2$.

Under strict DP fairness ($\alpha=0$), the inherent tradeoff, namely the excess risk incurred by the DP constraint, is
\begin{equation}
 \min_{q\in\calQ_k} \sum_{a\in[m]}\frac{w_a}2\,W_1(r^*_a,q)-\sum_{a\in[m]}\min_{q_a\in\calQ_k} \frac{w_a}2\, W_1(r^*_a,q_a)\geq0;
\end{equation}
the second term is the Bayes error rate, achieved by the classifier $(x,a)\mapsto e_{\argmax_i f^*_a(x)_i}$.
The tradeoff is expected to be large on problems with very different $r^*_a$'s, and equals to zero when they are identical (i.e., $\E_\mu[Y\mid X,A]\indep A$; since all groups would have the same optimal decision rule), meaning that enforcing DP would not degrade model performance.  But we point out that the tradeoff could be zero even if $\E_\mu[Y\mid X,A]\not\indep A$, partly due to the nonuniqueness of the optimal classifier (\cref{exp:1}).

Lastly, \citet{zhao2022InherentTradeoffsLearning} concluded that in the noiseless setting, the tradeoff is zero if and only if the class priors are identical, i.e., $\E_\mu[Y\mid A]\indep A$.  But this condition is no longer sufficient in the general setting (\cref{exp:2}).

\subsection{Optimal Fair Classifier via Post-Processing}\label{sec:post.proc.opt}

In addition to characterizing the minimum error rate under DP, we also show that the optimal fair classifier can be obtained by deriving from the Bayes optimal score $f^*$:

\begin{theorem}[Optimal Fair Classifier]\label{thm:post.proc} 
  Let $\alpha\in[0,1]$, $f^*:\calX\times\calA\rightarrow\Delta_k$ be the Bayes optimal score function, $(q^*_1,\cdots,q^*_m)$ a minimizer of \cref{eq:barycenter}, and $\calT^*_{r^*_a\rightarrow q^*_a}$  the optimal transport from $r^*_a$ to $q^*_a$ under the $\ell_1$ cost, $\forall a\in[m]$. We have
\begin{equation}
  (x,a) \mapsto \calT^*_{r^*_a\rightarrow q^*_a}\circ f^*_a(x) \in \argmin_{h: \DPGap(h)\leq\alpha} \err( h).
\end{equation}
\end{theorem}

This result is a consequence of the construction used in the proof of \cref{lem:equiv} (deferred to \cref{sec:proof.3}), and reveals the form of the optimal fair classifier as a composition of $f^*$ and optimal transports from $r^*_a$'s to the minimizing $q^*_a$'s of \cref{eq:barycenter} (see \cref{fig:dist} for a picture of these distributions).  It immediately suggests a three-step method for learning optimal fair classifiers: \textit{(i)}~learn the Bayes optimal score function $f^*$, e.g., via minimizing MSE w.r.t.~the one-hot label $Y$,
\begin{equation}\label{eq:sq.loss}
f^*=\argmin_{f:\calX\times\calA\rightarrow\Delta_k} \E_\mu\sbr*{\|f(X,A)-Y\|_2^2},
\end{equation}
\textit{(ii)}~find a minimizer $(q^*_1,\cdots,q^*_m)$ of the barycenter problem in \cref{eq:barycenter}, \textit{(iii)}~compute the optimal transports $\calT^*_{r^*_a\rightarrow q_a^*}$, and finally return $(x,a) \mapsto  \calT^*_{r^*_a\rightarrow q_a^*}\circ f_a^*(x)$.  The last two steps (reproduced in \cref{alg:post}) post-process $f^*$.

\subsection{Post-Processing Any Score Function}\label{sec:post.proc}

In practice, however, the Bayes optimal $f^*$ may not be exactly learned due to computational cost, or difficulties in representation, optimization, and generalization~\citep{woodworth2017LearningNonDiscriminatoryPredictors}.  Instead, we will often work with suboptimal scores $f \approx f^*$ (e.g., pre-trained by a vendor).  This section analyzes the applicability and suboptimality of \cref{alg:post} for post-processing non-Bayes optimal score functions.

\begin{algorithm}[tb]
   \caption{Post-Process for $\alpha$-DP}
   \label{alg:post}
\begin{algorithmic}[1]
   \STATE {\bfseries Input:} $\alpha\in[0,1]$, score function $f:\calX\times\calA\rightarrow\Delta_k$, marginal distribution $\mu^{X,A}$ of $(X,A)$
   \STATE Define $w_a\coloneqq \Pr_\mu(A=a)$ and $r_a\coloneqq f_a\sharp \mu^X_a$, $\forall a\in[m]$ 
   \STATE $(q_1,\cdots,q_m)\gets$ minimizer of Eq.~(\ref{eq:barycenter})
%   \Statex\COMMENT{relaxed barycenter problem}
   \FOR{$a=1$ {\bfseries to} $m$}
    \STATE $\calT^*_{r_a\rightarrow q_a}\gets$ optimal transport from $r_a$ to $q_a$ under $\ell_1$ cost
\ENDFOR
\STATE{\bfseries Return:} $(x,a) \mapsto \calT^*_{r_a\rightarrow q_a}\circ f_a(x)$
\end{algorithmic}
\end{algorithm}

Given an arbitrary score function $f:\calX\times\calA\rightarrow\Delta_k$, we want to find post-processing maps $g_a:\Delta_k\rightarrow\calY$ such that the derived classifier $(x,a)\mapsto g_a\circ f_a(x)$ satisfies DP fairness, and ideally, achieves the minimum error rate among all fair classifiers derived from $f$.  

Let $\bar h(x,a)\coloneqq  \calT^*_{r_a\rightarrow q_a}\circ f_a(x)$ denote the  classifier obtained from applying \cref{alg:post} to $f$. First, $\bar h$ is always $\alpha$-fair regardless of $f$, because $\bar h_a\sharp \mu^X_a =  q_a$, $\forall a\in[m]$ by construction, and $\DPGap(\bar h)=\max_{a,a'}\|q_a-q_{a'}\|_\infty\leq\alpha$ from the constraints in \cref{eq:barycenter}. 
Second, the suboptimality of $\bar h$ can be upper bounded by the $L^1$ difference between $f$ and the Bayes optimal score $f^*$:

\begin{theorem}[Error Propagation]\label{prop:error.propa}
Let $\alpha\in[0,1]$, $f:\calX\times\calA\rightarrow\Delta_k$ be a score function, and $f^*$ the Bayes optimal score function.  For the $\alpha$-fair classifier $\bar h$ obtained from applying \cref{alg:post} to $f$,
\begin{equation}
0\leq  \err(\bar h) - \err^*_\alpha \leq \E \sbr*{ \enVert{ f(X,A) - f^*(X,A) }_1},
\end{equation}
where $\err^*_\alpha$ is defined in \cref{eq:barycenter}.
\end{theorem}

Hence, whereas the degradation in performance of the post-processed $\bar h$ from \cref{alg:post} if $f=f^*$ is attributed entirely to the inherent tradeoff (\cref{thm:post.proc}), it is not the case when $f\neq f^*$ due to the loss of information about $Y$.  We may, however, guarantee that $\bar h$ is optimal among all fair classifiers derived from $f$ if it satisfies \textit{group-wise distribution calibration}~\citep{kull2015NovelDecompositionsProper}, i.e., the output predictions correctly convey the class probabilities:

\begin{definition}\label{def:calibration}
  A score function $f:\calX\times\calA\rightarrow\Delta_k$ is said to be group-wise distribution calibrated if 
  $\Pr_\mu(Y=e_i \mid f(X,a)=s, A=a) = s_i$, $\forall s\in\Delta_k$, $i\in[k]$, $a\in[m]$.
\end{definition}

If $f$ is not calibrated, but labeled data is available, one could learn mappings $u_a:\Delta_k\rightarrow\Delta_k$ and compose it with $f$ to recalibrate it. 
The optimal calibration maps, $u^*_a$ (in the sense that they achieve calibration without incurring further information loss), are by definition the minimum MSE estimators of $Y$ given $(f_a(X),A=a)$.

To see why the optimality of $\bar h$ among all fair classifiers derived from $f$ is guaranteed provided calibration, note that finding the optimal fair \textit{post-processing map} for $f$ is equivalent to finding the optimal fair \textit{classifier} on a new problem $\mu'$ derived from the original $\mu$ under an input transformation, given by the joint distribution of $(X'\coloneqq f_A(X), A, Y)$.  Also, the Bayes optimal score on $\mu'$ coincide with the optimal calibration map, as $\E_{\mu'}[Y\mid X'=s, A=a]=u_a^*(s)$.  So by \cref{thm:post.proc}, \cref{alg:post} finds post-processing map $g_a$'s s.t.\ $(x',a)\mapsto g_a\circ u_a^*(x')$ is the optimal fair classifier on $\mu'$, whereby $(x,a)\mapsto g_a\circ (u_a^*\circ f_a)(x)$ is optimal among all derived fair classifiers (the term in the parentheses is the recalibrated score).  Finally, we remark that the suboptimality of $\bar h$ due to miscalibration can be bounded using \cref{prop:error.propa} by simply replacing the reference $f^*$ with the calibrated score $(x,a)\mapsto u^*_a\circ f_a$.

\section{Finite Sample Estimation}\label{sec:algs}

We have discussed post-processing for DP assuming access to the distribution $\mu^{X,A}$. In this section, we instantiate our \cref{alg:post} for post-processing using finite samples:

\begin{assumption}\label{ass:samples}
We have $n$ i.i.d.\ samples of $(X,A)$ that are independent of the score function $f$ being post-processed.
\end{assumption}

Denote the samples from group $a$ by $(x_{a,i})_{i\in[n_a]}$, and their number by $n_a$.  Define $\hat w_a\coloneqq n_a/n$, and the empirical distribution $\hat r_a\coloneqq \frac1{n_a}\sum_{i\in[n_a]}\delta_{f_a(x_{a,i})}$, where $\delta$ is the Dirac delta function.

We also analyze the sample complexity for both the fairness and the error rate of the returned classifier. For simplicity, we assume $f$ to be calibrated; otherwise, \cref{prop:error.propa} can be used to bound the suboptimality due to miscalibration.

\begin{assumption}\label{ass:calibration}
  The score function $f$ being post-processed is \textit{group-wise calibrated} (\cref{def:calibration}).
\end{assumption} 

Let $r_a\coloneqq f_a\sharp \mu^X_a$, $\forall a\in[m]$. Recall from \cref{sec:post.proc} that the error rate of the optimal derived $\alpha$-fair classifier is
  \begin{equation}
  \err^*_{\alpha,f} \coloneqq  \min_{\substack{q_1,\cdots,q_m\in\calQ_k\\\max_{a,a'}\|q_a-q_{a'}\|_\infty\leq\alpha}} \sum_{a\in[m]} \frac {w_a}2\,W_1(r_a,q_a).
\end{equation}

This section is divided into three subsections w.r.t.\ the continuity of the distributions of the score, $r_1,\cdots,r_m$.

\subsection{The Finite Case}\label{sec:discrete}

We start with the case where the $r_a$'s have finite supports, i.e., $|\calR_a|<\infty$ where $\calR_a\coloneqq \supp(r_a)$.  Note that this does not mean that the input distribution $\mu^X$ is finite.

If the true probability mass of the $r_a$'s were known, then \cref{alg:post} can be implemented by a linear program:
\begin{alignat}{3}
    &\hypertarget{eq:opt}{\textnormal{LP}}:\quad
       & \!\min_{\substack{q_1,\cdots,q_m\geq0\\\gamma_1,\cdots,\gamma_m\geq0}}\;\;\, & \sum_{\mathclap{a\in[m]}}\;\, \sum_{s\in\calR_a, y\in\calY} \frac{w_a}{2}\,\|s-y\|_1 \, \gamma_a(s,y)   \\
    && \centermathcell{\text{s.t.}\;\;\,} 
    & \sum_{\mathclap{s'\in\calR_a}} \;\gamma_a(s',y) = q_a(y), && \forall a\in[m],\, y\in\calY, \\
    &&& \sum_{\mathclap{y'\in\calY}} \; \gamma_a(s,y') = r_a(s), && \forall a\in[m],\, s\in\calR_a,\\
    &&& \envert*{q_a(y) - q_{a'}(y)} \leq \alpha, && \forall a,a'\in[m],\, y\in\calY,
\end{alignat}
where $q_a\in\Delta_k$ and $\gamma_a\in \RR^{|\calR_a|\times k}$. This program simultaneously finds a minimizer $(q^*_1,\cdots,q^*_m)$ of the barycenter problem in \cref{eq:barycenter} and the optimal transports $\calT^*_{r_a\rightarrow q_a^*}$ (used in \cref{thm:post.proc}) in the form of couplings $(\gamma^*_1,\cdots,\gamma_m^*)$: namely, each $\calT^*_{r_a\rightarrow q^*}$ is a randomized function satisfying $\Pr(\calT^*_{r_a\rightarrow q^*}(R) = y \mid R=s) =\gamma_a^*(s,y)/\sum_{y'\in\calY}\gamma_a^*(s, y')$, for all $s\in\calR_a$.

If the true pmfs of the $r_a$'s are unknown but finite samples as in \cref{ass:samples} are given, we proceed with solving \LP\ defined on the empirical $\hat w_a$ and $\hat r_a$'s, which will give us estimated $\hat q_a$'s and $\calT^*_{\hat r_a\rightarrow\hat q_a}$'s.  Then, we post-process $f$ via $\hat h(x,a) = \calT^*_{\hat r_a\rightarrow \hat q_a}\circ f_a(x)$. The sample complexity is:

\begin{theorem}[Sample Complexity, Finite Case]\label{thm:finite.sample.discrete} 
  Let $\alpha\in[0,1]$, $f:\calX\times\calA\rightarrow\Delta_k$ be a score function, and assume $|\calR_a|\coloneqq\supp(f_a\sharp\mu_a^X)<\infty$, $\forall a\in[m]$.  W.p.~at least $1-\delta$ over the random draw of samples in \cref{ass:samples}, for the classifier $\hat h$ derived above, and $n\geq \Omega(\max_a\ln(m/\delta)/w_a)$,
  \begin{equation}
    \DPGap(\hat h)\leq \alpha + O\rbr*{\max_a\sqrt{\frac{|\calR_a|\ln\rbr{{m}/{\delta}}}{nw_a}}  };
\end{equation}
in addition, with \cref{ass:calibration},
\begin{equation}
  \err(\hat h) - \err^*_{\alpha,f} \leq O\rbr*{\max_a\sqrt{\frac{|\calR_a|\ln\rbr{{m}/{\delta}}}{nw_a}}  }.
\end{equation}
\end{theorem}

\subsection{The Continuous Case}\label{sec:continuous}

When the $r_a$'s are continuous,\footnote{\label{fn:continuous}I.e., the probability measure does not give mass to sets whose intersection with $\Delta_k$ has Hausdorff dimension less than $k-1$.} given finite samples, we may still solve \LP\ defined on $\hat w_a$ and $\hat r_a$'s to estimate the optimal output distribution $\hat q_a$'s under $\alpha$-DP, but the empirical transports $\calT^*_{\hat r_a\rightarrow\hat q_a}$ are no longer usable for post-processing in this case, since by continuity, the inputs to the transports at inference time will be unseen almost surely (i.e., $f_a(x)\notin f_a[(x_{a,i})_{i\in[n_a]}]$ a.s.\ for $x\sim \mu^X_a$), or in other words, they cannot extrapolate to the full support of $r_a$.  So after obtaining the $\hat q_a$'s, we will need to estimate the optimal transports $\calT^*_{r_a\rightarrow\hat q_a}$ from (the population) $r_a$'s to the $\hat q_a$'s.

\begin{figure}[t]
    \centering
    \includegraphics[width=1\linewidth]{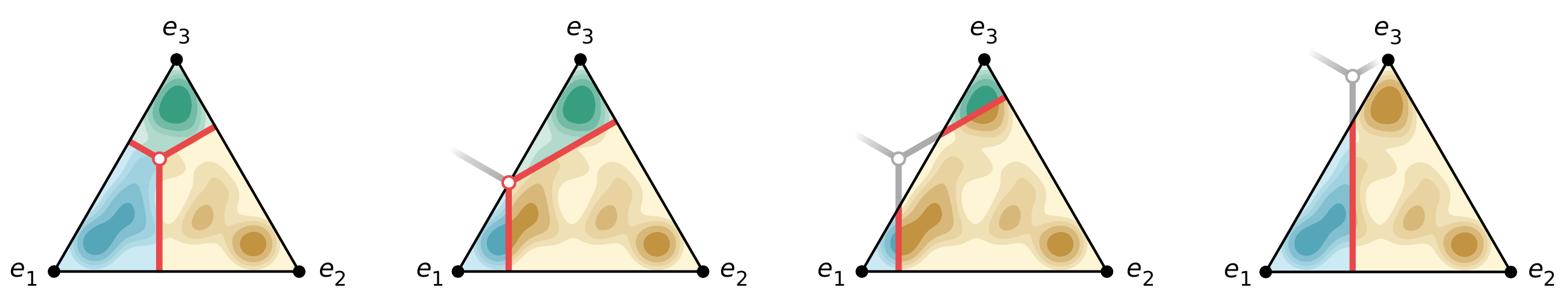}
    \caption{Examples of simplex-vertex optimal transports ($k=3$; with different vertex distributions).  All points in the lower-left blue partition are transported to $e_1$, lower-right yellow to $e_2$, and upper green to $e_3$.  The transports are described by a Y-shaped boundary.}
    \label{fig:nn}
\end{figure}

Since $\supp(\hat q_a)=\{e_1,\cdots,e_k\}$ is finite, this makes finding $\calT^*_{r_a\rightarrow\hat q_a}$ a semi-discrete optimal transport problem~\citep{genevay2016StochasticOptimizationLargescale,staib2017ParallelStreamingWasserstein,chen2019GradualSemiDiscreteApproach}, for which, a common procedure in existing work is to reformulate optimal transport as a convex optimization problem over a vector $\psi_a\in\RR^k$ using the Kantorovich-Rubinstein dual and the $c$-transform of the Kantorovich potential $\phi_a$: concretely, for each $a\in[m]$,
\begin{align}
  W_1(r_a, \hat q_a) &= \inf_{\gamma_a\in\Gamma(r_a,\hat q_a)}\int_{\Delta_k\times\calY} \|s-y\|_1 \dif\gamma_a(s,y) \\
  &= \sup_{\substack{\phi_a:\Delta_k\rightarrow\RR,\, \psi_a\in\RR^k\\\phi_a(s)+\psi_{a,i}\leq \|s-e_i\|_1}}\rbr*{ \EE_{S\sim r_a} [\phi_a(S)] + \sum_{i=1}^k \psi_{a,i} \hat q_a(e_i) }\\
  &= \sup_{\psi_a\in\RR^k} \rbr*{ \EE_{S\sim r_a}\sbr*{\min_i(\|S-e_i\|_1-\psi_{a,i})} + \sum_{i=1}^k \psi_{a,i} \hat q_a(e_i)},\label{eq:erm}
\end{align}
and $\psi^*_a$ can be optimized, e.g., using (stochastic) gradient ascent. 
Moreover, \citet{gangbo1996GeometryOptimalTransportation} showed that in the semi-discrete case, the optimal transport $\calT^*_{r_a\rightarrow\hat q_a}$ belongs to the parameterized function class 
\begin{equation}
\calG_k \coloneqq  \cbr*{s\mapsto e_{\argmin_{i\in[k]}(\|s - e_i\|_1 -\psi_i)} : \psi\in\RR^k}  
\end{equation}
(break ties to the tied $e_i$ with the largest index $i$)
with $\psi^*_a$ as its parameter. See \cref{fig:nn} for pictures of semi-discrete optimal transports when $k=3$.  Compared to the empirical transports $\calT^*_{\hat r_a\rightarrow\hat q_a}$, the domain of $\calT^*_{r_a\rightarrow\hat q_a}\in\calG_k$ covers the support of $r_a$, i.e., it can handle future unseen inputs.

Note that \citet{gangbo1996GeometryOptimalTransportation} assumed the cost function to be strictly convex and superlinear (Assumptions~H1 to H3 in their paper), which are not satisfied by our $\ell_1$ cost~\citep{ambrosio2003ExistenceStabilityResults}. Hence, as a technical contribution, we provide a proof in \cref{sec:transport} for the existence and uniqueness of the optimal transport in $\calG_k$ on simplex-vertex transportation problems under the $\ell_1$ cost, via analyzing its geometry.

\begin{figure}[t]
    \centering
    \includegraphics[width=\linewidth]{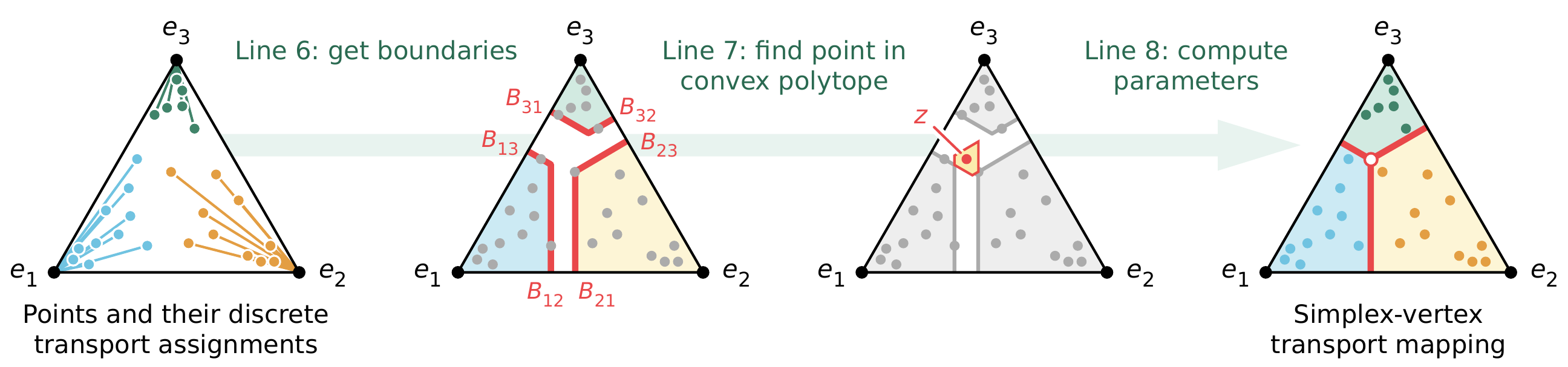}
    \caption{Extract a simplex-vertex transport $\calT\in\calG_3$ that agrees with the discrete optimal transport (\cref{lst:ln.6,lst:ln.7,lst:ln.8,lst:ln.9} of \cref{alg:cont}). For illustrative purposes, the discrete transport on the left does not split mass; otherwise, there would be disagreements on the boundaries.}
    \label{fig:transport}
\end{figure}

\paragraph{Our Implementation.}

To recap, for post-processing in the finite sample and continuous case, we \textit{(i)}~get the $\hat q_a$'s from solving \LP, then \textit{(ii)}~estimate the $\calT^*_{r_a\rightarrow\hat q_a}$'s; as discussed above, the typical approach for solving this semi-discrete transportation problem is via optimizing $\psi_a$ w.r.t.\ \cref{eq:erm} (taking expectation over the empirical $\hat r_a$).

However, instead of estimating from scratch, by leveraging the observation that each of the empirical transports $\calT^*_{\hat r_a\rightarrow\hat q_a}$ obtained (as byproducts) from solving \LP\ already achieves the optimal value of $W_1(\hat r_a,\hat q_a)$ of \cref{eq:erm}, and their decision boundaries is simply described by $\binom{k}{2}=\Theta(k^2)$ hyperplanes (recall \cref{fig:nn}), we can directly extract a set of transport mappings $\calT_a\in\calG_k$ from $\calT^*_{\hat r_a\rightarrow\hat q_a}$. 
The procedure for this is provided on \cref{lst:ln.6,lst:ln.7,lst:ln.8,lst:ln.9} in \cref{alg:cont} (with step-by-step illustration in \cref{fig:transport}; formal derivations are deferred to \cref{sec:transport}), which amounts to finding a feasible point in a polytope, formulated as a linear program (\cref{sec:find.point}).  Each of the extracted $\calT_a$ will agree with $\calT^*_{\hat r_a\rightarrow\hat q}$ on all points in $\hat r_a$ except for those that lie on the $\Theta(k^2)$ boundaries, and by continuity of $r_a$, the number of disagreements between them is $O(k^2)$ almost surely.

\begin{algorithm}[t]
   \caption{Post-Process for $\alpha$-DP (Finite Samples, Continuous Case)}
   \label{alg:cont}
\begin{algorithmic}[1]
   \STATE {\bfseries Input:} $\alpha\in[0,1]$, score function $f:\calX\times\calA\rightarrow\Delta_k$, samples $((x_{a,i})_{i\in[n_a]})_{a\in[m]}$
   \STATE Define $\hat w_a\coloneqq \frac{n_a}n$ and $\hat r_a \coloneqq \frac1{n_a}\sum_{i}\delta_{f_a(x_{a,i})}$, $\forall a\in[m]$ 
   \STATE $(\gamma_1,\cdots,\gamma_m) \gets \text{minimizer of } \LP$ on $\hat w_a$ and $\hat r_a$'s 
%   \Statex\COMMENT{relaxed barycenter problem in finite case}
    \STATE Define $v_{ij}\coloneqq e_j-e_i$
   \FOR{$a=1$ {\bfseries to} $m$}
\STATE $B_{a,ij}\gets \{0\}\cup \max\{ f_a(x_{a,\ell})^\T v_{ij} + 1 : \ell \text{ s.t. }  \gamma_a(f_a(x_{a,\ell}),e_i) > 0 \}$ \label{lst:ln.6}
\STATE $z_a\gets \text{point in } \bigcap_{i\neq j} \{ x\in \RR^k : x^\T v_{ij} \geq B_{a,ij} -1 \}$ \label{lst:ln.7}
%\Statex\COMMENT{arbitrary; always exists}
\STATE $\psi_{a,i}\gets 2z_a^\T v_{i1}$, $\forall i\in[k]$ \label{lst:ln.8}
\STATE $\calT_a\gets (s\mapsto e_{\argmin_{i}(\|s - e_i\|_1 -\psi_{a,i})})$  \label{lst:ln.9}
\ENDFOR
\STATE{\bfseries Return:} $(x,a) \mapsto \calT_a\circ f_a(x)$
\end{algorithmic}
\end{algorithm}

Our implementation in \cref{alg:cont} involves $(m+1)$ linear programs, where \LP\ dominates with $O(nk)$ variables and constraints, and takes, e.g., $\widetilde O(\mathrm{poly}(nk))$ time to solve to (near-)optimality using interior point methods~\citep{vaidya1989SpeedingupLinearProgramming}. Its sample complexity is:

\begin{theorem}[Sample Complexity, Continuous Case]\label{thm:finite.sample.continuous}
 Let $\alpha\in[0,1]$, $f:\calX\times\calA\rightarrow\Delta_k$ be a score function, and assume that $f_a\sharp\mu_a^X$ is continuous, $\forall a\in[m]$.  W.p.~at least $1-\delta$ over the random draw of samples in \cref{ass:samples}, for the classifier $\hat h$ obtained from applying \cref{alg:cont} to $f$, and $n\geq \Omega(\max_a\ln(m/\delta)/w_a)$,
\begin{equation}
\DPGap(\hat h) \leq \alpha+ O\rbr*{\max_a\rbr*{ \sqrt{\frac{k+\ln\rbr{{mk}/{\delta}}}{nw_a}} + \frac k{nw_a}}};
\end{equation}
in addition, with \cref{ass:calibration},
\begin{equation}
  \err(\hat h) - \err^*_{\alpha,f} \leq O\rbr*{ \max_a \rbr*{\sqrt{\frac{k\ln (m/\delta)}{nw_a}}  + \frac{k^2}{nw_a} }}.
\end{equation}
\end{theorem}

The first term in both expressions is the sample complexity of PAC learning (with the complexity of $\calG_k$ analyzed in \cref{thm:natdim}), and the second term comes from the disagreement between $\calT_a$ and $\calT^*_{\hat r_a\rightarrow\hat q}$ on $\hat r_a$ discussed above.

\subsection{The General Case}\label{sec:mixed}

For completeness, we briefly discuss the general case where the $r_a$'s are neither finite nor continuous (i.e., contain atoms), which is handled by smoothing the $r_a$'s using an i.i.d.\ noise generator $\rho$ with a continuous distribution.\footref{fn:continuous}

The smoothing is done by perturbing (samples from) $r_a$ with random noise drawn from $\rho$, i.e.,  $\tilde r_a \coloneqq  u_\rho\sharp r_a$, where $u_\rho$ is a randomized function s.t.~$u_\rho(s)\sim s+N$ with $N\sim\rho$; it is not hard to show that the resulting $\tilde r_a$ is continuous. Now that $u_\rho\circ f_a$ produces continuous score distributions, we may apply the same algorithm in \cref{sec:continuous} for DP post-processing. The resulting classifier, $\bar h_\rho(x,a)\coloneqq \calT^*_{\tilde r_a\rightarrow q_a}\circ u_\rho\circ f_a(x)$, is $\alpha$-fair regardless of the choice of $\rho$, and the suboptimality incurred by the smoothing procedure is controlled by the \textit{bandwidth} of $\rho$:

\begin{theorem}[Error Propagation, Smoothing] \label{thm:error.smoothing} 
  Let $\alpha\in[0,1]$, $f:\calX\times\calA\rightarrow\Delta_k$ be a score function, and $\rho$ a continuous distribution with finite first moment.  Under \cref{ass:calibration}, for the $\alpha$-fair classifier $\bar h_{\rho}$ derived above, 
\begin{equation}
  0\leq\err(\bar h_{\rho}) - \err^*_{\alpha,f}\leq\E_{N\sim\rho} [\|N\|_1].
\end{equation}
\end{theorem}

E.g., if $\rho=\text{Laplace}(0,b\, I_k)$, then $\E[\|N\|_1] = kb$, and the suboptimality due to smoothing is less than $kb$; in practice, the smallest-allowable $b$ depends on machine precision.

\section{Experiments} \label{sec:exp}

Our proposed DP post-processing \cref{alg:cont} is evaluated on four benchmark datasets: the UCI~Adult dataset for income prediction~\citep{kohavi1996ScalingAccuracyNaiveBayes}, the ACSIncome dataset~\citep{ding2021RetiringAdultNew}---an extension of the Adult with much more examples (1.6 million vs.\ 48,842)---on which we consider a binary setting where the sensitive attribute is gender and the target is whether the income is over \$50k, as well as a multi-group multi-class setting with five race categories and five income buckets; the Communities \& Crime dataset~\citep{redmond2002DatadrivenSoftwareTool}, and the BiasBios dataset~\citep{de-arteaga2019BiasBiosCase}, where the task is to predict occupations from biographies.  We highlight the effectiveness of our algorithm by comparing it to \texttt{FairProjection}~\citep{alghamdi2022AdultCOMPASFair}, which is to our knowledge the only other fair post-processing algorithm for multi-group and multi-class classification.

On each dataset, we split the data into pre-training, post-processing, and testing. We first train a linear logistic regression scoring model on the pre-training split, then perform DP post-processing.  On BiasBios, the model is trained on embeddings of biographies computed by a pre-trained BERT language model from the \texttt{bert-base-uncased} checkpoint~\citep{devlin2019BERTPretrainingDeep}.  Additional details, including hyperparameters, are in \cref{sec:exp.additional}.

\begin{figure}[t]
    \centering
    \includegraphics[height=.345\linewidth]{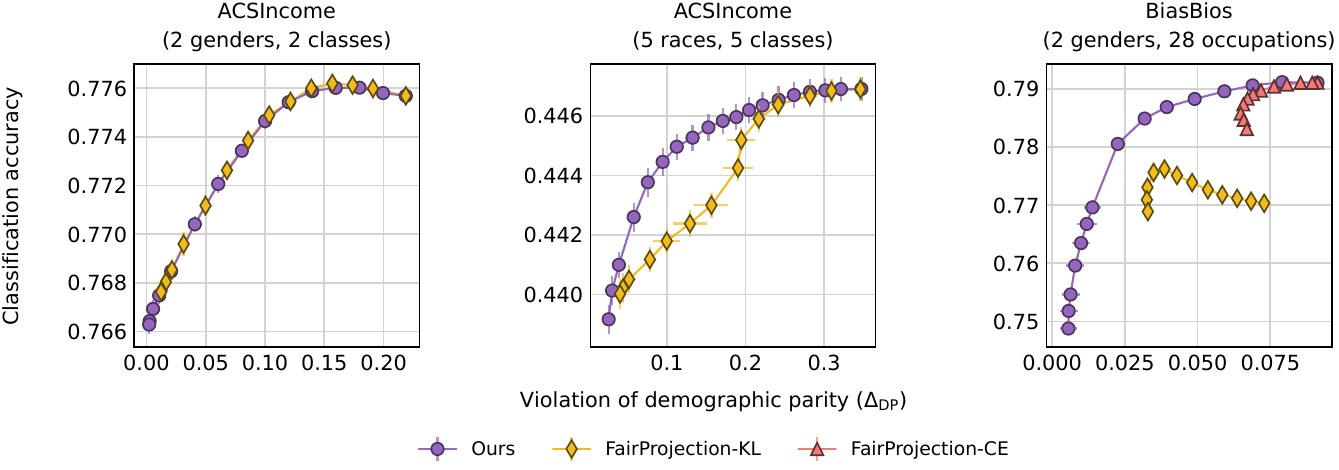}
  \caption{Tradeoff curves between accuracy and $\DPGap$ (\cref{def:dp}).  Scoring model is logistic regression. Error bars indicate the standard deviation over 10 runs with different random splits. Running time is reported in appendix \cref{tab:runtime}.  On ACSIncome, \texttt{FairProjection-KL} and \texttt{FairProjection-CE} have similar results.}
    \label{fig:exp}
\end{figure}

\paragraph{Results.}

The results on ACSIncome and BiasBios are shown in \cref{fig:exp} (those on Adult and Communities are deferred to appendix \cref{fig:exp.2}).  Across all tasks, our method is effective at reducing the disparity, and almost precise control of $\DPGap$ via $\alpha$ is achieved on tasks with sufficient data.
Compared to \texttt{FairProjection}, our method can achieve lower $\DPGap$ and produces better tradeoff curves, and its advantage is most evident under multi-class settings (e.g., BiasBios).

While our method achieves a significant degree of DP fairness, there remains a gap to $\DPGap=0$ in our results, especially on tasks with more groups and classes (e.g., ACSIncome under the multi-group multi-class setting).  This could be due to a potential violation of the continuity assumption required by \cref{alg:cont} (\cref{sec:continuous}), but we suspect the main reason to be insufficient data.  Recall from \cref{thm:finite.sample.continuous} that the sample complexity scales as $\widetilde O(\sqrt{k/nw_a})$ in the worst-case $a$, where $w_a$ is the proportion of group $a$. This means that good generalization performance hinges on collecting adequate amounts of data from minority groups.  Lastly, as discussed in \cref{sec:post.proc}---although not empirically explored here---higher accuracies could be achieved if the scores were calibrated prior to post-processing.

\section{Further Related Work}\label{sec:more.related}

\paragraph{Fairness Criteria.}
This paper focused on the \textit{group} criterion of demographic parity, which is defined on population-level statistics~\citep{verma2018FairnessDefinitionsExplained,kearns2018PreventingFairnessGerrymandering}. Other group criteria include parity of true positive and/or negative rates~\citep{hardt2016EqualityOpportunitySupervised}, predictive rates~\citep{chouldechova2017FairPredictionDisparate,berk2021FairnessCriminalJustice,zeng2022FairBayesOptimalClassifiers}, accuracy~\citep{buolamwini2018GenderShadesIntersectional}, etc.  Besides group-level, there are criteria defined on the \textit{individual-level}~\citep{dwork2012FairnessAwareness,sharifi-malvajerdi2019AverageIndividualFairness}, which require the model to output similar predictions to individuals deemed to be similar under application and context-specific measures designed by the practitioner.

\paragraph{Mitigation Methods.} In addition to post-processing, there are data pre-processing methods~\citep{calmon2017OptimizedPreProcessingDiscrimination}, as well as in-processing ones via constrained optimization~\citep{kamishima2012FairnessAwareClassifierPrejudice,zafar2017FairnessConstraintsMechanisms,agarwal2019FairRegressionQuantitative} or fair representation learning~\citep{zemel2013LearningFairRepresentations,madras2018LearningAdversariallyFair,zhao2019ConditionalLearningFair,song2019LearningControllableFair}.  There are methods under other learning settings and paradigms, such as unsupervised learning~\citep{chierichetti2017FairClusteringFairlets,backurs2019ScalableFairClustering,li2020DeepFairClustering}, ranking~\citep{zehlike2017FAIRFair}, and sequential decision making~\citep{joseph2016FairnessLearningClassic,joseph2017FairAlgorithmsInfinite,gillen2018OnlineLearningUnknown,chi2022ReturnParityMarkov}.

\section{Conclusion}

In this paper, we characterized the inherent tradeoff of DP fairness on classification problems in the most general setting, and proposed an effective post-processing method with suboptimality and sample complexity analyses.  Our implementation uses linear program solvers; while they enjoy stability and consistent performance, a potential concern is scalability to larger numbers of classes or samples.  It would be of practical value to analyze an implementation that uses more time efficient optimization methods, e.g., gradient descent~\citep{staib2017ParallelStreamingWasserstein}.

Technically, we studied the geometry of the optimal transport between distributions supported on the simplex and its vertices. A main result is that when the distributions are semi-discrete, the optimal transport is unique, and is given by the $c$-transform of the Kantorovich potential, including under the $\ell_1$ cost which is neither strictly convex nor superlinear.  This result may be of independent theoretical interest to the community.

Our results add to the line of work that study the inherent tradeoffs of fairness criteria, which we believe would benefit practitioners in the design of fair machine learning systems, and contribute to a better understanding of the implications of fairness in machine learning.

\section*{Acknowledgements}
The authors thank Jane Du, Yuzheng Hu, and Seiyun Shin for feedback on the draft.  The work of HZ was supported in part by the Defense Advanced Research Projects Agency (DARPA) under Cooperative Agreement Number: HR00112320012, a Facebook Research Award, and Amazon AWS Cloud Credit.

% \newpage
\bibliographystyle{plainnat-eprint}
\bibliography{references}

\newpage
\appendix

\section{Additional Discussions in Section~\ref{sec:tradeoff}}\label{sec:wwb}

This section discusses the reduction of \cref{thm:tradeoffs} to TV-barycenter in the noiseless setting, and the examples in \cref{sec:tradeoff}.

\subsection{Reduction to TV-Barycenter}\label{sec:tvb}

When the classification problem $\mu$ is noiseless, i.e., the (unconstrained) Bayes error rate is zero, $\min_h \err(h)=0$, we show that \cref{thm:tradeoffs} reduces to a relaxed TV-barycenter problem.  For strict DP ($\alpha=0$), this result is previously established by \citet{zhao2022InherentTradeoffsLearning} for the binary case of $m=k=2$.  Our reduction here holds for the general case.

Note that noiselessness means that there exist a deterministic labeling function $y:\calX\times\calA\rightarrow\calY$ s.t.~$Y=y(X,A)$ (almost surely).  Therefore, the Bayes optimal score $f^*=y$, and we have
\begin{equation}\label{eq:tv.equiv}
  r^*_a = p_a \coloneqq f^*_a\sharp\mu^X_a \quad\text{where}\quad p_a(e_i)= \Pr_\mu(Y=e_i\mid A=a).
\end{equation}

\begin{theorem}[Minimum Fair Error Rate, Noiseless Setting]\label{prop:tv}
Let $\alpha\in[0,1]$, and suppose $\mu$ is noiseless. We have
  \begin{equation}
   \min_{h: \DPGap(h)\leq\alpha} \err(h) = \min_{\substack{q_1,\cdots,q_m\in\calQ_k\\\max_{a,a'}\|q_a-q_{a'}\|_\infty\leq\alpha}} \sum_{a\in[m]} \frac{w_a}2 \,W_1(r^*_a,q_a)=\min_{\substack{q_1,\cdots,q_m\in\calQ_k\\\max_{a,a'}\|q_a-q_{a'}\|_\infty\leq\alpha}}\sum_{a\in[m]}\frac{w_a}2\|p_a-q_a\|_1.
    \end{equation}
\end{theorem}

This is due to $\supp(p_a)\subseteq\{e_1,\cdots,e_k\}$ sharing the same finite support with any $q_a\in\calQ_k$, whereby $\frac12 \,W_1(r^*_a,q_a)=\frac12 \,W_1(p_a,q_a)=\frac12\|p_a-q_a\|_1$.  Specifically, recall the fact that $W_1$ under the 0-1 distance is equal to $\|\cdot\|_1$:

\begin{proposition}\label{prop:tv.w1.01}
  Let $p,q$ be probability measures on $\calX$ with metric $d(x,y)=\1[x\neq y]$, where $\1[\cdot]$ denotes the indicator function. We have $W_1(p,q)=\frac12\|p-q\|_1$.
\end{proposition}

\begin{proof}
 By definition, under the metric $d(x,y)=\1[x\neq y]$,
  \begin{align}
    W_1(p,q)&= \inf_{\gamma\in\Gamma(p,q)} \int_{\calX\times\calX} \1[x\neq y] \dif \gamma(x,y) \\
    &= \rbr*{1-\sup_{\gamma\in\Gamma(p,q)} \int_{\calX\times\calX} \1[x= y] \dif \gamma(x,y)} \\
    &= 1- \int_\calX \min\rbr*{p(x), q(x)} \dif x\\
    &= \int_\calX \rbr*{p(x)- \min\rbr*{p(x), q(x)} }\dif x\\
    &= {\int_\calX \max\rbr*{0, p(x) - q(x)} \dif x} \\
    &= \frac 12{\int_\calX  \envert*{ p(x) - q(x) } \dif x},
\end{align}
where line 3 is due to $\gamma(x,x)\leq \min(p(x),q(x))$ for all $\gamma\in\Gamma(p,q)$, plus there always exists a coupling s.t.~$\gamma(x,x)= \min(p(x),q(x))$, and line 5 to $\int_\calX { p(x) - q(x) } \dif x = 0$.
\end{proof}

\begin{proof}[Proof of \cref{prop:tv}]
Because $\supp(r^*_a)=\supp(p_a)\subseteq\{e_1,\cdots,e_k\}$, the $\ell_1$ distance (in $W_1$) between any $s\in \supp(r^*_a)$ and $y\in\{e_1,\cdots,e_k\}$ simplifies to $\|s-y\|_1 = 2 \1[s\neq y]$. The result then follows from \cref{prop:tv.w1.01}.
\end{proof}

Moreover, in this case, we have closed-form solution for the optimal fair classifier in \cref{thm:post.proc}:

\begin{theorem}[Optimal Fair Classifier, Noiseless Setting]
  Let $\alpha\in[0,1]$, suppose $\mu$ is noiseless, let $y:\calX\times\calA\rightarrow\calY$ be the ground-truth labeling function, and $(q^*_1,\cdots,q^*_m)$ a minimizer of \cref{eq:barycenter}.
  Define
 \begin{equation}
    d_a(e_i)\coloneqq\frac{\max(0,q_a^*(e_i)-p_a(e_i))}{\sum_{j\in[k]}\max(0,q_a^*(e_j)-p_a(e_j))},\quad s_a(e_i)\coloneqq\frac{\max(0,p_a(e_i)-q_a^*(e_i))}{p_a(e_i)},\quad\forall a\in[m],\,i\in[k],
  \end{equation}
  and randomized functions $\calT_a:\calY\rightarrow\calY$ for each $a\in[m]$ satisfying
  \begin{equation}
    \calT_a(e_i) = \begin{cases}
    e_i  & \text{w.p. } s_a(e_i) d_a(e_i) + (1-s_a(e_i)),\\
    e_j & \text{w.p. } s_a(e_i) d_a(e_j), \quad\qquad\qquad\qquad \forall j\in[m],\,j\neq i.
    \end{cases}
  \end{equation}
  We have
\begin{equation}
  (x,a) \mapsto \calT_{a}\circ y_a(x) \in \argmin_{h: \DPGap(h)\leq\alpha} \err( h).
\end{equation}
\end{theorem}

\begin{proof}

We first verify that $(\calT_a\circ y_a) \sharp \mu^X_a = q^*_a$, which would imply $\DPGap(h)\leq\alpha$ by the constraints in \cref{eq:barycenter}. For all $e_i\in\calY$,
\begin{align}
  \Pr(\calT_a\circ y_a(X)=e_i\mid A=a) &= p_a(e_i)(s_a(e_i)d_a(e_i)+(1-s_a(e_i))) + d_a(e_i)\sum_{i\neq j}  p_a(e_j)s_a(e_j)\\
  &=p_a(e_i)(1-s_a(e_i)) + d_a(e_i)\sum_{j\in[k]}  p_a(e_j)s_a(e_j) \\
  &=p_a(e_i) - \max(0,p_a(e_i)-q_a^*(e_i)) + \max(0,q_a^*(e_i)-p_a(e_i)) \\
  &=q_a^*(e_i),
\end{align}
where line 3 is because $\sum_{i}p_a(e_i)-q_a^*(e_i) = 0\implies \sum_{i} \max(0,p_a(e_i)-q_a^*(e_i)) = \sum_{i}\max(0,q_a^*(e_i)-p_a(e_i))$, and the last line is from case analysis.

Next, we compute the error rate on group $a\in[m]$. By construction, its accuracy conditioned on $Y=y_a(X)=e_i$ is 
\begin{align}
  \Pr(\calT_a\circ y_a(X)=e_i \mid A=a, Y=e_i) = \begin{cases}
1- s_a(e_i) & \text{if $d_a(e_i)=0$},\\
1=1- s_a(e_i) & \text{if $d_a(e_i)> 0\iff s_a(e_i) = 0$},\\
\end{cases}
\end{align}
so the error rate is
\begin{align}
    \Pr(\calT_a\circ y_a(X)\neq Y \mid A=a) &= \sum_{i\in [k]} p_a(e_i)\, (1-\Pr(\calT_a\circ y_a(X)=e_i \mid A=a, Y=e_i)) \\
&=\sum_{i\in[k]} p_a(e_i) s_a(e_i) \\
&= \sum_{i\in[k]}\max(0,p_a(e_i)-q_a^*(e_i)) \\
&= \frac12\sum_{i\in[k]}|p_a(e_i)-q_a^*(e_i)|.
\end{align}
We conclude by combining the error on all groups and invoking \cref{prop:tv}.
\end{proof}

\subsection{Examples Regarding the Tradeoff of DP Fairness}\label{sec:examples}

In the remarks of \cref{thm:tradeoffs}, we discussed properties of the inherent tradeoff of error rate for DP fairness, which we illustrated here with two concrete examples.

It is discussed that the tradeoff could be zero even when the distribution of class probabilities $r^*_a\coloneqq f^*_a\sharp\mu^X_a$ differ, or equivalently, $\E_\mu[Y\mid X,A]\indep A$. This means that on certain problem instances, the Bayes error rate is simultaneously achieved by an unfair classifier and a fair one; in other words, the cost of DP fairness is zero.  Such cases arise from the nonuniqueness of the optimal classifier.  They would not occur on regression problems (with MSE), where the optimal regressor is always unique (namely, $f^*_a(x)=\E_\mu[Y\mid X,A=a]$).  

\begin{example}\label{exp:1}
  Consider the two-group binary classification problem given by
\begin{gather}
  \Pr_{\mu_1}(Y=e_1\mid X=x)=1 \quad\text{and}\\
  \Pr_{\mu_2}(Y=e_1\mid X=x)=\Pr_{\mu_2}(Y=e_2\mid X=x)=\frac12 \quad \text{for all $x\in\calX$}.
\end{gather}
The optimal classifier on group 1 is the constant function $x\mapsto e_1$, and all classifiers on group 2 yield the same (hence optimal) error rate of $\frac12$, including $x\mapsto e_1$ which when combined with the optimal group 1 classifier achieves DP and the (group-balanced) Bayes error rate of $\frac14$. 
\end{example}

In~\citep{zhao2022InherentTradeoffsLearning}, it is concluded that in the noiseless setting, the inherent tradeoff is zero if and only if the class prior distributions are the same, $p_a(e_i)= \Pr_\mu(Y=e_i\mid A=a)$ and $r^*_a = p_a \coloneqq f^*_a\sharp\mu^X_a$ in this case, or equivalently $\E_\mu[Y\mid A]\indep A$.  However, for the general case, this is no longer sufficient for the tradeoff to be zero (a sufficient condition here is $\E_\mu[Y\mid X,A]\indep A$).  

\begin{example}\label{exp:2}
    Consider the two-group binary classification problem of two inputs, $\calX=\{1,2\}$, given by
\begin{gather}
\begin{aligned}
  \Pr_{\mu_1}(Y=e_1\mid X=1)&=1,&\Pr_{\mu_1}(Y=e_1\mid X=2)&=0, \\
  \Pr_{\mu_1}(Y=e_2\mid X=1)&=0,&\Pr_{\mu_1}(Y=e_2\mid X=2)&=1,&&\text{with}\\
  \Pr_{\mu_1}(X=1)&=\frac13, &\Pr_{\mu_1}(X=2)&=\frac23,&&\text{and}
  \end{aligned}
\\
  \Pr_{\mu_2}(Y=e_1\mid X=x)=\frac 13 \quad \text{for all $x\in\{1,2\}$}.
\end{gather}
Note that the class prior on both groups is $(\frac13,\frac23)$.  The unique optimal classifier on group 1 is $x\mapsto e_x$, and the unique optimal classifier on group 2 is the constant $x\mapsto e_2$, but this combination do not satisfy DP, since the output distribution on group 1 is $(\frac13,\frac23)$ but that on group 2 is $(0,1)$.  Since all other classifiers including the fair ones have strictly higher error rates, the tradeoff is nonzero.
\end{example}

\section{Proofs for Section~\ref{sec:wb}}
\label{sec:proof.3}

To make our arguments rigorous, we begin by providing a definition of randomized functions via the Markov kernel.  These definitions will be frequently referred to in the proofs in this section, and that of \cref{thm:error.smoothing}.

\begin{definition}[Markov Kernel]\label{def:markov.kernel}
  A Markov kernel from a measurable space $(\calX,\calS)$ to $(\calY,\calT)$ is a mapping $\calK:\calX\times \calT\rightarrow[0,1]$, such that $\calK(\cdot,T)$ is $\calS$-measurable $\forall T\in\calT$, and $\calK(x,\cdot)$ is a probability measure on $(\calY,\calT)$ $\forall x\in\calX$.
\end{definition}

\begin{definition}[Randomized Function]\label{def:rand.fn}
  A randomized function $f:(\calX,\calS)\rightarrow(\calY,\calT)$ is associated with a Markov kernel $\calK:\calX\times \calT \rightarrow[0,1]$, and for all $x\in\calX, T\in\calT$, $\Pr(f(x)\in T) = \calK(x,T)$.
\end{definition}

\begin{definition}[Push-Forward by Randomized Function]\label{def:push.forward}
    Let $p$ be a measure on $(\calX,\calS)$ and $f:(\calX,\calS)\rightarrow(\calY,\calT)$ a randomized function with Markov kernel $\calK$. The push-forward of $p$ under $f$, denoted by $f\sharp p$, is a measure on $\calY$ given by $f\sharp  p(T) =  \int_{\calX} \calK(x,T)  \dif p(x)$ for all $T\in\calT$.
\end{definition}

Also, let blackboard bold $\1$ denote the indicator function, where $\1[P]=1$ if the predicate $P$ is true, else $0$.

We provide the proofs to \cref{lem:equiv,thm:tradeoffs,thm:post.proc,prop:error.propa}.
The proofs to these results and the ones in \cref{sec:algs} all make use of the following rewriting of the error rate as an integral over a coupling:

\begin{lemma}\label{lem:rewrite}
  Let $f^*:\calX\rightarrow\Delta_k$ be the Bayes optimal score function. The error rate of any randomized classifier $h:\calX\rightarrow\calY$ can be written as
\begin{equation}
  \err( h)  = \frac12\int_{\Delta_k\times\calY} \|s - y\|_1 \Pr( f^*(X)=s, h(X) = y ) \dif(s,y)= \frac12\E \sbr* { \enVert{  f^*(X) - h(X) }_1}.
\end{equation}
\end{lemma}

Note that the joint distribution $\Pr$ of $(f^*(X), h(X))$ is a coupling belonging to $\Gamma(f^*\sharp\mu^X, h\sharp \mu^X)$.

\begin{proof} The accuracy of $h$ is
  \begin{align}
     1-\err( h) &= 1- \Pr(h(X) \neq Y) = \Pr( h(X) = Y) \\
    &= \int_{\Delta_k}\sum_{i\in[k]} \Pr(Y=e_i, h(X) = e_i,  f^*(X)=s)  \dif s\\
    &= \int_{\Delta_k}\sum_{i\in[k]} \Pr(Y=e_i, h(X) = e_i \mid f^*(X)=s) \Pr_\mu(f^*(X)=s)  \dif s\\
    &= \int_{\Delta_k}\sum_{i\in[k]} \Pr_\mu(Y=e_i \mid f^*(X)=s) \Pr( h(X) = e_i \mid f^*(X)=s ) \Pr_\mu(f^*(X)=s)  \dif s\\
    &= \int_{\Delta_k} \sum_{i\in[k]} s_i \Pr(f^*(X)=s,  h(X) = e_i ) \dif s,
\end{align}
where line 4 follows from $X\indep Y$ given $f^*(X)$, since $f^*(X)=\E_\mu[Y\mid X]$ fully specifies the pmf of $Y$ conditioned on $X$.  Next, because $\Pr( f^*(X) = \cdot, h(X)=\cdot )$ is a probability measure,
\begin{align}    
    \err( h) &= \int_{\Delta_k}\sum_{i\in[k]} (1-s_i)\Pr( f^*(X)=s, h(X) = e_i ) \dif s \\
    &= \frac12 \int_{\Delta_k}\sum_{i\in[k]} \|s - e_i\|_1\Pr( f^*(X)=s, h(X) = e_i ) \dif s \\    
    &\equiv \frac12\int_{\Delta_k\times\calY} \|s - y\|_1 \Pr( f^*(X)=s, h(X) = y ) \dif(s,y)\\
    &\equiv \frac12\E \sbr* { \enVert{  f^*(X) - h(X)}_1},
\end{align}
where the second equality is due to an identity stated in \cref{eq:simplex.dist}.
\end{proof}

\begin{lemma}[Full Version of \cref{lem:equiv}] \label{lem:equiv.full}
  Let $f^*:\calX\rightarrow\Delta_k$ be the Bayes optimal score function, define $r^*\coloneqq f^*\sharp\mu^X$, and fix $q\in\calQ_k$.   
  For any randomized classifier $h:\calX\rightarrow\calY$ with Markov kernel $\calK$ satisfying $h\sharp\mu^X= q$, the coupling $\gamma\in\Gamma(r^*,q)$ given by 
\begin{equation}
\gamma(s,y) = \int_{{f^*}^{-1}(s)} \calK(x,y)\dif \mu^X(x),
\end{equation}
where ${f^*}^{-1}(s)\coloneqq\{x\in\calX : f^*(x)=s\}$, satisfies
\begin{equation}\label{eq:equiv.full}
\err(h)  =  \frac12\int_{\Delta_k\times\calY} \|s - y\|_1 \dif\gamma(s,y).
\end{equation}
Conversely, for any $\gamma\in\Gamma(r^*,q)$, the randomized classifier $h$ with Markov kernel 
\begin{equation}
\calK(x,T)=\gamma(f^*(x),T)/\gamma(f^*(x),\calY)  
\end{equation}
satisfies $h\sharp\mu^X= q$ and \cref{eq:equiv.full}.
\end{lemma}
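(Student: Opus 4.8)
The plan is to prove both directions by reducing to \cref{lem:rewrite}, which already writes $\Err(h)=\frac12\int_{\Delta_k\times\calY}\|s-y\|_1\dif\PP(f^*(X)=s,h(X)=y)$ and records that the joint law of $(f^*(X),h(X))$ is a coupling of $f^*\sharp\mu^X=r^*$ and $h\sharp\mu^X$. So in each direction it suffices to identify the coupling $\gamma$ with this joint law and to read off its marginals.

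For the forward direction, let $h$ have Markov kernel $\calK$ with $h\sharp\mu^X\eqd q$. I would first observe that for Borel $A\subseteq\Delta_k$ and $B\subseteq\calY$ the joint law of $(f^*(X),h(X))$ assigns mass $\int_{{f^*}^{-1}(A)}\calK(x,B)\dif\mu^X(x)$, which is exactly the disintegrated form of the $\gamma$ appearing in the statement; so the displayed formula for $\gamma$ merely names this joint law. By \cref{lem:rewrite} this $\gamma$ lies in $\Gamma(r^*,h\sharp\mu^X)$, and since $h\sharp\mu^X\eqd q$ we get $\gamma\in\Gamma(r^*,q)$; the same lemma then gives $\Err(h)=\frac12\int\|s-y\|_1\dif\gamma(s,y)$, i.e., \cref{eq:equiv.full}. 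This direction is essentially a repackaging of \cref{lem:rewrite}.

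For the converse, the content is in constructing $h$ from a given $\gamma\in\Gamma(r^*,q)$ and checking it behaves as claimed. I would disintegrate $\gamma$ against its first marginal $r^*$ to obtain a probability kernel $s\mapsto\gamma(s,\cdot)$ on $\calY$, valid for $r^*$-a.e.\ $s$ (set it to an arbitrary fixed element of $\calQ_k$ on the null set), so that $\gamma(s,\calY)=1$ there and the normalization in $\calK(x,B)=\gamma(f^*(x),B)/\gamma(f^*(x),\calY)$ is harmless; since $f^*\sharp\mu^X=r^*$, for $\mu^X$-a.e.\ $x$ the point $f^*(x)$ lies in the good set, so $\calK(x,\cdot)=\gamma(f^*(x),\cdot)$ a.s. Then I would verify: (i) $\calK$ is a genuine Markov kernel, measurability of $x\mapsto\gamma(f^*(x),B)$ following from measurability of $f^*$ composed with the measurable-in-$s$ disintegration, and $\calK(x,\cdot)$ being a probability measure by construction; (ii) $h\sharp\mu^X\eqd q$, since by \cref{def:push.forward} and the change-of-variables formula for $r^*=f^*\sharp\mu^X$, $h\sharp\mu^X(B)=\int_\calX\gamma(f^*(x),B)\dif\mu^X(x)=\int_{\Delta_k}\gamma(s,B)\dif r^*(s)=\gamma(\Delta_k\times B)=q(B)$, the last step using that $\gamma$'s second marginal is $q$; (iii) the joint law of $(f^*(X),h(X))$ is $\gamma$, since $\PP(f^*(X)\in A,h(X)\in B)=\int_{{f^*}^{-1}(A)}\gamma(f^*(x),B)\dif\mu^X(x)=\int_A\gamma(s,B)\dif r^*(s)=\gamma(A\times B)$ by the same change of variables and the defining property of the disintegration. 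Finally \cref{lem:rewrite} applied to this $h$ gives $\Err(h)=\frac12\int\|s-y\|_1\dif\gamma(s,y)$, which is \cref{eq:equiv.full}.

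The main obstacle is purely the measure-theoretic bookkeeping in the converse: the existence of the disintegration kernel $s\mapsto\gamma(s,\cdot)$ (standard, as $\Delta_k$ is Polish and $\calY$ is finite, so the disintegration theorem applies to the Borel measure $\gamma$ on $\Delta_k\times\calY$), the handling of the $r^*$-null set on which it need not be a probability measure, and the measurability of $x\mapsto\gamma(f^*(x),B)$. None of this is deep; everything else is a direct invocation of \cref{lem:rewrite} and the push-forward change-of-variables formula. Throughout I would use that $\calY$ is finite to replace Borel sets $B\subseteq\calY$ by singletons $\{e_i\}$ and sums for integrals over $\calY$, which streamlines all three verifications.
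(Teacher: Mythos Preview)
Your proposal is correct and follows essentially the same approach as the paper: both directions reduce to \cref{lem:rewrite} by identifying $\gamma$ with the joint law of $(f^*(X),h(X))$, with the converse constructing $h$ by pulling back the disintegration of $\gamma$ through $f^*$. You are somewhat more careful about the measure-theoretic bookkeeping (existence of the disintegration kernel, handling of the $r^*$-null set, measurability) than the paper, which works informally with density-style expressions like $\gamma(s,y)$ and $\int_{{f^*}^{-1}(s)}\cdots\dif\mu^X(x)$ and explicitly recomputes the two marginals rather than citing the coupling remark in \cref{lem:rewrite}; but the underlying argument is the same.
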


\begin{proof}
  We begin with the first direction. Let a randomized classifier $h$ with Markov kernel $\calK$ satisfying $h\sharp\mu^X= q$ be given.  We verify that the coupling constructed above belongs to $\Gamma(r^*,q)$:
  \begin{align}
    \int_{\calY}\gamma(s,y)\dif y 
    &= \int_{\calY} \int_{{f^*}^{-1}(s)} \calK(x,y)\dif \mu^X(x)\dif y \\
    &= \int_{{f^*}^{-1}(s)} \int_{\calY} \calK(x,y)\dif y\dif \mu^X(x)\\
    &= \int_{ {f^*}^{-1}(s)} \dif \mu^X(x) \\
    &= \Pr_{\mu^X}(f^*(X)=s) = r^*(s),
  \end{align}
  where line 3 follows from \cref{def:markov.kernel} of Markov kernels, and line 5 from the definition of push-forward measures;
  \begin{align}
    \int_{\Delta_k}\gamma(s,y)\dif s 
    &= \int_{\Delta_k}\int_{ {f^*}^{-1}(s)} \calK(x,y)\dif \mu^X(x)\dif s \\
    &=  \int_{\calX} \calK(x,y)\dif \mu^X(x) \\
    &= \int_{\calX} \Pr(h(X)=y\mid X=x)\dif \mu^X(x) \\
    &= \Pr(h(X)=y) = q(y),
  \end{align}
where line 3 follows from \cref{def:rand.fn} of randomized function, and line 5 is by assumption.
  
Next, by \cref{lem:rewrite} and the same arguments above,
  \begin{align}
    \err(h) 
    &= \frac12\int_{\Delta_k\times\calY} \|s - y\|_1 \Pr( f^*(X)=s, h(X) = y ) \dif (s,y) \\
    &= \frac12\int_{\Delta_k\times\calY}  \|s - y\|_1 \rbr*{\int_{x} \Pr( f^*(X)=s, h(X) = y, X=x) \dif x} \dif (s,y) \\
    &= \frac12\int_{\Delta_k\times\calY}  \|s - y\|_1 \rbr*{\int_{{f^*}^{-1}(s)} \Pr( h(X) = y, X=x) \dif x} \dif (s,y) \\
    &= \frac12\int_{\Delta_k\times\calY}  \|s - y\|_1 \rbr*{\int_{ {f^*}^{-1}(s)} \Pr( h(X) = y\mid X=x) \dif \mu^X(x)} \dif (s,y) \\
    &= \frac12\int_{\Delta_k\times\calY}  \|s - y\|_1 \, \gamma(s,y) \dif (s,y)
  \end{align}
  as desired, where line 3 is due to $\Pr(f^*(X)=s, h(X) = y, X=x) = \1[f^*(x)=s] \Pr(h(X)=y, X=x)$ for all $(s,y,x)$. 

For the converse, let a coupling $\gamma\in\Gamma(r^*,q)$ be given.  We show that the Markov kernel of the randomized classifier $h$ constructed in the statement satisfies the equality $\gamma(s,y) = \int_{{f^*}^{-1}(s)} \calK(x,y)\dif \mu^X(x)$, then \cref{eq:equiv.full} will follow directly from the same arguments used in the previous part.  Let $s\in\Delta_k$ and $y\in\calY$, and $x'\in {f^*}^{-1}(s)$ arbitrary, then 
\begin{align}
  \gamma(s,y) 
  &= \frac{\gamma(s,y)}{\gamma(s,\calY)} \, \gamma(s,\calY) \\
  &= \frac{\gamma(f^*(x'),y)}{\gamma(f^*(x'),\calY)} \, \gamma(s,\calY) \\
  &= \calK(x', y) \, \gamma(s,\calY) \\
  &= \calK(x', y) \, r^*(s) \\
  &= \calK(x', y) \int_{x\in {f^*}^{-1}(s)} \dif \mu^X(x) \\
  &=  \int_{x\in {f^*}^{-1}(s)} \calK(x', y) \dif \mu^X(x)\\
  &=  \int_{x\in {f^*}^{-1}(s)} \calK(x, y) \dif \mu^X(x),
\end{align}
where line 3 is by construction of $\calK$, line 4 from $\gamma\in\Gamma(r^*,q)$, and the last line is because $\calK(x, y)$ is constant for all $x\in{f^*}^{-1}(s)$, also by construction.
\end{proof}

\begin{proof}[Proof of \cref{thm:tradeoffs}]
\Cref{lem:equiv} implies that for each $a\in[m]$ and fixed $q_a\in\calQ_k$, the minimum error rate on group $a$, denoted by $\err_a$, among randomized classifiers $h_a:\calX\rightarrow\calY$ whose output distribution equals to $q_a$, is given by
\begin{align}
    \min_{h_a:h_a\sharp \mu^X_a= q_a}  \err_a(h_a)  
    \coloneqq \min_{h_a:h_a\sharp \mu^X_a=q_a}  \Pr(h_a(X)\neq Y \mid A=a)
    = \frac12  \,W_1(r^*_a,q_a).
\end{align}

Because of attribute-awareness, we can optimize each component of $h:\calX\times\calA\rightarrow\calY$, $h(\cdot,a)\eqqcolon h_a$ for all $a\in[m]$, independently. So for any set of fixed $q_1,\cdots,q_m\in\calQ_k$,
\begin{equation}
    \min_{h:h_a\sharp \mu^X_a= q_a,\forall a} \err(h)= \sum_{a\in[m]}   \min_{h_a:h_a\sharp \mu^X_a= q_a} w_a \err_a(h_a)  
    =    \sum_{a\in[m]} \frac{w_a}2\,W_1(r^*_a,q_a).
\end{equation}

Incorporating the $\alpha$-DP constraint, we get
\begin{align}
  \min_{h: \DPGap(h)\leq\alpha} \err(h) 
  &= \min_{\substack{q_1,\cdots,q_m\in\calQ_k\\\max_{a,a'}\|q_a-q_{a'}\|_\infty\leq\alpha}}\min_{h:h_a\sharp \mu^X_a= q_a,\forall a} \err(h)  = \min_{\substack{q_1,\cdots,q_m\in\calQ_k\\\max_{a,a'}\|q_a-q_{a'}\|_\infty\leq\alpha}} \sum_{a\in[m]}  \frac{w_a}2\,W_1(r^*_a,q_a). \tag*{\qedhere}
\end{align}
\end{proof}

\begin{proof}[Proof of \cref{thm:post.proc}]
By construction, the Markov kernel of the randomized optimal fair classifier $\bar h^*(x,a)\coloneqq\calT^*_{r^*_a\rightarrow q^*_a}\circ f^*_a(x)$ is 
\begin{equation}
\calK((x,a),y)=\frac{\gamma_a^*(f^*_a(x),y)}{\gamma_a^*(f^*_a(x), \calY)}
\end{equation}
where $\gamma^*_a\in\Gamma(r^*_a,q^*_a)$ is the optimal transport between $r^*_a$ and $q^*_a$.

We verify that the output distributions of $\bar h^*$ equal $q^*_1,\cdots,q^*_m$, thereby it is $\alpha$-DP because the $q^*_a$'s satisfy the constraint in \cref{eq:barycenter}, and its error rate achieves the minimum in \cref{thm:tradeoffs}.

First, for all $y\in\calY$,
  \begin{align}
    \Pr(\bar h^*(X,A)=y\mid A=a) 
    &=  \int_{\calX} \Pr(\bar h^*(x,a) = y) \Pr_{\mu}(X=x \mid A=a)  \dif x \\
    &= \int_\calX\calK((x,a),y)\dif \mu^X_a(x) \\
    &=\int_{\calX} \frac{\gamma_a^*(f^*_a(x),y)}{\gamma_a^*(f^*_a(x), \calY)} \dif \mu^X_a(x) \\
    &= \int_{\Delta_k}\frac{\gamma_a^*(s,y)}{\gamma_a^*(s, \calY)} \rbr*{\int_{{f_a^*}^{-1}(s)} \dif \mu^X_a(x)}\dif s\\
    &= \int_{\Delta_k}\frac{\gamma_a^*(s,y)}{\gamma_a^*(s, \calY)} r^*_a(s)\dif s\\
    &= \int_{\Delta_k}\gamma_a^*(s,y)\dif s = \gamma_a^*(\Delta_k,y) = q^*_a(y),
  \end{align}
where line 3 is due to $\gamma^*_a(f^*_a(x),y)=\gamma^*_a(s,y)$ being constant for all $x\in {f_a^*}^{-1}(s)$.

Similarly, \cref{lem:equiv.full} implies that the error rate on group $a$, denoted by $\err_a(\bar h^*)$, is 
\begin{equation}
  \err_a(\bar h^*) = \frac12\int_{\Delta_k\times\calY} \|s - y\|_1 \dif\gamma_a(s,y),
  \end{equation}
where $\gamma_a\in\Gamma(r^*_a,q^*_a)$ equals to
  \begin{align}
    \gamma_a(s,y) &= \int_{{f_a^*}^{-1}(s)} \calK((x,a),y)\dif \mu^X_a(x) \\
    &= \int_{{f_a^*}^{-1}(s)} \frac{\gamma_a^*(f^*_a(x),y)}{\gamma_a^*(f^*_a(x), \calY)}\dif \mu^X_a(x) \\
    &= \frac{\gamma_a^*(s,y)}{\gamma_a^*(s, \calY)} \int_{{f_a^*}^{-1}(s)} \dif \mu^X_a(x) \\
    &= \gamma_a^*(s,y).
  \end{align}
  So $\err_a(\bar h^*) = \frac12\int_{\Delta_k\times\calY} \|s - y\|_1 \dif\gamma^*_a(s,y) = \frac12 \,W_1(r^*_a,q^*_a)$ because $\gamma^*_a$ is an optimal transport  between $r^*_a$ and $q^*_a$, and $\err(\bar h^*)=\sum_{a\in[m]}w_a\err_a(\bar h^*)=\sum_{a\in[m]}\frac{w_a}2 \,W_1(r^*_a,q^*_a)$, the minimum error rate under $\alpha$-DP.
\end{proof}

\begin{proof}[Proof of \cref{prop:error.propa}]
Recall that the $\alpha$-fair classifier $\bar h$ returned from \cref{alg:post} is
\begin{equation}
  \bar h(x,a)=\calT^*_{r_a\rightarrow q_a}\circ f_a(x),
\end{equation}
where $(q_1,\cdots,q_m)$ is a minimizer of \cref{eq:barycenter} on $r_a\coloneqq f_a\sharp\mu^X_a$ and $\alpha$, and $\calT^*_{r_a\rightarrow q_a}$ is the optimal transport from $r_a$ to $q_a$.

Denote the $L^1$ difference between $f$ and $f^*$ by
\begin{equation}
  \calE\coloneqq \E_{\mu^X}[ \enVert{ f_A(X) - f^*_A(X)}_1 ].
\end{equation}

For the upper bound, by \cref{lem:rewrite} and the triangle inequality,
\begin{align}    
    \err(\bar h)   
    &= \frac1{2} \sum_{a\in[m]} w_a \E \sbr* { \enVert{ \calT^*_{r_a\rightarrow q_a}\circ f_a(X) - f^*_a(X)}_1  \mid A=a} \\
    &\leq \frac1{2}  \sum_{a\in[m]} w_a \rbr*{ \E \sbr{ \enVert{ \calT^*_{r_a\rightarrow q_a}\circ f_a(X) - f_a(X)}_1  \mid A=a} + \E \sbr{ \enVert{ f_a(X) - f^*_a(X)}_1  \mid A=a} } \\
    &= \sum_{a\in[m]} \frac{w_a}2\,W_1(r_a, q_a) + \frac \calE2 \\
    &= \min_{\substack{q'_1,\cdots,q'_m\in\calQ_k\\\max_{a,a'}\|q'_a-q'_{a'}\|_\infty\leq\alpha}}  \sum_{a\in[m]} \frac{w_a}2\,W_1(r_a, q'_a) + \frac\calE2,
\end{align}
where line 3 is because $\calT^*_{r_a\rightarrow q_a}$ is the optimal transport from $r_a$ to $q_a$ under the $\ell_1$ cost, and line 4 is because $(q_1,\cdots,q_m)$ is a minimizer. Let $(q^*_1,\cdots,q^*_m)$ denote a minimizer of \cref{eq:barycenter} on the distributions of Bayes scores $r^*_a\coloneqq f^*_a\sharp\mu^X_a$ and $\alpha$, then by \cref{thm:tradeoffs},
\begin{align}
    \err(\bar h)  -  \err^*_\alpha 
     &\leq \frac1{2}  \rbr*{\min_{\substack{q'_1,\cdots,q'_m\in\calQ_k\\\max_{a,a'}\|q'_a-q'_{a'}\|_\infty\leq\alpha}}\sum_{a\in[m]} w_a\,W_1(r_a, q'_a) - \sum_{a\in[m]} w_a\,W_1(r^*_a, q^*_a)} + \frac\calE2 \\
    &\leq \frac1{2} \rbr*{\sum_{a\in[m]} w_a\,W_1(r_a, q^*_a) - \sum_{a\in[m]} w_a\,W_1(r^*_a, q^*_a)} + \frac\calE2 \\
    &\leq  \sum_{a\in[m]} \frac{w_a}2\, W_1(r_a, r^*_a) + \frac\calE2 \\
    &\leq \frac12  \sum_{a\in[m]} w_a\E \sbr{ \enVert{ f_a(X) - f^*_a(X)}_1  \mid A=a}  + \frac\calE2 \\
    &= \calE,
\end{align}
where the last line is because for each $a\in[m]$, $W_1(r_a, r^*_a)$ is upper bounded by the transportation cost under the coupling given by the joint distribution of $(f_a(X), f^*_a(X))$ conditioned on $A=a$: denote the coupling by $\pi_a$, then clearly $\pi_a\in\Gamma(r_a,r^*_a)$, and $\int_{\Delta_k\times \Delta_k} \|s-s'\|_1\dif\pi_a(s,s') = \int_\calX\|f_a(x)-f^*_a(x)\|_1\dif\mu^X_a(x) = \E \sbr{ \enVert{ f_a(X) - f^*_a(X)}_1  \mid A=a}$.

For the lower bound, again by \cref{lem:rewrite},
\begin{align}    
  \err(\bar h)
    &= \sum_{a\in[m]}\frac{w_a}2\int_{\Delta_k\times\calY} \|s - y\|_1 \Pr(f_a^*(X)=s, \calT^*_{r_a\rightarrow q_a}\circ f_a(X) = y\mid A=a) \dif (s,y) \\
    &\geq \sum_{a\in[m]} \frac{w_a}2\,W_1(r^*_a,  q_a) \\
    &\geq \min_{\substack{q'_1,\cdots,q'_m\in\calQ_k\\\max_{a,a'}\|q'_a-q'_{a'}\|_\infty\leq\alpha}} \sum_{a\in[m]} \frac{w_a}2\,W_1(r^*_a, q'_a)  \\
    &=  \err^*_\alpha,
\end{align}
where line 2 is because the joint distribution of $(f_a^*(X), \calT^*_{r_a\rightarrow q_a}\circ f_a(X))$ conditioned on $A=a$ is a coupling belonging to $\Gamma(r^*_a, q_a)$, thereby the transportation cost represented by the quantity in the preceding line upper bounds $W_1(r^*_a,  q_a)$.
\end{proof}

\section{Proofs for Section~\ref{sec:algs}}

We establish the sample complexities in \cref{thm:finite.sample.discrete,thm:finite.sample.continuous} and the error propagation bound for smoothing (\cref{thm:error.smoothing}), in that order.  We remark that \cref{ass:calibration} of group-wise calibration of the scores can be dropped by adding the error propagation of \cref{prop:error.propa} to the results.  The sample complexity for the general case procedure described in \cref{sec:mixed} via smoothing can also be obtained by combining \cref{thm:finite.sample.continuous,thm:error.smoothing} (provided that the supports of the distributions after smoothing are contained in the simplex).

The generalization bound of the finite case uses an $\ell_1$ (TV) convergence result of empirical distributions, which follows directly from the concentration of multinoulli random variables~\citep{weissman2003InequalitiesL1Deviation}:

\begin{theorem} \label{thm:multinomial}
Let $p\in\Delta_d$, $d\geq2$, and $\hat p_n\sim\frac1n\mathrm{Multinomial}(n,p)$. W.p.\ at least $1-\delta$, $\|p-\hat p_n\|_1\leq \sqrt{{2d\ln(2/\delta)}/{n}}$.
\end{theorem}

\begin{corollary}\label{lem:conv.pmf}
  Let $p$ be a distribution over $\calX$ with finite support, and $x_1,\cdots,x_n\sim p$ be i.i.d.~samples.  Define the empirical distribution $\hat p_n \coloneqq  \frac1n\sum_{i=1}^n \delta_{x_i}$. W.p.\ at least $1-\delta$ over the random draw of the samples, $\|p-\hat p_n\|_1\leq \sqrt{{2|\calX|\ln(2/\delta)}/{n}}$.
\end{corollary}

\begin{theorem}[Hoeffding's Inequality]\label{thm:hoeffding}
  Let $x_1,\cdots,x_n\in\RR$ be i.i.d.\ random variables s.t.\ $a_i\leq x_i\leq b_i$ almost surely. W.p.\ at least $1-\delta$, $\envert{ \frac1n\sum_{i=1}^n (x_i - \E x_i)  } \leq \sqrt{\sum_{i=1}^n(b_i-a_i)^2/2n^2 \cdot \ln2/\delta}$.
\end{theorem}

\begin{proof}[Proof of \cref{thm:finite.sample.discrete}]
We first bound the error rate, then $\DPGap$.

\paragraph{Error Rate.} 
Consider the classification problem $\mu'$ derived from the original $\mu$ under an input transformation given by the joint distribution of $(f_A(X), A, Y)$, as discussed in \cref{sec:post.proc}, on which $\Id$ is the Bayes optimal score due to calibration of $f$'s.  Then by \cref{lem:rewrite} applied on $\mu'$, 
\begin{align}
  \err(\hat h) &=  \frac1{2}\sum_{a\in[m]}w_a\sum_{s\in\calR_a}\sum_{y\in\calY} \|s - y\|_1 \Pr( \Id(X')=s, \calT^*_{\hat r_a\rightarrow \hat q_a}(X') = y ) \\
  &=\frac1{2}\sum_{a\in[m]}w_a\sum_{s\in\calR_a}\sum_{y\in\calY} \|s - y\|_1 \, r_a(s) \, \Pr(\calT^*_{\hat r_a\rightarrow \hat q_a}(s)=y) \\
  &\leq  \frac1{2}\sum_{a\in[m]}w_a\sum_{s\in\calR_a}\sum_{y\in\calY} \|s - y\|_1 \, (\hat r_a(s) + |r_a(s)-\hat r_a(s)|)\, \Pr(\calT^*_{\hat r_a\rightarrow \hat q_a}(s)=y) \\
  &\leq  \sum_{a\in[m]} w_a \rbr*{\frac 12\, W_1(\hat r_a,\hat q_a) + \sum_{s\in\calR_a}\envert{r_a(s)-\hat r_a(s)}},
\end{align}
where line 4 uses the fact that each $\calT^*_{\hat r_a\rightarrow \hat q_a}$ is an optimal transport from $\hat r_a$ to $\hat q_a$.  Let $(q_1,\cdots,q_m)$ denote a minimizer of \cref{eq:barycenter} on the $r_a$'s with $\alpha$, then
\begin{align}
  \err(\hat h) - \err^*_{\alpha,f} 
      &\leq \sum_{a\in[m]}w_a \rbr*{\frac12\rbr*{W_1(\hat r_a,\hat q_a) - W_1(r_a,q_a)} +   \sum_{s\in\calR_a} |r_a(s)-\hat r_a(s)|} \\
      &= O\rbr*{ \sum_{a\in[m]}\hat w_a \rbr*{\frac12\rbr*{W_1(\hat r_a,\hat q_a) - W_1(r_a,q_a)} +   \sum_{s\in\calR_a} |r_a(s)-\hat r_a(s)|}} \\
      &\leq O\rbr*{ \sum_{a\in[m]}\hat w_a \rbr*{\frac12\rbr*{W_1(\hat r_a,q_a) - W_1(r_a,q_a)} +   \sum_{s\in\calR_a} |r_a(s)-\hat r_a(s)|} }\\
      &\leq O\rbr*{\sum_{a\in[m]}\hat w_a \rbr*{\frac12\,W_1(\hat r_a,r_a) +   \sum_{s\in\calR_a} |r_a(s)-\hat r_a(s)|}},
\end{align}
where we defined $\hat w_a=n_a/n$, line 2 is because $w_a=\Theta(\hat w_a)$ for all $a\in[m]$ when $n\geq \Omega(\max_a\ln(m/\delta)/w_a)$,  and line 3 is due to $(\hat q_1,\cdots,\hat q_m)$ being a minimizer of \cref{eq:barycenter} on the $\hat r_a$'s.  Because $\|s-s'\|_1\leq 2\, \1[s\neq s']$, by \cref{prop:tv.w1.01}, $W_1(\hat r_a,r_a)\leq \|\hat r_a-r_a\|_1$, so it follows that
\begin{align}
  \err(\hat h) - \err^*_{\alpha,f} 
  &\leq  O\rbr*{\sum_{a\in[m]} \sum_{s\in\calR_a} \hat w_a \envert{r_a(s)-\hat r_a(s)} } \\
  &\leq O\rbr*{\sum_{a\in[m]} \hat w_a \sqrt{\frac{|\calR_a|\ln\rbr{{m}/{\delta}}}{n_a}} } \\
  &\leq  O\rbr*{\max_a\sqrt{\frac{|\calR_a|\ln\rbr{{m}/{\delta}}}{nw_a}}  }
\end{align}
w.p.~at least $1-\delta$ from $m$ applications of \cref{lem:conv.pmf} and a union bound.

\paragraph{Fairness.} 
In the finite case, we can get a stronger result in terms of the $\ell_1$-norm. Note that for all $a\in[m]$ and $y\in\calY$,
\begin{align}
\MoveEqLeft \sum_{y\in\calY}\envert*{ \Pr(\hat h(X,A)=y\mid A=a) - \hat q_a(y)} \\
 &= \sum_{y\in\calY}\envert*{ \sum_{s\in\calR_a}r_a(s)\Pr(\calT^*_{\hat r_a\rightarrow \hat q_a}(s)=y)  - \sum_{s\in\calR_a}\hat r_a(s)\Pr(\calT^*_{\hat r_a\rightarrow \hat q_a}(s)=y)}\\
 &\leq \sum_{y\in\calY} \sum_{s\in\calR_a}\envert*{ r_a(s)  - \hat r_a(s)} \Pr(\calT^*_{\hat r_a\rightarrow \hat q_a}(s)=y)\\
 &= \sum_{s\in\calR_a}\envert*{ r_a(s)  - \hat r_a(s)} \\
  &\leq  O\rbr*{\sqrt{\frac{|\calR_a|\ln\rbr{{m}/{\delta}}}{nw_a}}  }
\end{align}
w.p.~at least $1-\delta$ from applications of \cref{lem:conv.pmf} and a union bound, where the first equality is because $\calT^*_{\hat r_a\rightarrow \hat q_a}$ is a transport from $\hat r_a$ to $\hat q_a$. This $\ell_1$-norm bound directly implies an $\ell_\infty$-norm bound:
\begin{align}
  \max_{y\in\calY}\envert*{ \Pr(\hat h(X,A)=y\mid A=a) - \hat q_a(y)}
  &\leq \sum_{y\in\calY}\envert*{ \Pr(\hat h(X,A)=y\mid A=a) - \hat q_a(y)} \\
  &=  O\rbr*{\sqrt{\frac{|\calR_a|\ln\rbr{{m}/{\delta}}}{nw_a}}  }.
\end{align}
Lastly, because of the constraint in \cref{eq:barycenter} that $\max_{a,a'\in[m]}\|\hat q_a-\hat q_{a'}\|_\infty\leq\alpha$,
\begin{align}
  \DPGap(\hat h)
  &= \max_{\substack{a,a'\in[m]\\ y\in\calY}}\envert*{ \Pr(\hat h(X,A)=y\mid A=a) - \Pr(\hat h(X,a')=y\mid A=a')} \\
  &\leq \begin{multlined}[t]  
  \max_{\substack{a,a'\in[m]\\ y\in\calY}} \envert*{\hat q_a(y) - \hat q_{a'}(y)} \\[-0.1em]
+\max_{\substack{a,a'\in[m]\\ y\in\calY}}\rbr*{\envert*{ \Pr(\hat h(X,A)=y\mid A=a) - \hat q_a(y)} + \envert*{\Pr(\hat h(X,a')=y\mid A=a')-\hat q_{a'}(y)}}
 \end{multlined}\\
 &\leq \alpha + \max_{a,a'\in[m]} O\rbr*{\sqrt{\frac{|\calR_a|\ln\rbr{{m}/{\delta}}}{nw_a}}  + \sqrt{\frac{|\calR_{a'}|\ln\rbr{{m}/{\delta}}}{nw_{a'}}} }.
\end{align}

The theorem then follows from a final application of union bound.
\end{proof}

The proof of the sample complexities in the continuous case uses the following uniform convergence results with the VC dimension and the pseudo-dimension as the complexity measure.  We omit the proofs, but refer readers to \citep[Theorem 6.8]{shalev-shwartz2014UnderstandingMachineLearning} and \citep[Theorem 11.8]{mohri2018FoundationsMachineLearning}, respectively.   We also need a characterization of the disagreement between the empirical transports $\calT^*_{\hat r_a\rightarrow\hat q}$ and the transport mappings $\calT_a\in\calG_k$ extracted from them on \cref{lst:ln.6,lst:ln.7,lst:ln.8,lst:ln.9} of \cref{alg:cont}, (to be) stated in \cref{cor:transport}.

\begin{theorem}[VC Dimension Uniform Convergence]\label{thm:vc}
  Let $\calH$ be a class of binary functions from $\calX$ to $\{0,1\}$, $p$ a distribution over $\calX\times\{0,1\}$, of which $(x_1,y_1),\cdots,(x_n,y_n)\sim p$ are i.i.d.\ samples.  W.p.\ at least $1-\delta$ over the random draw of the samples, $\forall h\in\calH$,
\begin{equation}
  \envert*{\E_{(X,Y)\sim p}\1[h(X)\neq Y] - \frac1n\sum_{i=1}^n \1[h(x_i)\neq y_i] }\leq  c \sqrt{\frac{d + \ln(1/\delta)}{n}}
\end{equation}
for some universal constant $c$, where $d$ is the VC dimension of $\calH$  (\cref{def:nat} with $k=2$).
\end{theorem}

\begin{theorem}[Pseudo-Dimension Uniform Convergence]\label{thm:pd}
Let $\calH$ be a class of functions from $\calX$ to $\RR$, $\ell:\calX\times\calY\rightarrow\RR_{\geq0}$ a nonnegative loss function upper bounded by $M$, $p$ a distribution over $\calX\times\calY$, of which $(x_1,y_1),\cdots,(x_n,y_n)\sim p$ are i.i.d.\ samples.  W.p.\ at least $1-\delta$ over the random draw of the samples, $\forall h\in\calH$,
\begin{equation}
  \envert*{ \E_{(X,Y)\sim p} \ell(h(X),Y) - \frac1n\sum_{i=1}^n \ell(h(x_i),y_i) } \leq cM\sqrt{\frac{d+\ln(1/\delta)}{n}}
\end{equation}
for some universal constant $c$, where is the pseudo-dimension of $\{(x,y)\mapsto \ell(h(x),y):h\in\calH\}$ (\cref{def:pdim}).
\end{theorem}

One highlight of our proof is that we avoided using the convergence of the empirical measure under Wasserstein distance in our arguments, which would have resulted in sample complexity that is exponential in the number of label classes $k$~\citep{weed2019SharpAsymptoticFinitesample}.  Instead, we leveraged the existence and uniqueness of the semi-discrete simplex-vertex optimal transport in the low complexity function class $\calG_k$, established in \cref{lem:monge.exist,thm:natdim}, whereby we can apply the above uniform bound to $\calG_k$ and achieve a rate that is only polynomial in $k$.

In addition, we remark that the $O(k^2/nw_a)$ term coming from the disagreements between $\calT_a$ and $\calT^*_{\hat r_a\rightarrow \hat q_a}$ on $(x_{a,i})_{i\in[n_a]}$ could potentially be improved to $O(k/nw_a)$ if LP is assumed to return an extremal solution~\citep{peyre2019ComputationalOptimalTransport}, and $\calG_k$ is modified so that the output on points that lie on each boundary can be specified, rather than always tie-broken to the $e_i$ with the largest index $i$.

\begin{proof}[Proof of \cref{thm:finite.sample.continuous}]

We first bound the error rate, followed by $\DPGap$. Recall that the classifier returned from \cref{alg:cont} is
\begin{equation}
  \hat h(x,a) \coloneqq \calT_a\circ f_a(x),
\end{equation}
where each $\calT_a\in\calG_k$ is extracted from the empirical optimal transport $\calT^*_{\hat r_a\rightarrow \hat q_a}$ by \cref{alg:cont}, obtained from calling $\LP(\hat r_1,\cdots,\hat r_m,\alpha)$, where $\hat q_1,\cdots,\hat q_m$ is the minimizer of \cref{eq:barycenter} on the $\hat r_a$'s with $\alpha$.  We will use a complexity result of $\calG_k$ in terms of its pseudo-dimension when associated with $\ell_1$ loss, and a VC bound of binarized versions of $\calG_k$ (to be defined in \cref{eq:bin.g}), which are deferred to \cref{cor:pdim,cor:vcdim}.

\paragraph{Error Rate.}
Consider the classification problem $\mu'$ derived from the original $\mu$ under an input transformation given by the joint distribution of $(f_A(X), A, Y)$, as discussed in \cref{sec:post.proc}, on which $\Id$ is the Bayes optimal score due to calibration of $f$'s.  Then by \cref{lem:rewrite} applied on $\mu'$, 
\begin{align}
  \err(\hat h)
  &= \sum_{a\in[m]}\frac{w_a}2\int_{\Delta_k\times\calY} \|s - y\|_1 \Pr( \Id(X')=s, \calT_a(X') = y ) \dif(s,y) \\
  &= \sum_{a\in[m]}\frac{w_a}2 \E_{S\sim r_a}[ \|\calT_{a}(S)-S\|_1].
\end{align}

Define $s_{a,j}\coloneqq f_a(x_{a,j})$. By \cref{thm:pd,cor:pdim}, and a union bound, we have w.p.~at least $1-\delta$, for all $a\in[m]$,
\begin{align}
  \E_{S\sim r_a}[ \|\calT_{a}(S)-S\|_1] - \frac1{n_a}\sum_{j=1}^{n_a} \|\calT_{a}(s_{a,j})-s_{a,j}\|_1  
  \leq O\rbr*{\sqrt{\frac{k+\ln (m/\delta)}{n_a}} }.
\end{align}
Because each $\calT_a$ is extracted from the empirical optimal transport $\calT^*_{\hat r_a\rightarrow \hat q_a}$ using \cref{lst:ln.6,lst:ln.7,lst:ln.8,lst:ln.9} of \cref{alg:cont}, by the discussion in \cref{sec:transport} and \cref{cor:transport}, they both agree on all of $(x_{a,i})_{i\in[n_a]}$ except for points that lie on the decision boundaries of $\calT_a$.  The boundaries are described by $\binom{k}{2}$ hyperplanes, and because $r_a$ is continuous, no two points in $(x_{a,i})_{i\in[n_a]}$ lie on the same hyperplane almost surely, so the number of disagreements is at most $\binom{k}{2}=k(k-1)/2$, and
\begin{align} 
  \envert*{ \frac1{n_a}\sum_{j=1}^{n_a} \|\calT_{a}(s_{a,j})-s_{a,j}\|_1 - W_1(\hat r_a, \hat q_a) }
  &=\envert*{ \frac1{n_a}\sum_{j=1}^{n_a} \|\calT_{a}(s_{a,j})-s_{a,j}\|_1 - \frac1{n_a}\sum_{j=1}^{n_a} \|\calT^*_{\hat r_a\rightarrow \hat q_a}(s_{a,j})-s_{a,j}\|_1 } \\
  &\leq \frac1{n_a}\sum_{j=1}^{n_a} \|\calT_{a}(s_{a,j})- \calT^*_{\hat r_a\rightarrow \hat q_a}(s_{a,j})\|_1  \\
  &= \frac2{n_a}\sum_{j=1}^{n_a} \1[\calT_{a}(s_{a,j})\neq\calT^*_{\hat r_a\rightarrow \hat q_a}(s_{a,j})] \\
  &\leq O\rbr*{\frac{k^2}{n_a}},
\end{align}
where the first equality is because $\calT^*_{\hat r_a\rightarrow \hat q_a}$ is the optimal transport from $\hat r_a$ to $\hat q_a$.  Therefore, we arrive at w.p.~at least $1-\delta$,
\begin{align}
  \err(\hat h) - \sum_{a\in[m]}\frac{w_a}2\, W_1(\hat r_a, \hat q_a) & \leq O\rbr*{ \sqrt{\frac{k+\ln (m/\delta)}{n_a}}  + \frac{k^2}{n_a} }\\
  &\leq O\rbr*{ \max_a \rbr*{\sqrt{\frac{k+\ln (m/\delta)}{n_a}}  + \frac{k^2}{n_a} }} \\
  &\eqqcolon \calE.
\end{align}

Continuing, let $\calT^*_{r_a\rightarrow q_a}\in\calG_k$ denote the optimal transport from $r_a$ to $q_a$, where the $q_a$'s denote the minimizer of \cref{eq:barycenter} on the $r_a$'s with $\alpha$.  The existence of this transport in $\calG_k$ is due to the problem being semi-discrete and \cref{lem:monge.exist}.  Define $q'_a(y) = \frac1{n_a} \sum_{j=1}^{n_a} \1[\calT^*_{r_a\rightarrow q_a}(s_{a,j})=y]$.  It follows that
\begin{align}
\err(\hat h) - \err^*_{\alpha,f} 
&\leq \calE + \sum_{a\in[m]} \frac{w_a}2\rbr*{ W_1(\hat r_a,\hat q_a) -  W_1(r_a,q_a)}\\
&= \calE + \sum_{a\in[m]} \frac{w_a}2\rbr*{\rbr{W_1(\hat r_a, \hat q_a) - W_1(\hat r_a,q'_a)} + \rbr{W_1(\hat r_a,q'_a) -  W_1(r_a,q_a)}}\\
%&=  \calE + \sum_{a\in[m]}\frac{w_a}2\rbr*{ W_1(\hat r_a,\hat q_a)  - W_1(\hat r_a,q'_a) + W_1(\hat r_a,q'_a) -  W_1(r_a,q_a)} \\
%&\leq  \calE + \sum_{a\in[m]}\frac{w_a}2\rbr*{ W_1(\hat r_a,\hat q_a) - W_1(\hat r_a,q_a)  + W_1(\hat r_a,q'_a) -  W_1(r_a,q_a)} \\
&\leq  \calE + O\rbr*{ \sum_{a\in[m]}\frac{\hat w_a}2\rbr*{\rbr{W_1(\hat r_a, \hat q_a) - W_1(\hat r_a,q'_a)} + \rbr{W_1(\hat r_a,q'_a) -  W_1(r_a,q_a)}} } \\
&\leq  \calE + O\rbr*{ \sum_{a\in[m]}\frac{\hat w_a}2\rbr*{\rbr{W_1(\hat r_a,  q_a) - W_1(\hat r_a,q'_a)} + \rbr{W_1(\hat r_a,q'_a) -  W_1(r_a,q_a)}} } \\
%&\leq \calE + \sum_{a\in[m]} \frac{w_a}2\rbr*{\rbr{W_1(\hat r_a, q_a) - W_1(\hat r_a,q'_a)} + \rbr{W_1(\hat r_a,q'_a) -  W_1(r_a,q_a)}}\\
&\leq \calE + O\rbr*{ \sum_{a\in[m]} \frac{\hat w_a}2 \rbr*{W_1( q_a,q'_a) + \rbr{W_1(\hat r_a,q'_a) -  W_1(r_a,q_a)}}},
\end{align}
where we defined $\hat w_a=n_a/n$, line 3 is because $w_a=\Theta(\hat w_a)$ for all $a\in[m]$ when $n\geq \Omega(\max_a\ln(m/\delta)/w_a)$, line 4 is due to $(\hat q_1,\cdots,\hat q_m)$ being a minimizer of \cref{eq:barycenter} on the $\hat r_a$'s

For the first term in the summand, because both distributions $q_a,q'_a$ are supported on the vertices, so by \cref{prop:tv.w1.01}, $W_1(q_a,q'_a) = \|q_a-q'_a\|_1$, and w.p.~at least $1-\delta$, for all $a\in[m]$,
\begin{align}
  \|q_a-q'_a\|_1
  &= \sum_{i\in[k]} \envert*{\E_{S\sim r_a}\1[\calT^*_{r_a\rightarrow q_a}(S)_i=1] -  \frac1{n_a} \sum_{j=1}^{n_a} \1[\calT^*_{r_a\rightarrow q_a}(s_{a,j})_i=1]}\\
  &= \enVert*{\E_{S\sim r_a}[\calT^*_{r_a\rightarrow q_a}(S)]-  \frac1{n_a} \sum_{j=1}^{n_a} \calT^*_{r_a\rightarrow q_a}(s_{a,j})}_1\\
  &\leq O\rbr*{\sqrt{\frac{k\ln\rbr*{{m}/{\delta}}}{n_a}}},
\end{align}
where the last line follows from \cref{thm:multinomial}, and a union bound over all $a\in[m]$.

For the second term, because then the joint probability of $(\widehat S,\calT^*_{r_a\rightarrow q_a}(\widehat S))$, $\widehat S\sim\hat r_a$, is a coupling belonging to $\Gamma(\hat r_a,q'_a)$, the transportation cost of $\calT^*_{r_a\rightarrow q_a}$ on $\hat r_a$ to $q'_a$ upper bounds $W_1(\hat r_a,q'_a)$, whereby w.p.~at least $1-\delta$,  for all $a\in[m]$,
\begin{align}
  W_1(\hat r_a,q'_a) -  W_1(r_a,q_a)
  &\leq \frac1{n_a}\sum_{j=1}^{n_a} \enVert*{\calT^*_{r_a\rightarrow q_a}(s_{a,j}) - s_{a,j} }_1 - \E_{S\sim r_a} \sbr*{\enVert*{\calT^*_{r_a\rightarrow q_a}(S) - S}_1}\\
  &\leq O\rbr*{\sqrt{\frac{\ln\rbr*{{m}/{\delta}}}{n_a}}}
\end{align}
by \cref{thm:hoeffding}, because $\|\calT^*_{r_a\rightarrow q_a}(s_{a,j}) - s_{a,j}\|_1\leq 2$.

Hence, putting everything together, we conclude with a union bound that
\begin{align}
    \err(\hat h) - \err^*_{\alpha,f} \leq O\rbr*{ \max_a \rbr*{\sqrt{\frac{k\ln (m/\delta)}{n_a}}  + \frac{k^2}{n_a} }}.
\end{align}

\paragraph{Fairness.} 
By applying \cref{thm:vc,cor:vcdim} to the artificial binary classification problem whose data distribution is the joint distribution of $(S,1)$, $S\sim r_a$, and a union bound, w.p.~at least $1-\delta$, for all $i\in[k]$ and $a\in[m]$,
\begin{align}
  \MoveEqLeft \envert*{ \Pr(\hat h(X,A)=e_i\mid A=a) - \frac1{n_a}\sum_{j=1}^{n_a} \1[ \calT_a(s_{a,j})=e_i]  } \\
  &= \envert*{ \E_{S\sim r_a,\calT_a}\1[ \calT_a(S)_i = 1]  - \frac1{n_a}\sum_{j=1}^{n_a} \1[ \calT_a(s_{a,j})_i=1]  } 
  \leq O\rbr*{\sqrt{\frac{k+\ln\rbr{{mk}/{\delta}}}{n_a}}}.
\end{align}

Now, the decision boundaries of the function $s\mapsto \1[\calT_{a}(s)_i = 1]\in\calG_{k,i}$ (defined in \cref{eq:bin.g})  are described by $k$ hyperplanes, and $\calT_a$ is extracted from $\calT^*_{\hat r_a\rightarrow \hat q_a}$, so by the discussion in \cref{sec:transport} and \cref{cor:transport} and the same reasoning used previously, they both agree on all but $k$ points in $(x_{a,i})_{i\in[n_a]}$ almost surely, thereby
\begin{align} 
  \envert*{\frac1{n_a}\sum_{j=1}^{n_a} \1[ \calT_{a}(s_{a,j})=e_i] - \hat q_a(e_i)}
  &=\envert*{ \frac1{n_a}\sum_{j=1}^{n_a} \1[ \calT_{a}(s_{a,j})_i=1] - \frac1{n_a}\sum_{j=1}^{n_a} \1[ \calT^*_{\hat r_a\rightarrow \hat q_a}(s_{a,j})_i=1] } \\
  &\leq \frac1{n_a}\sum_{j=1}^{n_a} \1[\calT_{a}(s_{a,j})_i\neq\calT^*_{\hat r_a\rightarrow \hat q_a}(s_{a,j})_i] 
  \leq O\rbr*{\frac{k}{n_a}}.
\end{align}

Therefore, we conclude that
\begin{align}
  \DPGap(\hat h)
  &= \max_{\substack{a,a'\in[m]\\ y\in\calY}}\envert*{ \Pr(\hat h(X,A)=y\mid A=a) - \Pr(\hat h(X,a')=y\mid A=a')} \\
  &\leq \begin{multlined}[t]  
  \max_{\substack{a,a'\in[m]\\ y\in\calY}} \envert*{\hat q_a(y) - \hat q_{a'}(y)} \\
+\max_{\substack{a,a'\in[m]\\ y\in\calY}} \rbr*{\envert*{ \Pr(\hat h(X,A)=y\mid A=a) - \hat q_a(y)} + \envert*{\Pr(\hat h(X,a')=y\mid A=a')-\hat q_{a'}(y)}} 
 \end{multlined}\\
  &\leq \alpha + O\rbr*{\max_a\rbr*{ \sqrt{\frac{k+\ln\rbr{{mk}/{\delta}}}{n_a}} + \frac k{n_a}}}.
\end{align}

The theorem them follows from noting that $n_a=\Theta(nw_a)$ when $n\geq \Omega(\max_a\ln(m/\delta)/w_a)$.
\end{proof}

\begin{proof}[Proof of \cref{thm:error.smoothing}]
Let the $(q_1,\cdots,q_m)$ denote a minimizer of \cref{eq:barycenter} on the $r_a$'s with $\alpha$, then
\begin{equation}
  \bar h_\rho(x,a)\coloneqq \calT^*_{\tilde r_a\rightarrow q_a}\circ u_\rho\circ f_a(x)
\end{equation}
where $\tilde r_a \coloneqq  u_\rho\sharp r_a$.  

Denote the coupling associated with $\calT^*_{\tilde r_a\rightarrow q_a}$ by $\gamma_a\in\Gamma(\tilde r_a,q_a)$, then the Markov kernel of $\calT^*_{\tilde r_a\rightarrow q_a}\circ u_\rho$ is
\begin{equation}
  \calK(s,T)= \E_{N\sim\rho}\sbr*{\frac{\gamma_{a}(s + N, T)}{\gamma_{a}(s + N, \calY)}} = \int_{\tilde s\in\RR^k} \frac{\gamma_{a}(\tilde s, T)}{\gamma_{a}(\tilde s, \calY)} \dif (\rho*\delta_s)(\tilde s) = \int_{\tilde s\in\RR^k} \frac{\gamma_{a}(\tilde s, T)}{\tilde r_{a}(\tilde s)} \dif (\rho*\delta_s)(\tilde s),
\end{equation}
where $*$ denotes convolution.

Consider the classification problem $\mu'$ derived from the original $\mu$ under an input transformation given by the joint distribution of $(f_A(X), A, Y)$, as discussed in \cref{sec:post.proc}, on which $\Id$ is the Bayes optimal score due to calibration of $f$.  Then by \cref{lem:equiv.full} applied on $\mu'$, the error rate on group $a$, denoted by $\err_a(\bar h_\rho)$, is 
\begin{equation}
  \err_a(\bar h_\rho) = \frac12\int_{\Delta_k\times\calY} \|s - y\|_1 \dif\gamma'_a(s,y), \label{eq:smooth.t}
  \end{equation}
where $\gamma'_a\in\Gamma(r_a,q_a)$ equals to
  \begin{align}
    \gamma'_a(s,y) &= \int_{\Id^{-1}(s)} \calK(s',y)\dif r_a(s') 
    =  \calK(s,y) \, r_a(s) 
    = \int_{\RR^k} \frac{\gamma_{a}(\tilde s, y)}{\tilde r_{a}(\tilde s)} \dif (\rho*\delta_s)(\tilde s) \, r_a(s),
  \end{align}
whereby
\begin{align}
   2\err_a(\bar h_\rho) 
   &= \sum_{y\in\calY} \int_{\Delta_k} \int_{\RR^k}  \|s - y\|_1 \frac{\gamma_{a}(\tilde s, y)}{\tilde r_{a}(\tilde s)} \,  (\rho*\delta_s)(\tilde s)  r_a(s) \dif \tilde s \dif s\\
    &\leq 
   \sum_{y\in\calY} \int_{\Delta_k} \int_{\RR^k}  \|\tilde s - y\|_1   \frac{\gamma_{a}(\tilde s, y)}{\tilde r_{a}(\tilde s)} \,  (\rho*\delta_s)(\tilde s)  r_a(s) \dif \tilde s \dif s \\ 
   &\qquad\qquad\qquad + \sum_{y\in\calY} \int_{\Delta_k} \int_{\RR^k}   \|s-\tilde s\|_1    \frac{\gamma_{a}(\tilde s, y)}{\tilde r_{a}(\tilde s)} \,  (\rho*\delta_s)(\tilde s)  r_a(s) \dif \tilde s \dif s \\
    &\eqqcolon  \sum_{y\in\calY}  \int_{\RR^k}  \|\tilde s - y\|_1   \frac{\gamma_{a}(\tilde s, y)}{\tilde r_{a}(\tilde s)} \,  \rbr*{\int_{\Delta_k} (\rho*\delta_s)(\tilde s)  r_a(s) \dif s} \dif \tilde s  \\ 
  &\qquad\qquad\qquad + \sum_{y\in\calY} \int_{\Delta_k} \int_{\RR^k}   \|n\|_1    \frac{\gamma_{a}(n-s, y)}{\tilde r_{a}(n-s)} \,  (\rho*\delta_s)(n-s)  r_a(s) \dif n \dif s \\
    &= 
   \sum_{y\in\calY}  \int_{\RR^k}  \|\tilde s - y\|_1   \frac{\gamma_{a}(\tilde s, y)}{\tilde r_{a}(\tilde s)} \,   \tilde r_a(\tilde s) \dif \tilde s    +  \int_{\Delta_k} \int_{\RR^k}   \|n\|_1  \dif  \rho(n)  \dif r_a(s)
\\
    &= 
   W_1(\tilde r_a,q_a)   + \E_{N\sim\rho} [\|N\|_1],
\end{align}
where line 3 involves a change of variable $n\coloneqq \tilde s-s$. Then we have
\begin{align}
  \err(\bar h_\rho) - \err^*_{\alpha,f} &\leq  \sum_{a\in[m]} \frac{w_a}2\rbr*{W_1(\tilde{r}_a, q_a) - W_1(r_a, q_a)}  +  \sum_{a\in[m]}  \frac {w_a}2 \E_{N\sim\rho} [\|N\|_1] \\
  &\leq  \sum_{a\in[m]} \frac{w_a}2\,W_1(\tilde{r}_a, r_a)  +   \frac12 \E_{N\sim\rho} [\|N\|_1].
\end{align}
Now, we upper bound the first term.  Consider the coupling $\pi_a\in\Gamma(\tilde{r}_a,r_a)$ given by $\pi_a(\tilde s, s)= \rho(\tilde s- s)r_a(s)$, whereby
\begin{align}
  W_1(\tilde{r}_a, r_a) 
  &= \inf_{\gamma\in\Gamma(\tilde{r}_a,r_a)} \int_{\RR^k\times\Delta_k} \|\tilde s-s\|_1\dif\gamma(\tilde s,s) \\
  &\leq \int_{\RR^k\times\Delta_k} \|\tilde s-s\|_1\dif\pi_a(\tilde s,s) \\
  &=  \iint \|\tilde s-s\|_1\rho(\tilde s-s)r_a(s) \dif \tilde s\dif s \\
  &\eqqcolon  \iint \|(s + n)-s\|_1\rho(n)r_a(s) \dif n\dif s \\
  &= \E_{N\sim\rho} [\|N\|_1].
\end{align}
Substituting this into the result above, we obtain the upper bound.

On the other hand, the lower bound follows from \cref{eq:smooth.t}, where
\begin{equation}
  \err(\bar h_\rho) = \sum_{a\in[m]}\frac{w_a}2\int_{\Delta_k\times\calY} \|s - y\|_1 \dif\gamma'_a(s,y) \geq \sum_{a\in[m]} \frac{w_a}2\,W_1(r_a,q_a) = \err^*_{\alpha,f}. \tag*{\qedhere}
\end{equation}
\end{proof}

\section{Optimal Transport Between Simplex and Vertex Distributions} \label{sec:transport}

The $(k-1)$-dimensional probability simplex is defined for $k\geq 2$ by
\begin{equation}
\Delta_k\coloneqq\cbr*{x\in\RR^k:\sum_{i=1}^k x_i=1, x_j\geq 0,\forall j\in[k]},
\end{equation}
and its $k$ vertices are $\{e_1,\cdots, e_k\}$.  In this section, we study the optimal transport problem between distributions supported on the simplex and its vertices under the $\ell_1$ cost, given by $c(x,y)=\|x-y\|_1$.

By extending each $\Delta_k$ to infinity, we obtain a $(k-1)$-dimensional affine space of
\begin{equation}
  \DD^k \coloneqq  \cbr*{x\in\RR^k:\sum_{i=1}^k x_i=1} \supset\Delta_k. \label{eq:affine}
\end{equation}
Define vectors
\begin{equation}  
v_{ij} \coloneqq  e_j - e_i, \quad \forall i,j\in[k],
\end{equation}
and note that for each $i\in[k]$, $\{v_{ij}:j\neq i\}$ forms a basis for $\DD^k$.  Also, observe the following identity for the $\ell_1$ distance between a point on the simplex and a point on the vertex:
\begin{equation}
  \|x-e_i\|_1 = 1-x_i + \sum_{j\neq i} x_j = 1 - 2x_i + \sum_{j} x_j = 2(1-x_i), \quad\forall x\in\Delta_k,\, i\in[k] \label{eq:simplex.dist}
\end{equation}
(this identity is central to some of the upcoming results).

A main result of this section is that when the transportation problem is semi-discrete, the deterministic (Monge) optimal transport exists, and is unique:

\begin{theorem}\label{lem:monge.exist}
Let $p$ be a continuous probability measure on $\Delta_k$, $q$ a probability measure on $\{e_1,\cdots,e_k\}$, and $c(x,y)=\|x-y\|_1$.  The optimal transport from $p$ to $q$ is a Monge plan, and is unique up to sets of measure zero w.r.t.~$p$.
\end{theorem}

Specifically, the optimal transport $\calT^*_{p\rightarrow q}$ in \cref{lem:monge.exist} is given by the $c$-transform of the Kantorovich potential from the Kantorovich-Rubinstein dual formulation of the transportation problem.  In other words, it belongs to the following parameterized class of deterministic functions:
\begin{equation}
  \calG_k \coloneqq  \cbr*{x\mapsto e_{\argmin_{i\in[k]}(\|x - e_i\|_1 -\psi_i)} : \psi\in\RR^k}\subset \{e_1,\cdots,e_k\}^{\Delta_k} \label{eq:g_k}
\end{equation}
(break ties to the tied $e_i$ with the largest index $i$).  

This function class is therefore of particular interest to various analyses in this paper.  For the generalization bounds in \cref{sec:continuous}, we show that this function class has low complexity in terms of the Natarajan dimension (\cref{def:nat}):

\begin{theorem} \label{thm:natdim} 
$\dN(\calG_k)=k-1$.
\end{theorem}

%From which, we will derive complexity bounds for binarized versions of $\calG_k$ (to be defined in \cref{eq:bin.g}) in terms of the VC dimension and the pseudo-dimension (\cref{cor:vcdim,cor:pdim}).

In addition, note that as illustrated in \cref{fig:nn}, we can equivalently characterize each $g\in\calG_k$ by the center point at which its $k$ decision boundaries all intersect:

\begin{proposition}\label{prop:alt}
  Define the function class $\calG_k'\subset\{e_1,\cdots,e_k\}^{\Delta_k}$ parameterized by $\RR^k$ s.t.~for each $g_z\in\calG'_k$ with parameter $z\in\RR^k$,
  \begin{equation}
    g_z(x) = e_i \quad\text{if}\quad x_j-x_i \leq z_j-z_i \iff x^\T v_{ij} \leq z^\T v_{ij},\quad\forall j\neq i \label{eq:g_k.alt}
  \end{equation}
  (when multiple $e_i$'s are eligible, output the tied $e_i$ with the largest index $i$).
  
  We have $\forall g_\psi\in\calG_k$, $g_\psi=g_z\in\calG'_k$ by setting
  \begin{equation}
    z_i=\frac1k+\frac12 \rbr*{\frac1k\sum_{j=1}^{k}\psi_j - \psi_i},\quad \forall i\in[k] \label{eq:alt.z}
  \end{equation}
  (the choice of $\sum_{i=1}^kz_i=1$ s.t.~$z\in\DD^k$ was arbitrary, due to an extra degree of freedom because the support of $g$ is contained in $\Delta_k$).
  
  Conversely, $\forall g_z\in\calG'_k$, $g_z=g_\psi\in\calG_k$ by setting
  \begin{equation}
    \psi_i=2(z_1-z_i) = 2z^\T v_{i1},\quad \forall i\in[k] \label{eq:alt.psi}
  \end{equation}
  (again, the choice of $\psi_1=0$ was arbitrary).
\end{proposition}

\begin{proof}
  Let $g_\psi\in\calG_k$, then for the $g_z\in\calG'_k$ constructed in \cref{eq:alt.z}, by \cref{eq:simplex.dist},

\begin{alignat}{3}
\text{$g_z(x)=e_i$ is eligible} \quad & \iff \quad &   x_j-x_i          &  \leq  z_j-z_i,                                                 && \quad\forall j\neq i \\
                                \quad & \iff \quad &   x_j-x_i          &  \leq  ( \psi_i - \psi_j)/2,  && \quad\forall j\neq i \\
                                \quad & \iff \quad & 2(x_j-x_i)         &  \leq  \psi_i -\psi_j,                                          && \quad\forall j\neq i \\
                                \quad & \iff \quad & 2(1-x_i) -\psi_i   &  \leq  2(1-x_j)-\psi_j,                                         && \quad\forall j\neq i \\
                                \quad & \iff \quad & \|x-e_i\|_1-\psi_i &  \leq  \|x-e_j\|_1-\psi_j,                                      && \quad\forall j\neq i.
\end{alignat}

Conversely, let $g_z\in\calG'_k$, then for the $g_\psi\in\calG_k$ constructed in \cref{eq:alt.psi},
\begin{alignat}{3}
\text{$g_\psi(x)=e_i$ is eligible} \quad & \iff \quad & \|x-e_i\|_1-\psi_i &  \leq  \|x-e_j\|_1-\psi_j,     && \quad\forall j\neq i  \\
                                   \quad & \iff \quad & 2(x_j-x_i)         &  \leq  2(z_1-z_i)- 2(z_1-z_j), && \quad\forall j\neq i  \\
                                   \quad & \iff \quad &   x_j-x_i          &  \leq  z_j-z_i,                && \quad\forall j\neq i.\tag*{\qedhere}
\end{alignat}
\end{proof}

We will often use this alternative characterization of $\calG_k$.

The remaining proofs are deferred to \cref{sec:transport.proof}.  \Cref{lem:monge.exist} is established via an analysis of the geometry of the simplex-vertex optimal transport, which we discuss in the next section.

\subsection{Geometry of Optimal Transport} \label{sec:ot.geometry}

Let $p$ be an arbitrary distribution supported on $\Delta_k$, and $q$ a (finite) distribution over $\calY\coloneqq\{e_1,\cdots,e_k\}$.  We study the geometric properties of the optimal solution to the (Kantorovich) transportation problem between $p,q$ under the $\ell_1$ cost,
\begin{equation}
  \sup_{\gamma\in\Gamma(p,q)}\int_{\Delta_k\times\calY} \|x-y\|_1 \dif \gamma(x,y), \label{eq:app.transport}
\end{equation}
and note that the supremum can be attained because the supports are compact.

\begin{figure}[t]
    \centering
    \includegraphics[width=\linewidth]{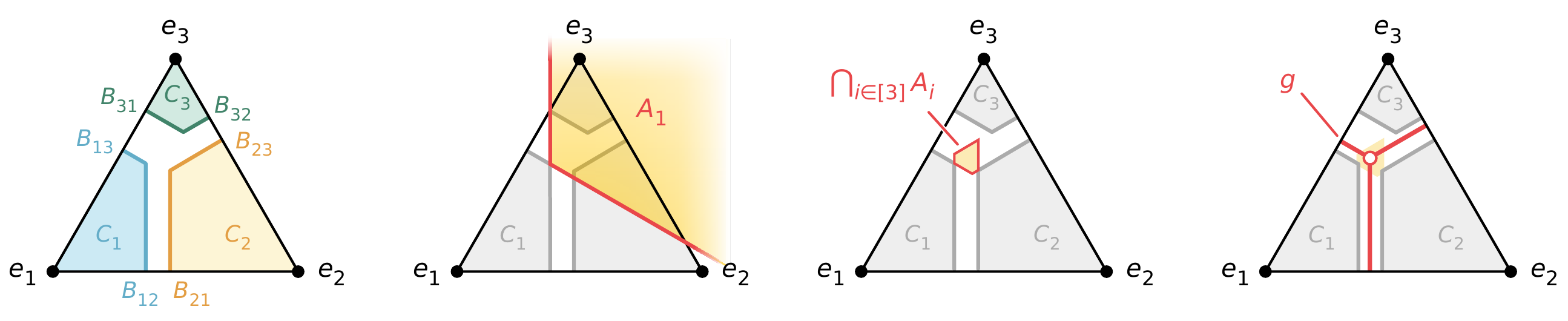}
    \caption{Illustration of the objects defined on \cref{eq:geometry.A,eq:decision.region} for $k=3$.  See \cref{fig:geo.ot} for an example where the intersection is empty, when the underlying transport is not optimal.}
    \label{fig:geo.def}
\end{figure}

First, given a simplex-vertex transport $\gamma\in\Gamma(p,q)$, we define the following geometric objects:
\begin{equation}\label{eq:decision.region}
\begin{gathered}
  B_{ij}\coloneqq \min\cbr*{b \in\RR : \gamma(\{  x\in\Delta_k: x^\T v_{ij} \leq b -1 \}, e_i) = q(e_i) }\cup\{0\},\quad \text{ and}\\
    C_{i}\coloneqq \bigcap_{j\neq i} \cbr{x\in\Delta_k: x^\T v_{ij} \leq B_{ij} -1}. 
\end{gathered}
\end{equation}
For each $i\in[k]$, $B_{ij}$ defines the (smallest offset of the) halfspace in the $v_{ij}$ direction in which all points that are transported by $\gamma$ to $e_i$ are contained, and $C_i$ is formed by the intersections of these halfspaces, also containing all points transported to $e_i$.  See \cref{fig:transport,fig:geo.def} for illustrations.

Now, if $\gamma^*$ is an optimal transport of \cref{eq:app.transport}, then intuition tells us that in order to achieve minimum cost, the halfspaces along each direction should not overlap  (i.e., $B_{ij}+B_{ji} \leq 2$ for all $i\neq j$), and the $C_i$'s should not intersect (except on a set of Lebesgue measure zero).  We show that these intuitions regarding the geometry of $\gamma^*$ are indeed valid, and they are implied by showing that the intersection of the following sets $A_i$ is nonempty (see \cref{fig:geo.def} for an illustration), 
\begin{align}
  A_{i}\coloneqq \bigcap_{j\neq i} \cbr{x\in\DD^k: x^\T v_{ij} \geq B_{ij} -1}. \label{eq:geometry.A}
\end{align}

\begin{proposition}\label{prop:ot.intersect}
  If $\gamma^*$ is a minimizer of \cref{eq:app.transport}, then $\bigcap_{i\in[k]} A_i\neq \emptyset$.
\end{proposition}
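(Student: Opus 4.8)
The plan is to produce an explicit point of $\bigcap_{i\in[k]}A_i$ from an optimal Kantorovich potential. Since $c(x,y)=\|x-y\|_1$ is continuous and $\Delta_k$, $\{e_1,\dots,e_k\}$ are compact, Kantorovich duality applies: there is an optimal, $c$-concave dual potential $\psi\in\RR^k$ on the vertex side, with $\psi^c(x)=\min_{l\in[k]}\bigl(\|x-e_l\|_1-\psi_l\bigr)$ and $\psi_i=\min_{x\in\Delta_k}\bigl(\|x-e_i\|_1-\psi^c(x)\bigr)$, and complementary slackness holds with the given optimal $\gamma$. I claim the point $z^0:=-\psi/2$ already satisfies all the inequalities $z^0_j-z^0_i\ge B_{ij}-1$, $i\ne j$, that cut out the $A_i$; given this, translating $z^0$ along the all-ones direction so that its coordinates sum to $1$ lands it in $\DD^k$ without changing any difference $z_j-z_i$, hence without affecting membership in any $A_i$, which proves the intersection is nonempty.

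The first step converts complementary slackness into these inequalities. By the identity $\|x-e_i\|_1=2(1-x_i)$ on $\Delta_k$ (\cref{eq:simplex.dist}), complementary slackness forces $\gamma(\cdot,e_i)$ to be supported on $\{x\in\Delta_k:2(1-x_i)-\psi_i\le 2(1-x_l)-\psi_l\ \forall l\}=\{x:x_j-x_i\le(\psi_i-\psi_j)/2\ \forall j\ne i\}$. Hence, whenever $q(e_i)>0$, all mass sent to $e_i$ has $x^\top v_{ij}=x_j-x_i\le(\psi_i-\psi_j)/2$, so $b=1+(\psi_i-\psi_j)/2$ is an admissible choice in the definition of $B_{ij}$, giving $B_{ij}-1\le(\psi_i-\psi_j)/2=z^0_j-z^0_i$, exactly as needed.

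The second step handles the remaining pairs, those with $q(e_i)=0$: there $B_{ij}=0$, so the required inequality is $z^0_j-z^0_i\ge -1$, i.e.\ $\psi_j-\psi_i\le 2$. This follows from the a priori bound $\max_l\psi_l-\psi_i\le 2$, valid because $\psi$ is $c$-concave: for $x\in\Delta_k$ and $l^{*}$ attaining $\max_l\psi_l$ one has $\psi^c(x)\le 2(1-x_{l^{*}})-\psi_{l^{*}}\le 2-\max_l\psi_l$, whence $\psi_i=\min_{x}\bigl(2(1-x_i)-\psi^c(x)\bigr)\ge 0-\max_x\psi^c(x)\ge \max_l\psi_l-2$. (The same bound also shows $B_{ij}-1\le(\psi_i-\psi_j)/2$ never conflicts with $B_{ij}\ge 0$.) Combining the two steps yields $z^0_j-z^0_i\ge B_{ij}-1$ for every ordered pair $i\ne j$, and the normalization described above finishes the proof.

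I expect the only real friction to be the degenerate bookkeeping --- zero-mass vertices, the ``$\cup\{0\}$'' truncation in $B_{ij}$, and justifying the passage to a $c$-concave potential while retaining complementary slackness with the given $\gamma$ --- not any substantive obstacle, since the crux is the one identity $z^0_j-z^0_i=(\psi_i-\psi_j)/2\ge B_{ij}-1$. If one prefers to avoid duality, an alternative is to show directly that the difference system $\{z_j-z_i\ge B_{ij}-1\}_{i\ne j}$ is feasible, which by the standard no-negative-cycle criterion amounts to $\sum_m B_{i_mi_{m+1}}\le\ell$ for every cycle $i_1\to\cdots\to i_\ell\to i_1$; this in turn follows from optimality of $\gamma$ by rotating an infinitesimal amount of mass around the cycle --- each piece drawn from where $x^\top v_{i_mi_{m+1}}$ is within $\eta$ of $B_{i_mi_{m+1}}-1$ --- which changes the transportation cost by $-2\epsilon\bigl(\sum_m B_{i_mi_{m+1}}-\ell\bigr)+O(\epsilon\eta)<0$ once the cycle condition fails, contradicting optimality; this variant needs a little extra care when a cycle meets a zero-mass vertex.
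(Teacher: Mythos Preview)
Your duality argument is correct and genuinely different from the paper's route. The paper argues by contradiction: assuming $\bigcap_i A_i=\emptyset$, it invokes a separate geometric lemma (Proposition~\ref{prop:technical.set}, proved via the Nerve theorem from algebraic topology) to produce a point $z$ lying outside every $A_i$, extracts a map $u:[k]\to[k]$ with $z^\top v_{iu(i)}<B_{iu(i)}-1$, finds a cycle in the functional graph of $u$, and then rotates a small amount of mass around that cycle to exhibit a strictly cheaper coupling. Your primary argument bypasses all of this: complementary slackness plus the identity $\|x-e_i\|_1=2(1-x_i)$ pin the support of $\gamma(\cdot,e_i)$ inside the halfspace $\{x^\top v_{ij}\le(\psi_i-\psi_j)/2\}$, and the $c$-concavity bound $\psi_j-\psi_i\le 2$ handles the zero-mass vertices, so $z^0=-\psi/2$ (shifted onto $\DD^k$) is an explicit witness. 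This is shorter, constructive, and avoids the topological machinery entirely; the paper's approach, by contrast, is self-contained in that it never names the dual, but pays for it with the Nerve theorem.

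Your alternative sketch---feasibility of the difference system via the no-negative-cycle criterion, with the cycle condition $\sum_m B_{i_mi_{m+1}}\le\ell$ verified by the same mass-rotation improvement---is essentially the paper's argument with Proposition~\ref{prop:technical.set} swapped out for the standard Bellman--Ford feasibility test. That substitution is a strict simplification: the cycle the paper eventually rotates around is exactly a would-be negative cycle, so the Nerve-theorem detour to find a point $z$ first is unnecessary. The only caveat you flag (zero-mass vertices on the cycle) is real but mild: if $q(e_{i_m})=0$ then $B_{i_mi_{m+1}}=0$ contributes nothing to the sum, and the corresponding edge can be dropped or shortcut without affecting the inequality.
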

The proof is deferred to \cref{sec:transport.proof}.  Note that $\bigcap_{i\in[k]} A_i$ is exactly the set considered on \cref{lst:ln.7} of \cref{alg:cont}, and \cref{prop:ot.intersect} says that if $\gamma^*$ is an optimal transport, then a point $z\in\bigcap_{i\in[k]} A_i$ exists.  The significance of this point is that, the function $g_z\in\calG_k$ with parameter $z\in\DD^k$ agrees with the transport $\calT_{p\rightarrow q}$ associated with $\gamma^*$ only except for points that lie on the boundaries (which have Lebesgue measure zero):

\begin{lemma}\label{cor:transport}
Let $p,q$ be probability measures on $\Delta_k$ and $\{e_1,\cdots,e_k\}$, respectively.  If $\gamma^*\in\Gamma(p,q)$ is a minimizer of \cref{eq:app.transport}, then $\exists \calT\in\calG_k$ with parameters $z\in\DD^k$ satisfying
\begin{equation}
  \gamma(x, \calT(x)) = p(x), \quad \forall x\in\supp(p)\setminus \bigcup_{i\neq j}\{x\in\DD^k: x^\T v_{ij}=z^\T v_{ij}\}.
\end{equation}
\end{lemma}

This result underlies many discussions throughout our presentation: \textit{(i)}~the construction used in its proof led to \cref{lst:ln.6,lst:ln.7,lst:ln.8,lst:ln.9} of \cref{alg:cont} for extracting post-processing functions from the empirical optimal transports, \textit{(ii)}~it embodies the argument used in the proof of \cref{thm:finite.sample.continuous} regarding the disagreements between the extracted functions and the empirical transports, and \textit{(iii)}~the existence part of \cref{lem:monge.exist} is a direct consequence, since the set on which disagreements may occur always has measure zero when $p$ is continuous.

\begin{proof}
Let $z\in \bigcap_{i\in[k]} A_i$, which exists due to  \cref{prop:ot.intersect}.  Then let $\calT\in\calG_k$ with parameter $z$, which we show agrees with $\gamma^*$ on all  $x\in\supp(p)\setminus \bigcup_{i\neq j}\{x\in\DD^k: x^\T v_{ij}=z^\T v_{ij}\}$: suppose $\calT(x)=e_i$, then $\calT(x)=e_i\iff x^\T v_{ij}\leq z^\T v_{ij}$ by construction.  Furthermore, by the definition of $A_i$ in \cref{eq:geometry.A} of $\gamma^*$, $x^\T v_{ij} < z^\T v_{ij} \leq B_{ij} - 1$ for all $j\neq i$, so we must have that $\gamma^*(x,e_j)=0,\;\forall j\neq i\implies \gamma^*(x,e_i)=p(x)$.  Otherwise, it would contradict the definition of $B_{ij}$ in \cref{eq:decision.region}.
\end{proof}

\subsection{Omitted Proofs for Section~\ref{sec:transport}} \label{sec:transport.proof}

\begin{proof}[Proof of \cref{lem:monge.exist}]
For existence, \cref{cor:transport} provides a $\calT\in\calG_k$ that agrees with the optimal transport almost everywhere, since the set of points lying on the boundaries has measure zero w.r.t.~$p$ by continuity.

Next, we prove uniqueness.  Let $\gamma,\gamma'\in\Gamma(p,q)$ be two optimal transports,
% that differ on a set with nonzero measure.  
% We say that $\gamma$ and $\gamma'$ disagree w.r.t.~vertex $e_i$ if there exists a set $S\subseteq\Delta_k$ s.t.~$\gamma(S,e_i)\neq\gamma'(S,e_i)$. 
% Assume w.l.o.g.~that they disagree w.r.t.~all vertices.  Otherwise, we can remove the vertices that they agree w.r.t.~from the transportation problem along with the densities transported to them, and consider the resulting lower dimensional instance where the transports disagree w.r.t.~all vertices.
and $\calT, \calT'\in\calG_k$ mappings provided by \cref{cor:transport} that agree with $\gamma,\gamma'$ a.e. We will show that $\calT=\calT'$ a.e., and so is $\gamma=\gamma'$.
% contradicting the assumption that $\gamma,\gamma'$ differ on a set of nonzero measure and establishing uniqueness.

Denote the parameter (i.e.,~center of the decision boundaries) of $\calT$ (analogously for $\calT'$) by $z\in\DD^k$, the decision boundaries by $B_{ij}\coloneqq z^\T v_{ij}+1$, and the decision regions by $C_{i}\coloneqq \bigcap_{j\neq i} \cbr{x\in\Delta_k: x^\T v_{ij} \leq B_{ij} -1}$. By definition of $\calG_k$, $\Delta_k=\bigsqcup_{i=1}^k C_i$, and for all $x\in\Delta_k$, $\calT(x)=e_i\iff x\in C_i$ almost surely, therefore, $\gamma(C_i,e_i)=q(e_i)$ because $\calT$ agrees with the transport $\gamma$ a.e. 

Define the difference in the boundaries between $\calT$ and $\calT'$ by $d_{ij}\coloneqq B'_{ij} - B_{ij}$, and note that
\begin{equation}
    d_{ij} = d_{nj} - d_{ni} \quad\text{with}\quad d_{\ell\ell}\coloneqq  0, \quad \forall i,j,n,\ell\in[k], \label{eq:diff}
\end{equation}
which follows from the observation that 
\begin{equation}
    B_{ij}=z^\T (v_{nj} - v_{ni}) + 1 = B_{nj} - B_{ni} + 1 \quad\text{with}\quad  B_{\ell\ell}\coloneqq 0,\quad \forall i,j,n,\ell\in[k].
\end{equation}
% If $d_{ij}=0$ for all $i,j$, then $C'_i=C_i$ for all $i$, and therefore $\calT=\calT'$.  
Construct a directed graph of $k$ nodes where $(i,j)$ is an edge iff $d_{ij}>0$.  Note that this graph is acyclic: first, it cannot contain cycles of length 2, otherwise, $(i,j),(j,i)\in E\implies d_{ij}+d_{ji}>0$ contradicts the fact that $d_{ij}+d_{ji}=0$ by definition; next, consider the shortest cycle, and let $(i,j),(n,i)$, $j\neq n$ denote two edges contained in it. It follows that $d_{ij},d_{ni}>0$, and $d_{nj}\leq0$, or it is not the shortest cycle. Then by \cref{eq:diff}, $0<d_{ij}=d_{nj}-d_{ni}< 0$, which is a contradiction.  

Now, we show by strong induction on the reverse topological order of the graph nodes that for all $i\in[n]$, $p(C_i\oplus C'_i)=0$ where $\oplus$ denotes the symmetric difference of the sets.
% , and $p(\bigsqcup_{j\in J\cup\{i\}}C_j\setminus \bigsqcup_{j\in J\cup\{i\}}C'_j)=0$ where $J\subseteq [n]\setminus\{i\}$ denotes the set of nodes directed to from $i$.
For the base case, let $i$ denote a sink node in the graph, then we have that $d_{ij}\leq 0$ for all $j$, meaning that $C'_i\subseteq C_i$.  Then $q(e_i)=\gamma'(C'_i,e_i)=p(C'_i)\leq p(C_i)=\gamma(C_i,e_i)=q(e_i)$.  
If the inequality is strict, then it is a contradiction; otherwise, combining the equality with $\calT(x)=e_i\iff x\in C_i$ a.s.~(and $\calT'$ analogously) implies $p(C_i\oplus C'_i)=p(C_i\setminus C'_i)=0$.
For the inductive case, let $i$ denote a node, and $J\subseteq [n]\setminus\{i\}$ the set of nodes directed to from $i$, then by construction $\bigsqcup_{j\in J\cup\{i\}}C'_j\subseteq \bigsqcup_{j\in J\cup\{i\}}C_j$. Let $F_i\coloneqq C_i\cap C'_i$, and note that for all $x\in C'_i\setminus F_i$, $\calT'(x)=e_i$ and $\calT(x)\in\{e_j:j\in J\setminus\{i\}\}$. Therefore, $C'_i\setminus F_i \in\bigcup_{j\in J} (C_j\oplus C'_j)$, and by the inductive hypothesis, $p(C'_i\setminus F_i)\leq0$. It then follows that $p(C'_i)\leq p(C_i)$, and subsequently $p(C_i\oplus C'_i)=p(C_i\setminus C'_i)=0$ by the same arguments used in the base case.

Therefore, $p(\{x:\calT(x)\neq \calT'(x)\})\leq \sum_{i=1}^k p(C_i\oplus C'_i)=0$, so $\calT=\calT'$ a.e.
\end{proof}

The proof of \cref{prop:ot.intersect} needs the following technical result, which at a high-level states that if a collection of $v_{ij}$-aligned convex sets do not intersect, then they cannot cover the entire space:

\begin{proposition}\label{prop:technical.set}
Let $B\in\RR^{k\times k}$ arbitrary, and define $S_i\coloneqq  \bigcap_{j\neq i} \cbr{x\in \DD^k: x^\T v_{ij} \leq B_{ij} -1}$ for each $i\in[k]$. We have $\bigcap_{i\in[k]} S_i= \emptyset\implies \bigcup_{i\in[k]} S_i\neq \DD^k$.
\end{proposition}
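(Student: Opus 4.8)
The plan is to recognize $\bigcap_{i\in[k]}S_i$ as the solution set of a system of difference constraints, and then, when this set is empty, to convert a negative-weight cycle of the associated digraph into an explicit point of $\DD^k$ lying outside every $S_i$.

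Concretely, write $b_{ij}:=B_{ij}-1$. Each inequality defining $S_i$, namely $x^\top v_{ij}\le B_{ij}-1$, reads $x_j-x_i\le b_{ij}$ and involves only a difference of coordinates, while the constraint $x\in\DD^k$ is the single affine constraint $\mathbf 1^\top x=1$ with $\mathbf 1^\top v_{ij}=0$; hence translating a point along $\mathbf 1$ preserves membership in every $S_i$ and can always bring it onto $\DD^k$. Thus $\bigcap_i S_i\neq\emptyset$ iff the system $\{x_j-x_i\le b_{ij}:i\neq j\}$ is feasible over $\RR^k$, which by the standard theory of difference constraints (e.g.\ via shortest-path potentials / Bellman--Ford) holds iff the complete digraph $G$ on vertex set $[k]$ with weight $b_{ij}$ on arc $i\to j$ has no negative-weight cycle. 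So, assuming $\bigcap_i S_i=\emptyset$, I would fix a negative-weight cycle of $G$ and, by passing to one of minimum length, assume it is simple: $C=(c_1\to c_2\to\cdots\to c_\ell\to c_1)$ with $\ell\ge 2$ and $\beta:=\sum_{t=1}^{\ell}b_{c_tc_{t+1}}<0$ (indices mod $\ell$).

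Next I would build $x\in\RR^k$ with $x\notin S_i$ for every $i$. Put $\epsilon:=-\beta/\ell>0$. Along the cycle set $x_{c_1}:=0$ and $x_{c_{t+1}}:=x_{c_t}+b_{c_tc_{t+1}}+\epsilon$ for $t=1,\dots,\ell-1$; since $c_1,\dots,c_\ell$ are distinct this is well defined, and because $\beta+\ell\epsilon=0$ one checks the closing identity $x_{c_1}-x_{c_\ell}=b_{c_\ell c_1}+\epsilon$, so along \emph{every} arc of $C$ we get $x_{c_{t+1}}-x_{c_t}=b_{c_tc_{t+1}}+\epsilon>b_{c_tc_{t+1}}$. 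For every index $i$ not on $C$, set $x_i:=-M$ with $M>\max_{i\notin C}b_{ic_1}$ (take $M:=1+\max_{i\notin C}|b_{ic_1}|$, and this case is vacuous if $C$ exhausts $[k]$), so that $x_{c_1}-x_i=M>b_{ic_1}$. Then for each $i\in[k]$ there is a $j\neq i$ with $x_j-x_i>b_{ij}$ — take $j=c_{t+1}$ when $i=c_t\in C$, and $j=c_1$ otherwise — equivalently $x^\top v_{ij}>B_{ij}-1$, whence $x\notin S_i$. Translating $x$ along $\mathbf 1$ onto $\DD^k$ preserves all these strict inequalities, producing a point of $\DD^k$ outside $\bigcup_i S_i$; therefore $\bigcup_{i\in[k]}S_i\neq\DD^k$.

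The one step that carries the argument is the first reduction — seeing that $\bigcap_i S_i$ is controlled by negative cycles of the digraph $G$; after that the construction is fully explicit and the remaining verifications are elementary. The minor points to get right are that the negative cycle must be chosen simple so that the coordinates $x_{c_t}$ are assigned unambiguously, and that the lone $\DD^k$ constraint is harmless because every inequality in play is invariant under translation along $\mathbf 1$.
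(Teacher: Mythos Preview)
Your argument is correct and takes a genuinely different route from the paper. The paper proves the contrapositive by strong induction on $k$, first establishing that every proper sub-intersection $\bigcap_{i\in J}S_i$ (with $|J|\le k-1$) is nonempty, and then invoking the Nerve theorem from algebraic topology: if the full intersection were empty, the nerve of $\{S_i\}$ would be the boundary of a $(k-1)$-simplex, whose $(k-2)$-th homology is $\ZZ$, contradicting $\bigcup_i S_i\cong\DD^k\cong\RR^{k-1}$. Your approach instead recognizes $\bigcap_i S_i$ as the solution set of a system of difference constraints $x_j-x_i\le b_{ij}$ (modulo the harmless affine constraint $\mathbf 1^\top x=1$), reduces emptiness to the existence of a negative cycle in the weighted digraph, and then turns that cycle into an explicit witness point outside every $S_i$. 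Your proof is more elementary---it avoids homology entirely and stays within linear algebra and graph theory---and is constructive, yielding the point of $\DD^k\setminus\bigcup_i S_i$ explicitly. The paper's approach, by contrast, yields the stronger intermediate fact that all proper sub-intersections are nonempty, which may be of independent use but is not needed for the proposition as stated.
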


While this could be proved with elementary arguments, for clarity, we use known results from algebraic topology in the final steps of our proof.  The tools and concepts that we use include homotopy equivalence and homology groups (we omit the definition for the latter, but refer readers to \citep{spanier1981AlgebraicTopology} for a textbook).  The definitions are provided below for completeness; readers may skip to the main proof.

\begin{definition}[Homotopy]
  Let $\calX,\calY$ be topological spaces, and $f,g:\calX\rightarrow\calY$ two continuous functions.  A homotopy between $f$ and $g$ is a continuous function $h:\calX\times[0,1]\rightarrow\calY$, such that $h(x,0)=f(x)$ and $h(x,1)=g(x)$ for all $x\in\calX$.  We say $f,g$ are homotopic if there exists a homotopy between them.
\end{definition}

\begin{definition}[Homotopy Equivalence]
  Let $\calX,\calY$ be topological spaces. If there exist continuous maps $f:\calX\rightarrow\calY$ and $g:\calY\rightarrow\calX$ such that $g\circ f$ is homotopic to the identity map $\Id_\calX$ on $\calX$, and $f\circ g$ is homotopic to $\Id_\calY$, then $\calX$ and $\calY$ are homotopy equivalent, denoted by $\calX\cong\calY$.
\end{definition}

\begin{fact}[Homology]
(See~\citep{spanier1981AlgebraicTopology} for a textbook).
\begin{enumerate}
\item The homology groups of $\RR^d$, denoted by $H_n(\RR^d)$ for $n\in\{0,1,2,\cdots\}$, are
\begin{equation}
    H_n(\RR^d) = \begin{cases}
        \ZZ & \text{if $n=0$} \\
        \{0\} &\text{else.}
    \end{cases}
\end{equation}
\item The homology groups of the $d$-dimensional simplex, $\Delta_{d+1}$, are
\begin{equation}
    H_n(\Delta_{d+1}) = \begin{cases}
        \ZZ & \text{if $n=0$} \\
        \{0\} &\text{else.}
    \end{cases}
\end{equation}

\item The homology groups of the $d$-dimensional simplex without its interior, $\partial \Delta_{d+1}$, are
\begin{equation}
    H_n(\partial \Delta_{d+1}) = \begin{cases}
        \ZZ & \text{if $n=0$ or $d-1$} \\
        \{0\} &\text{else.}
    \end{cases}
\end{equation}

\item If $\calX\cong\calY$, then $H_n(\calX)=H_n(\calY)$ for all $n$.
\end{enumerate}
\end{fact}

Clearly, the affine space $\DD^{k}\cong\RR^{k-1}$ via a rotation and a translation.
We also cite the Nerve theorem~\citep[Theorem 3.1]{bauer2022UnifiedViewFunctorial}:

\begin{theorem}[Nerve]\label{thm:nerve}
  Let $\calS=\{S_1,\cdots,S_n\}$ be a finite collection of sets, and define its \textnormal{nerve} by
  \begin{equation}
    \Nrv(\calS) =  \cbr*{ J\subseteq [n] : \bigcap_{i\in J} S_i \neq \emptyset }.
  \end{equation}
  If the sets $S_i$'s are convex closed subsets of $\RR^d$, then $\Nrv(\calS)\cong\bigcup_{i\in[n]}S_i$.
\end{theorem}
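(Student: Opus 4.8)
The plan is to prove this via the classical \emph{Mayer--Vietoris blowup} (homotopy colimit) argument, constructing one auxiliary space that maps by homotopy equivalences onto both sides. Write $X:=\bigcup_{i\in[n]}A_i$, and for nonempty $J\subseteq[n]$ set $A_J:=\bigcap_{i\in J}A_i$; since each $A_i$ is convex and closed, so is every $A_J$, hence $A_J$ is either empty or contractible, and $J$ is a face of $\Nrv(\calA)$ exactly when $A_J\neq\emptyset$. Let $\Delta^{[n]}$ be the standard simplex on vertex set $[n]$, identify its points with probability vectors $t$ and write $\mathrm{supp}(t)=\{i:t_i>0\}$, and for $x\in X$ put $\sigma(x)=\{i:x\in A_i\}\neq\emptyset$. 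Define
\begin{equation}
  B:=\cbr*{(x,t)\in X\times\Delta^{[n]} : \mathrm{supp}(t)\subseteq\sigma(x)},
\end{equation}
with the two projections $p\colon B\to X$, $(x,t)\mapsto x$, and $q\colon B\to\Nrv(\calA)$, $(x,t)\mapsto t$ (here I also write $\Nrv(\calA)$ for the geometric realization; $q$ lands in it because $\mathrm{supp}(t)\subseteq\sigma(x)$ forces $x\in A_{\mathrm{supp}(t)}$, so $A_{\mathrm{supp}(t)}\neq\emptyset$). The theorem will follow once I show $p$ and $q$ are both homotopy equivalences, since then $X\xleftarrow{\,p\,}B\xrightarrow{\,q\,}\Nrv(\calA)$ gives $X\cong\Nrv(\calA)$.

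The conceptual point is that $B$ is the homotopy colimit of the diagram $\mathcal{D}\colon J\mapsto A_J$ indexed by the face poset of $\Nrv(\calA)$ (with structure maps the inclusions $A_J\hookrightarrow A_{J'}$ for $J'\subseteq J$): then $p$ is the canonical comparison to $\mathrm{colim}\,\mathcal{D}$, which one checks equals $X$, and $q$ is induced by the terminal transformation $\mathcal{D}\to\ast$, whose homotopy colimit is the order complex of the face poset, canonically homeomorphic to $\Nrv(\calA)$. So both claims reduce to two standard facts about homotopy colimits over a poset: (i) if a diagram is \emph{cofibrant}, the comparison $\mathrm{hocolim}\,\mathcal{D}\to\mathrm{colim}\,\mathcal{D}$ is a homotopy equivalence; and (ii) a levelwise homotopy equivalence between cofibrant diagrams induces a homotopy equivalence on homotopy colimits, applied to $\mathcal{D}\to\ast$ using that each nonempty $A_J$ is contractible. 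An equivalent, more hands-on route for $q$ is to induct on the skeleta of $\Nrv(\calA)$, attaching one simplex $J$ at a time and using the gluing lemma together with the fact that the preimage of the newly attached cell deformation retracts onto the preimage of its boundary with a contractible $A_J$ glued in.

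The one genuine ingredient both facts require is that the diagram $\mathcal{D}$ is cofibrant, i.e.\ that each inclusion $A_J\hookrightarrow A_{J'}$ ($J'\subseteq J$) is a closed cofibration. This is exactly where the hypothesis that the $A_i$ are convex subsets of $\RR^d$ is used beyond contractibility of the intersections: a closed convex subset of $\RR^d$ is an ANR, a closed ANR subset of an ANR is a neighborhood deformation retract, hence the inclusion is a cofibration. Equivalently, for $p$ one can argue concretely: build a ``partition of unity'' of continuous $\rho_i\colon X\to[0,1]$ with $\sum_i\rho_i\equiv1$ and $\rho_i(x)>0\Rightarrow x\in A_i$ — possible because $X$ is metric and the cover is finite, closed, and cofibrant — and then $x\mapsto(x,\rho(x))$ sections $p$, with the straight-line homotopy in the simplex coordinate deformation-retracting $B$ onto that section, the homotopy staying in $B$ since $\rho(x)$ has support in $\sigma(x)$.

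I expect the cofibrancy / partition-of-unity step to be the main obstacle: the nerve theorem is false for arbitrary finite \emph{closed} covers with contractible intersections, so one cannot get away with a purely fiberwise-contractibility argument, and the convexity (ANR) hypothesis must be invoked carefully to supply the missing regularity. Everything else — the identifications $\mathrm{colim}\,\mathcal{D}=X$ and $\mathrm{hocolim}(\ast)\cong\Nrv(\calA)$, the two homotopy-colimit lemmas, and the final composition — is formal once that is in place.
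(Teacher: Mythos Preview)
The paper does not prove this statement at all: \cref{thm:nerve} is explicitly \emph{cited} from \citet[Theorem 3.1]{bauer2022UnifiedViewFunctorial} (``We also cite the following Nerve theorem''), and is then used as a black box inside the proof of \cref{prop:technical.set}. There is no ``paper's own proof'' to compare your attempt against.

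That said, what you have written is the standard Mayer--Vietoris blowup / homotopy-colimit proof of the nerve theorem for good closed covers, and the overall architecture is correct: the auxiliary space $B$, the two projections $p,q$, the identification of $B$ with $\mathrm{hocolim}\,\mathcal{D}$, and the reduction to cofibrancy of the inclusions $A_J\hookrightarrow A_{J'}$ via the ANR property of closed convex sets. The one place to be careful is your ``concrete'' alternative for $p$: a partition of unity with $\rho_i(x)>0\Rightarrow x\in A_i$ subordinate to a \emph{closed} cover does not come for free the way it does for open covers, and your parenthetical ``possible because $X$ is metric and the cover is finite, closed, and cofibrant'' is hiding real work --- you are effectively reinvoking the NDR/cofibration structure to get such functions, so this route is not really an independent shortcut. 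If you want a self-contained argument, stick with the homotopy-colimit lemmas and the ANR cofibrancy step; otherwise, since the paper only needs that the nerve and the union have isomorphic homology, a purely homological nerve lemma (\v{C}ech via a Mayer--Vietoris spectral sequence) would also suffice and avoids the point-set subtleties.
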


\begin{proof}[Proof of \cref{prop:technical.set}]

We prove the contrapositive statement of $\bigcup_{i\in[k]} S_i= \DD^k\implies \bigcap_{i\in[k]} S_i\neq \emptyset$ by strong induction on the dimensionality $k$.  For the base case of $k=2$, observe that $S_1\cup S_2 = \{x: x^\T v_{12}\leq B_{12}-1 \text{ or } x^\T v_{12}\geq 1- B_{21}\}$, so $S_1\cup S_2 =\DD^2$ if and only if $B_{12} -1 \geq 1-B_{21}$, in which case the point $(1-B_{12}/2,B_{12}/2)\in S_1\cap S_2$, thereby the intersection is nonempty.

For $k>2$, suppose $\bigcup_{i\in[k]} S_i=\DD^k$. Our goal is to show that for all $J\subset[k]$, $\bigcap_{j\in J}S_j \neq \emptyset$.  Recall that 
\begin{equation}
    S_{i}= \bigcap_{j\in[k],j\neq i} \cbr{x\in \DD^k: x^\T v_{ij} \leq B_{ij} -1},
\end{equation}
and we define for any $J\subset[k]$ and $i\in[k]$
\begin{align}
  S'_{J,i} \coloneqq  \bigcap_{j\in J, j\neq i} \cbr{x\in \DD^k: x^\T v_{ij} \leq B_{ij} -1},
\end{align}
(we will drop the subscript $J$ as the discussions below will focus on a single $J$). 
% and for all $i\in J$,
% \begin{align}
% [S'_i]_J \coloneqq  \bigcap_{j\in J, j\neq i} \cbr{x\in \DD^J: x^\T v_{ij} \leq B_{ij} -1}.
% \end{align}
% $[S'_i]_J$ is interpreted as the projection of the set of points in $S'_i$ onto $\DD^J$.

We first show that $\bigcap_{i\in J}S_i\neq \emptyset$ for any $J\subset [k]$ with $|J|\leq k-1$.  By assumption, $\DD^k = \bigcup_{i\in[k]} S_i \subset \bigcup_{i\in [k]}S'_i$, and we argue that $\bigcup_{i\in J}S'_i = \DD^k$.
% otherwise we can find a line $L$ in $\DD^k$ that is not contained in $\bigcup_{i\in J} S'_i$ in its entirety, and not contained in $\bigcup_{i\notin J} S'_i$ partially, hence contradicting  $\bigcup_{i\in [k]}S'_i = \DD^k$.  
Suppose not, then let $z\notin \bigcup_{i\in J}S'_i$, and consider the line
\begin{equation}
L\coloneqq \cbr*{z+a{\sum_{i\notin J, j\in J}} v_{ij}: a \in\RR  }.
\end{equation}
First, no part of this line is contained in $\bigcup_{i\in J} S'_i$, because it does not contain the point $z\in L$, and $L$ runs parallel to and hence never intercepts any of the halfspaces defining each $S'_i$ for $i\in J$: let $i,j\in J$, $i\neq j$, then
\begin{equation}
    v_{ij} ^\T \sum_{n\notin J, m\in J}v_{nm} = \sum_{n\notin J, m\in J} (e_j^\T e_m - e_j^\T e_n -  e_i^\T e_m + e_i^\T e_n)= 1 - 0 - 1 + 0 = 0.
\end{equation}
Second, this line is partially not contained any $S'_i = \bigcap_{j\in J}\{x\in \DD^k : x^\T v_{ij}\leq B_{ij} -1 \}$ for $i\notin J$: let $i\notin J$ and $j\in J$, then
\begin{equation}
    v_{ij} ^\T \sum_{n\notin J, m\in J}v_{nm} = \sum_{n\notin J, m\in J} (e_j^\T e_m - e_j^\T e_n -  e_i^\T e_m + e_i^\T e_n)= 1 - 0 - 0 + 1 = 2;
\end{equation}
so points on $L$ with sufficiently large $a$'s are not contained in $\bigcup_{i\notin J} S'_i$, contradicting the assumption that $\bigcup_{i\in [k]}S'_i = \DD^k$.

Back to proving that  $\bigcap_{i\in J}S_i\neq \emptyset$ for any $J\subset [k]$ with $|J|\leq k-1$.  Since $\bigcup_{i\in J}S'_i = \DD^k$, by applying the inductive hypothesis to a $|J|$-dimensional instance derived from $\{S'_i:i\in J\}$ by removing the axes $\{e_i:i\notin J\}$, we get $\exists z'\in\bigcap_{i\in J} S'_i$.  Using similar arguments above, it can be shown that the line $L'\coloneqq \{z'+a\sum_{i\notin J, j\in J}v_{ij}: a \in\RR\}$ is entirely contained in $\bigcap_{i\in J} S'_i$ and partially in $\bigcap_{i \in J} S_i = \rbr{\bigcap_{i\in J}S'_i} \cap \rbr{\bigcap_{i\notin J, j\in J}\{x\in \DD^k : x^\T v_{ij}\leq B_{ij} -1 \}}$, so the intersection is nonempty.

We have thus established that any intersection of the strict subset of $\{S_i\}_{i\in[k]}$ is nonempty, and we will conclude with the Nerve theorem.
% use this to show that $\bigcap_{i \in [k]} S_i\neq\emptyset$ via the Nerve theorem.  
We have $\forall J\subset [k]$, $1\leq |J|\leq k-1$, $J\in\Nrv(\{S_1,\cdots, S_k\})$. Because we assumed in the beginning that $\bigcup_{i\in[k]} S_i=\DD^k$, it must follow that $[k]\in\Nrv(\{S_1,\cdots, S_k\})$ as well. Otherwise, the nerve is a $(k-1)$-dimensional simplex (each $n$-face is represented by its $n-1$ vertices) without its interior (represented by $[k]$), whose homology differs from that of $\DD^k$, then $\bigcup_{i\in[k]} S_i \cong \Nrv(\{S_1,\cdots, S_k\})\not\cong \DD^k$ by \cref{thm:nerve}, which contradicts our assumption that $\bigcup_{i\in[k]} S_i= \DD^k$.  Hence the nerve contains $[k]$, meaning $\bigcap_{i \in [k]} S_i\neq\emptyset$.
\end{proof}

\begin{proof}[Proof of \cref{prop:ot.intersect}]
  Recall the definitions of the objects $B_{ij}$, $C_i$ and $A_i$ in \cref{eq:decision.region,eq:geometry.A} of $\gamma^*$.
  Suppose $\bigcap_{i\in[k]} A_i = \emptyset$, then $\exists z\in \bigcap_{i\in[k]} (\DD^k\setminus A_i)$ by \cref{prop:technical.set}.  It then follows by definition that $\forall i\in[k]$, $\exists j\neq i$ s.t.~$z^\T v_{ij} < B_{ij}-1$.  Let $u:[k]\rightarrow[k]$ denote a mapping s.t.~the pairs $(i,u(i))$ satisfy this relation for all $i\in[k]$; note that there exists a nonempty $J\subseteq[m]$ s.t.~the undirected edges $\{(i, u(i)) : i\in J\}$ form a cycle because $u(i)\neq i$.  Also, there exist $m>0$ and measurable sets $F_i\subset \{x:x^\T v_{iu(i)} > z^\T v_{iu(i)}   \}\subset C_i$ s.t.~$\gamma^*(F_i,e_i)\coloneqq m_i \geq m$.

\begin{figure}[t]
    \centering
    \includegraphics[width=\linewidth]{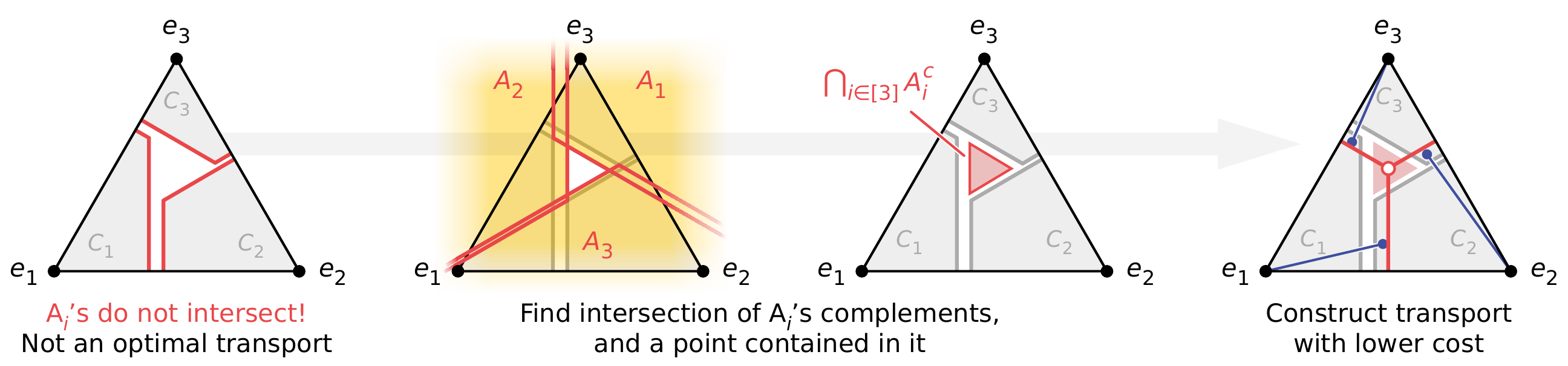}
    \caption{Illustration of the construction in the proof of \cref{prop:ot.intersect} for $k=3$.}
    \label{fig:geo.ot}
\end{figure}

  We show that the coupling $\gamma'\in\Gamma(p,q)$ given by
  \begin{equation}
    \gamma'(B, e_i) =\begin{cases}
   \gamma^*(B, e_i) & \text{if $i\notin J$,}\\ 
\begin{aligned}
\MoveEqLeft  \gamma^*(B \cap (\Delta_k\setminus F_i) ,e_i) \\ &+\frac{m_i-m}{m_i }\gamma^*(B \cap F_i ,e_i) + \frac{m}{m_{u^{-1}(i)}} \gamma^*(B\cap F_{u^{-1}(i)},e_{u^{-1}(i)})
\end{aligned}
  & \text{else}
\end{cases}
  \end{equation}
  has a lower transportation cost than $\gamma^*$ (see \cref{fig:geo.ot} for an illustration):
  \begin{align}
\int_{\Delta_k\times\calY} \|x-y\|_1\dif (\gamma^*-\gamma')(x,y) 
&= \sum_{i\in J}\frac{m}{m_i}\int_{F_i} \rbr*{\|x-e_i\|_1 - \|x-e_{u(i)}\|_1}\dif \gamma^*(x,e_i) \\
% &=  \sum_{i\in J}\frac{2m}{m_i}\int_{F_i} \rbr{x_{u(i)}-x_i}\dif \gamma^*(x,e_i) \\
&= \sum_{i\in J}\frac{2m}{m_i}\int_{F_i} x^\T v_{iu(i)}\dif \gamma^*(x,e_i) \\
&> \sum_{i\in J}\frac{2m}{m_i}\int_{F_i} z^\T v_{iu(i)}\dif \gamma^*(x,e_i) \\
&= 2m\sum_{i\in J} z^\T v_{iu(i)} \\
&= 2m\sum_{i\in J} (z_{u(i)}-z_i) = 0,
  \end{align}
where line 2 follows from \cref{eq:simplex.dist}.
\end{proof}

Finally, we consider the complexity of the function class $\calG_k$ defined in \cref{eq:g_k}.  First, recall the definition of multi-class shattering, based on which the Natarajan dimension is defined~\citep[][Definitions 29.1 and 29.2]{shalev-shwartz2014UnderstandingMachineLearning}:

\begin{definition}[Multi-Class Shattering]
  Let $\calH$ be a class of functions from $\calX$ to $\{1,\cdots,k\}$.
  A set $S\coloneqq\{x_1,\cdots,x_n\}\subseteq\calX$ is said to be multi-class shattered by $\calH$ if $\exists f_0,f_1:S\rightarrow\{1,\cdots,k\}$ labelings satisfying $f_0(x_i)\neq f_1(x_i)$ for all $i\in[n]$, such that $\forall S_0,S_1$ that partition $S$ (i.e., $S_0\sqcup S_1= S$), $\exists h\in\calH$, s.t.\
  \begin{alignat}{2}
    h(x)&=f_0(x), &&\quad \forall x\in S_0, \quad\text{and} \\
    h(x)&=f_1(x), &&\quad \forall x\in S_1.
  \end{alignat}
\end{definition}

\begin{definition}[Natarajan Dimension] \label{def:nat}
    Let $\calH$ be a class of functions from $\calX$ to $\{1,\cdots,k\}$. The Natarajan dimension of $\calH$, denoted by $\dN(\calH)$, is the size of the largest subset of $\calX$ multi-class shattered by $\calH$.
\end{definition}

\begin{proof}[Proof of \cref{thm:natdim}]
We associate $e_i$ with the label $i$, $\forall i\in\{1,\cdots,k\}$. 
We first show that $\dN(\calG_k)\geq k-1$ by constructing a set of cardinality $k-1$ that is shattered by $\calG_k$, then show that $\dN(\calG_k)<k$ by contradiction.

\paragraph{Lower Bound.}

Consider the set $S=\{e_1,e_2,\cdots,e_{k-1}\}$ and let $f_0(e_j)=j$ and $f_1(e_j)=k$ for all $j\in[k-1]$, which satisfy $f_0\neq f_1$ on all $x\in S$.  Let $S_0\sqcup S_1=S$ be arbitrary, and define
\begin{equation}
\iota(j)\coloneqq\begin{cases}
  \1[e_j\in S_1] & \text{if $j\in[k-1]$} \\
  0   & \text{if $j=k$.} 
\end{cases} 
\end{equation}
Consider $g_z\in\calG_k$ with parameters
\begin{equation}
  z = \frac1k\, \boldsymbol{1}_k - \sum_{j=1}^{k-1} \iota(j) \sum_{\ell\neq j} v_{j\ell},
\end{equation}
where boldface $\boldsymbol{1}_k\in\RR^k$ denotes the vector of all ones.  Observe that
\begin{align}
  z^\T v_{nm} 
  &= - \sum_{j=1}^{k-1} \iota(j) \sum_{\ell\neq j} v_{j\ell} ^\T v_{nm}\\
  &= - \sum_{j=1}^{k-1} \iota(j) \sum_{\ell\neq j} (e_\ell^\T e_m - e_\ell^\T e_n-  e_j^\T e_m + e_j^\T e_n)\\
  &= \sum_{j=1}^{k-1} \iota(j)\, (\1[n\neq j] - \1[m\neq j]) + (k-1)\sum_{j=1}^{k-1} \iota(j) \, (e_j^\T e_m-  e_j^\T e_n) \\
  &= (k-1)(\iota(m) - \iota(n)) +   ( \iota (m)-\iota(n)  )\\
  &= k(\iota(m) - \iota(n)).
\end{align}

Recall from \cref{eq:g_k.alt} that for all $i,n\in[k]$,
  \begin{equation}
    g_z(e_i) = e_n \text{ is eligible}\iff  e_i^\T v_{nm} \leq z^\T v_{nm},\quad\forall m\neq n;
  \end{equation}
so in our case, it follows that for all $i\in[k-1]$ and $j\neq i$,
\begin{equation}
      g_z(e_i) = e_i \text{ is eligible}
      \iff  -1 \leq k(\iota(m) - \iota(i)),\quad\forall m\neq i,
\end{equation}
and
\begin{equation}
  g_z(e_i) = e_j \text{ is eligible} \iff  1 \leq k(\iota(i)-\iota(j))\quad\text{and}\quad 0 \leq k(\iota(m)-\iota(j)), \quad\forall m\neq i
\end{equation}
(also, recall $\iota(k)\coloneqq0$).

Observe that for any $i\in[k-1]$, if $\iota(i)=0$, then $g_z(e_i)=e_j$ is ineligible for all $j\neq i$, then we must have $g_z(e_i)=e_i$.  Otherwise, if $\iota(i)=1$, then $e_k$ is always eligible, so $g_z(e_i)=e_k$ due to the tie-breaking rule.  Therefore, $g_z$ is a witness function, and we conclude that $\calG_k$ shatters $S$.

\paragraph{Upper Bound.}
  Let $S=(x_1,\cdots,x_k)$ be given, along with $f_0,f_1:S\rightarrow[k]$ satisfying $f_0(x)\neq f_1(x)$ for all $x\in S$.    Suppose $\calG_k$ shatters $S$.  Let $g_{z}\in\calG_k$ denote a witness function for the partitioning of $S_0=S$ and $S_1=\emptyset$, and $g_{z'}\in\calG_k$ that for the partitioning of $S'_0=\emptyset$ and $S'_1=S$.

We will reuse an argument from an earlier proof.  
  Denote the decision boundaries of $g_z$ (analogously for $g_{z'}$) by $B_{ij}\coloneqq z^\T v_{ij}+1$, and the decision regions by $C_{i}\coloneqq \bigcap_{j\neq i} \cbr{x\in\Delta_k: x^\T v_{ij} \leq B_{ij} -1}$.  
  By definition of $\calG_k$, $x\in C_i\implies g_z(x)=e_i$ is eligible.
  Then, define the difference in the boundaries between $g_z$ and $g_{z'}$ by $d_{ij}\coloneqq B'_{ij} - B_{ij}$.
Construct a directed graph of $k$ nodes where $(i,j)$ is an edge iff $d_{ij}>0$,  which is acyclic as shown in the proof of \cref{lem:monge.exist}.

First, consider the case where $\exists x,x'\in S$ s.t.~$\{f_0(x),f_1(x)\}=\{f_0(x'),f_1(x')\}$.  W.l.o.g., assume $i\coloneqq f_0(x_1)=f_1(x_2)$ and $j\coloneqq f_1(x_2)=f_0(x_1)$, then we have 
\begin{alignat}{4}
   g_{z}(x_1)&=e_{i}, \;\; &g_{z'}(x_1)&=e_{j} &&\implies d_{ji}>0, \\
   g_{z}(x_2)&=e_{j}, &g_{z'}(x_2)&=e_{i} &&\implies d_{ij}>0
\end{alignat}
(after taking into account of the tie-breaking rule), however, this would imply a cycle in the graph, which is a contradiction.

Next, if $\{f_0(x),f_1(x)\}$ differs for all $x\in S$, then we may assume w.l.o.g.~$f_0(x_i)=e_i$ and $f_1(x_i)=e_{i+1}$ for all $i\in[k]$ (where the index of $k+1$ means $1$).
Then
\begin{equation}
   g_{z}(x_i)=e_{i},\;\; g_{z'}(x_i)=e_{i+1} \implies d_{i+1,i}>0,\quad\forall i\in[k];
\end{equation}
again, this would imply a cycle in the graph, hence a contradiction.  Therefore, we conclude that $\calG_k$ cannot shatter any $S\subset\Delta_k$ of cardinality $k$.
\end{proof}

In addition, on data distributions that satisfy $X=Y$, $X\in\Delta_k$, applying the $\ell_1$ loss of $\ell(\hat y,y)\coloneqq \|\hat y-y\|_1$ to $\calG_k$ yields a function class with pseudo-dimension of $k-1$:

\begin{definition}[Pseudo-Shattering]
  Let $\calF$ be a class of functions from $\calX$ to $\RR$. A set $\{x_1,\cdots,x_n\}\subseteq\calX$ is said to be pseudo-shattered by $\calF$ if $\exists t_1,\cdots,t_n\in\RR$ threshold values s.t.\ $\forall b_1,\cdots,b_n\in\{0,1\}$ binary labels, $\exists f\in\calF$ s.t.\ $\1[f(x_i)\geq t_i] = b_i$ for all $i\in[n]$.
\end{definition}

\begin{definition}[Pseudo-Dimension] \label{def:pdim}
    Let $\calF$ be a class of functions from $\calX$ to $\RR$. The pseudo-dimension of $\calF$, denoted by $\dP(\calF)$, is the size of the largest subset of $\calX$ pseudo-shattered by $\calF$.
\end{definition}

\begin{theorem}\label{cor:pdim}
  Define $\calF_k \coloneqq \cbr*{ x\mapsto \enVert*{ g(x)-x }_1 : g\in\calG_k }$. We have $\dP(\calF_{k})= k-1$.
\end{theorem}

\begin{proof}
The proof shares the same arguments as that of \cref{thm:natdim}.  We will only show the upper bound, and remark that the lower bound can be established using a similar construction of that in \cref{thm:natdim}.

Let $x_1,\cdots,x_k$ be given, and suppose there exists thresholds $r_1,\cdots,r_k$ s.t.~$\calF_k$ shatters the set of points.  It follows that $\exists g_{z},g_{z'}\in\calG_k$ s.t.~$\|g_{z'}(x_i)-x_i\|_1<r_i\leq \|g_z(x_i)-x_i\|_1$ for all $i$, which means that $g_z(x_i)\neq g_{z'}(x_i)$. But by the arguments in the proof of the upper bound of \cref{thm:natdim}, such $(g_{z},g_{z'})$ pair does not exists, contradicting the shattering assumption.
\end{proof}

Finally, for all $i\in\{1,\cdots,k\}$, define restriction of $\calG_k$ to class $i$ by
\begin{equation}
  \calG_{k,i}\coloneqq\{x\mapsto \1[g(x)=e_i]:g\in\calG_k\} \subset \{0,1\}^{\Delta_k}. \label{eq:bin.g}
\end{equation}
Because it is a binary function, its \textit{VC dimension}, denoted by $\dVC(\calG_{k,i})$, is equivalent to its Natarajan dimension by \cref{def:nat}; moreover, because $\calG_{k,i}$ is derived from $\calG_k$ by an output remapping, its Natarajan dimension is clearly upper bounded by that of the latter:

\begin{corollary}\label{cor:vcdim}
  $\dVC(\calG_{k,i})\leq \dN(\calG_k)=k-1$. 
\end{corollary}

\section{Experiment Details}\label{sec:exp.additional}

\subsection{Datasets and Tasks}

\paragraph{Adult \textnormal{\citep{kohavi1996ScalingAccuracyNaiveBayes}}.}  We consider the binary classification task of whether the annual income of an individual is over or below \$50k per year ($|\calY|=2$) given attributes including gender, race, age, education level, etc. The data are collected from the 1994 US Census.  We let gender be the sensitive attribute ($|\calA|=2$).  It contains 48,842 examples in total, which we split for pre-training/post-processing/testing by 0.35/0.35/0.3.

\paragraph{ACSIncome \textnormal{\citep{ding2021RetiringAdultNew}}.} It is an extension of the Adult dataset with data collected from the US Census Bureau.  We consider income prediction under two settings.  In the binary setting, the task is to predict whether the annual income of an individual is over or below \$50k per year ($|\calY|=2$), with gender as the sensitive attribute ($|\calA|=2$).  In the multi-group multi-class setting, we create five income buckets of \mbox{$<$15000}, \mbox{[15000,30000)}, \mbox{[30000,48600)}, \mbox{[48600,78030)}, \mbox{$\geq$78030}, and group the data into five race categories of ``American Indian or Alaska Native alone'', ``Asian'', ``Native Hawaiian or Other Pacific Islander alone'', ``Black or African American alone'', ``Other'', and ``White alone'' (same as in Adult). It contains 1,664,500 examples, which we split for pre-training/post-processing/testing by 0.63/0.07/0.3.
  
\paragraph{BiasBios \textnormal{\citep{de-arteaga2019BiasBiosCase}}.}

The task is to determine the occupation ($|\calY|=28$) of female and male individuals ($|\calA|=2$) by their raw text biographies.  The data are mined from the Common Crawl corpus.  In this dataset, gender is the sensitive attribute, and is observed to correlate with certain occupations such as software engineer and nurse.
We use the version of BiasBios scrapped and hosted by \citet{ravfogel2020NullItOut} with 393,423 examples in total, which we split for pre-training/post-processing/testing by 0.35/0.35/0.3.

This experiment is of particular interest because of the increasing popularity of large language models and the fairness concerns regarding their usage.  In particular, the uncurated corpora (e.g., crawled from the internet) on which the language models are pre-trained may contain historical social bias, and empirical investigations have shown that such bias could be propagated and amplified in downstream applications~\citep{bolukbasi2016ManComputerProgrammer,zhao2018GenderBiasCoreference,abid2021PersistentAntiMuslimBias}.

\paragraph{Communities \& Crime \textnormal{\citep{redmond2002DatadrivenSoftwareTool}}.}

The Communities \& Crime tabular dataset contains the socioeconomic and crime data of communities in 46 US states, and the task is to predict the number of violent crimes per 100k population given attributes ranging from the racial composition of the community, their income and background, and law enforcement resource. The data come from the 1990 US Census, 1990 LEMAS survey, and 1995 FBI Uniform Crime Reporting program.  We bin the rate of violent crime into five classes ($|\calY|=5$), and we treat race as the sensitive attribute by the presence of minorities ($|\calA|=4$): a community does not have a significant presence of minorities if White makes up more than 95\% of the population, otherwise the largest minority group is considered to have a significant presence (Asian, Black, or Hispanic).  It contains 1,994 examples, which we split for pre-training/post-processing/testing by 0.35/0.35/0.3.

\subsection{Additional Details}\label{sec:hyper}

On each task, we first create the pre-training split from the dataset and train a linear logistic regression scoring model using the implementation provided in \texttt{scikit-learn}~\citep{pedregosa2011ScikitlearnMachineLearning}.  Then, we randomly split the remaining data for post-processing and testing with 10 different seeds and aggregate the results as presented in \cref{fig:exp,fig:exp.2} (the pre-trained model remains the same).

The linear programs of our \cref{alg:cont} are implemented using the interface of \texttt{cvxpy}~\citep{diamond2016CVXPYPythonEmbeddedModeling}, and are solved using the COIN-OR \texttt{Cbc} solver that is based on the \textit{branch and cut} method~\citep{john_forrest_2023_7843975}.
Finally, the BERT model in BiasBios experiments is loaded through the Hugging~Face \texttt{Transformers} library~\citep{wolf2020TransformersStateoftheArtNatural}.

\paragraph{Hyperparameters.}

The tradeoff curves in \cref{fig:exp,exp:2} are generated with the following fairness tolerance/strictness settings.

For our method, $\alpha$ is set to:
\begin{itemize}
  \item ACSIncome (binary).~~0.2, 0.18, 0.16, 0.14, 0.12, 0.1, 0.08, 0.06, 0.04, 0.02, 0.01, 0.005, 0.001, 0.0.
  \item ACSIncome ($5$-group, $5$-class).~~0.32, 0.3, 0.28, 0.26, 0.24, 0.22, 0.2, 0.18, 0.16, 0.14, 0.12, 0.1, 0.08, 0.06, 0.04, 0.02, 0.01, 0.0.
  \item BiasBios.~~0.08, 0.07, 0.06, 0.05, 0.04, 0.03, 0.02, 0.01, 0.008, 0.006, 0.004, 0.002, 0.001, 0.0.
  \item Adult.~~0.16, 0.14, 0.12, 0.1, 0.08, 0.06, 0.04, 0.02, 0.01, 0.008, 0.006, 0.004, 0.002, 0.001, 0.0.
  \item Communities.~~0.6, 0.55, 0.5, 0.45, 0.4, 0.35, 0.3, 0.25, 0.2, 0.15, 0.1, 0.05, 0.01, 0.0.
\end{itemize}

For \texttt{FairProjection}, we use the default settings that came with the code/package; in particular, increasing the number of iterations to over 1,000 did not improve performance.  The tolerance is set to:
\begin{itemize}
  \item ACSIncome (binary).~~0.3, 0.2, 0.18, 0.16, 0.14, 0.12, 0.1, 0.08, 0.06, 0.04, 0.02, 0.01, 0.005, 0.001, 0.0.
  \item ACSIncome ($5$-group, $5$-class).~~0.5, 0.4, 0.35, 0.3, 0.25, 0.2, 0.15, 0.1, 0.08, 0.06, 0.04, 0.02, 0.01, 0.0.
  \item BiasBios.~~1.0, 0.9, 0.8, 0.7, 0.6, 0.5, 0.4, 0.3, 0.2, 0.1, 0.05, 0.0.
  \item Adult.~~0.6, 0.5, 0.4, 0.3, 0.2, 0.1, 0.0.
  \item Communities.~~1.0, 0.8, 0.6, 0.4, 0.3, 0.2, 0.1, 0.0.
\end{itemize}

\subsection{Finding Feasible Point on Line~\ref{lst:ln.7}}\label{sec:find.point}

\Cref{lst:ln.7} of \cref{alg:cont} involves finding a feasible point in the intersection of halfspaces, which can be obtained with the following linear program:
\begin{equation}
  \!\min_{{z\in\RR^k}} 0 \quad\text{s.t.}\quad 
  % \sum_{m=1}^k z_m=1, \quad 
  z^\T v_{ij} \leq B_{ij}-1, \quad  \forall i,j\in[k],\, i\neq j.
\end{equation}

As illustrated in \cref{fig:transport}, the point $z$ that is returned determines the center of the boundaries of the extracted transport maps $\calT_a\in\calG_k$.  Because of the machine learning folklore that classifiers with larger margin enjoy better generalization properties, we instead use the follow quadratic program (of a least-squares problem) that maximizes the margins in our experiments for point-finding:
\begin{equation}
  \!\min_{{z\in\RR^k}} \sum_{i\neq j} \frac{\|z^\T v_{ij} - (B_{ij}-1)\|_2^2}{(2-B_{ij}-B_{ji})^2} \quad\text{s.t.}\quad 
  % \sum_{m=1}^k z_m=1, \quad 
  z^\T v_{ij} \leq B_{ij}-1, \quad  \forall i,j\in[k],\, i\neq j.
\end{equation}
Our preliminary experiments showed that using the quadratic program for point-finding led to better post-processing performance with \cref{alg:cont} than using the linear program, both in terms of the error rate and $\DPGap$ during inference.

\begin{table}[tp]
\caption{Running time (in seconds) of post-processing algorithms under the strictest tolerance setting (see \cref{sec:hyper}), averaged over three random splits.  Our algorithm is run on a single core of an Intel Xeon Silver 4314 CPU, and \texttt{FairProjection} is run on an NVIDIA RTX A6000 GPU.}
\label{tab:runtime}
\centering
    \scalebox{0.9}{%
\begin{tabular}{p{5.5cm}C{1.5cm}C{1.5cm}C{1.5cm}C{1.5cm}C{2.5cm}}
\toprule
 & \multicolumn{2}{c}{ACSIncome} & BiasBios & Adult & Communities \\
\midrule
Groups  & 2 & 5 & 2 & 2 & 4 \\
Classes  & 2 & 5 & 28 & 2 & 5 \\
Examples (post-processing split) & \multicolumn{2}{c}{116,515} &  137,698 & 17095 & 698 \\
\midrule
Ours (CPU)          & 2.56 &  109 & 797 & 0.43 & 0.38 \\
%        \texttt{FairProjection-KL} (CPU) & & & & 0.04 \\
\texttt{FairProjection-KL} (GPU) & 33 & 38 & 99 & 15 & 13 \\
\bottomrule
\end{tabular}
}%
\end{table}

\begin{figure}[p]
    \centering
    \includegraphics[height=.345\linewidth]{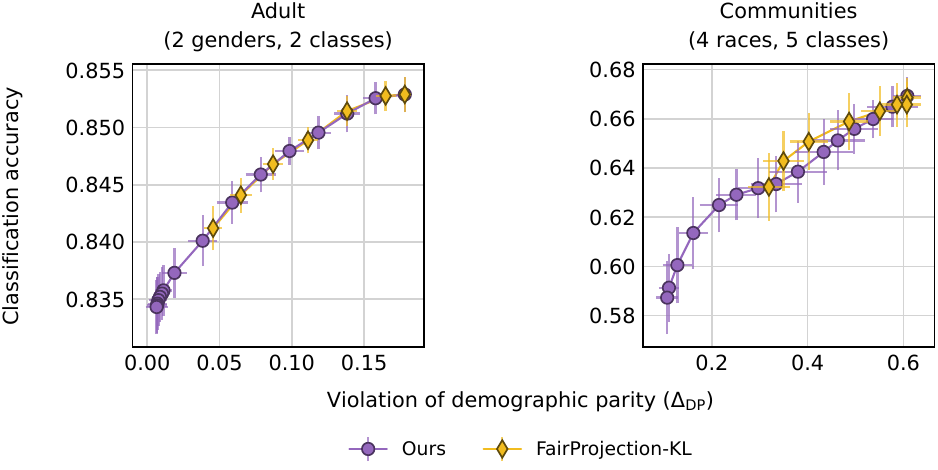}
  \caption{Tradeoff curves between accuracy and $\DPGap$ (\cref{def:dp}).  Scoring model is logistic regression. Error bars indicate the standard deviation over 10 runs with different random splits. Running time is reported in \cref{tab:runtime}.  On both datasets, \texttt{FairProjection-KL} and \texttt{FairProjection-CE} have similar results.}
    \label{fig:exp.2}
\end{figure}

\end{document}